\renewcommand\footnotetextcopyrightpermission[1]{} 
\newcommand{\xhdr}[1]{\vspace{1.0mm}\noindent{{\bf #1.}}\hspace{0.5mm}}
\newcommand{\R}{\mathbb{R}}
\DeclareMathOperator{\E}{\mathrm{E}}
\DeclareMathOperator{\Span}{span}
\DeclareMathOperator{\Vector}{vec}
\theoremstyle{acmdefinition}
\newtheorem{excont}{Example}
\begin{document}

\title{Learning Interpretable Feature Context Effects \\ in Discrete Choice}

\author{Kiran Tomlinson}
\email{kt@cs.cornell.edu}
\affiliation{%
  \institution{Cornell University}
}

\author{Austin R. Benson}
\email{arb@cs.cornell.edu}
\affiliation{%
  \institution{Cornell University}
}

\begin{abstract}
  The outcomes of elections, product sales, and the structure of social connections are all determined by the choices individuals make when presented with a set of options, so understanding the factors that contribute to choice is crucial. Of particular interest are context effects, which occur when the set of available options influences a chooser's relative preferences, as they violate traditional rationality assumptions yet are widespread in practice. 
  However, identifying these effects from observed choices is challenging, often requiring foreknowledge of the effect to be measured. 
  In contrast, we provide a method for the automatic discovery of a broad class of context effects from observed choice data. 
Our models are easier to train and more flexible than existing
models and also yield intuitive, interpretable, and statistically testable context effects.
Using our models, we identify new context effects in widely used choice datasets and provide the first analysis of choice set context effects in social network growth. 
\end{abstract}





\maketitle

\pagestyle{plain} 

\section{Introduction}

Understanding human choice is a central task in behavioral psychology, economics, and neuroscience 
that has garnered interest in the machine learning community due to recent increases in automated data collection and the power of data-driven modeling~\cite{maystre2015fast,benson2018discrete,seshadri2019discovering,rosenfeld2020predicting,tomlinson2020choice}. 
In a \emph{discrete choice} setting, an individual chooses between a finite set of available items called a \emph{choice set}. 
This general framework describes a host of important scenarios, including purchasing~\cite{lockshin2006using}, transportation decisions~\cite{ben2018discrete}, voting~\cite{dow2004multinomial}, and the formation of new social connections~\cite{overgoor2019choosing,gupta2020mixed}.
Discovering and understanding the factors that contribute to the choices people make has broad applications in, e.g.,
forecasting future choices~\cite{pathak2020well}, product or policy design~\cite{brownstone1996transactions}, and
recommender systems~\cite{yang2011collaborative,ruiz2020shopper}.

In the simplest model of choice, we might hypothesize that each available option has some intrinsic value (or \emph{utility}) to the chooser, who selects the item with maximum utility. However, human choices are not deterministic: when presented with the same menu on two different visits to a restaurant, people often place different orders. We could therefore make the model probabilistic, and say that the probability an item is chosen is proportional to its utility; this is exactly the Plackett-Luce model \cite{luce1959individual,plackett1975analysis}. Alternatively, we could assume that individuals observe random utilities for each item before selecting the maximum utility item (from a psychological perspective, the mechanism behind random utilities could be imperfect access to internal desires or actual stochastic variation in preferences). These are known as random utility models (RUMs) \cite{marschak1960rum}, the most famous of which is the multinomial (or conditional) logit (MNL) \cite{mcfadden1973conditional}.

The Plackett-Luce and MNL models both obey the axiom of \emph{independence of irrelevant alternatives} (IIA) \cite{luce1959individual}, that relative preferences between items are unaffected by the choice set --- if someone prefers $x$ to $y$, they should still do so when $z$ is also an option. However, experiments on human decision-making~\cite{huber1982adding,simonson1992choice,shafir1993reason,trueblood2013context}
as well as direct measurement on choice data~\cite{mcfadden1977application,small1985multinomial,benson2016relevance,seshadri2019discovering}
have found that this assumption often does not hold in practice
(note, though, that tests on choice data sometimes suffer from computational complexity issues~\cite{seshadri2019fundamental}).
These ``IIA violations'' are termed \emph{context effects}~\cite{prelec1997role,rooderkerk2011incorporating}. 
Examples of observed context effects include the \emph{asymmetric dominance} (or \emph{attraction}) \emph{effect} \cite{huber1982adding}, where the presence of an inferior item makes its better equivalent more desirable; the \emph{compromise effect} \cite{simonson1989choice}, where people tend to prefer middle-ground options; and the \emph{similarity effect} \cite{tversky1972elimination}, where similar items split the preferences of the chooser (for instance, in voting, this is known as the \emph{spoiler effect} or \emph{vote splitting}).
 
The ubiquity of context effects in human choice has driven the development of more nuanced models capable of capturing these effects. In machine learning, the goal is typically to design models that perform better in prediction tasks by taking advantage of learned context effects \cite{chen2016modeling,chen2016predicting,seshadri2019discovering,pfannschmidt2019learning,rosenfeld2020predicting,bower2020salient,ruiz2020shopper}. However, the effects accounted for by models incorporating neural networks and learned item embeddings \cite{pfannschmidt2019learning,chen2016predicting,rosenfeld2020predicting} are difficult to interpret. Other models learn context effects at the level of individual items~\cite{chen2016modeling,seshadri2019discovering,natenzon2019random,ruiz2020shopper}, preventing them from generalizing to items not appearing in the training set and making it difficult to discover context effects coming from item \emph{features} (e.g., price). Yet another approach involves hand-crafted nonlinear features specific to a dataset~\cite{bruch2016extracting,kim2007capturing}, which can work well for the problem at hand but does not provide a general methodology for modeling choice.

Within behavioral economics, context effect models tend to be engineered to describe very specific effects and are often only applied (if at all) to carefully controlled special-purpose datasets~\cite{kivetz2004alternative,rooderkerk2011incorporating,tversky1993context,leong2012embedding,bordalo2012salience,bordalo2013salience,masatlioglu2012revealed}. 
Psychological research on context effects in choice largely follows the same pattern while also introducing complex behavioral processes (for instance, time-varying attention) that are typically not estimable from general choice datasets~\cite{usher2004loss,roe2001multialternative,turner2018competing,howes2016contextual,tversky1972elimination}. 
Of course, special-purpose approaches have their use in rigorously establishing the existence of specific choice behaviors or exploring possible mechanisms of human psychology. However, leveraging the wealth of general choice datasets to discover new context effects requires a highly flexible and computationally tractable approach.

\xhdr{The present work: learning feature context effects}
Here, we provide methods for the automatic discovery (\emph{learning}) of a wide class of context effects from large, pre-existing, and disparate choice datasets. 
The key advantage of our approach over the previous work discussed above is that we can take a choice dataset collected in any domain (possibly one that has already been collected passively), 
efficiently train a model, and directly interpret the learned parameters as intuitive context effects. 
For example, we find in a hotel booking dataset that users presented with more hotels on sale showed increased willingness to pay; this allows us to hypothesize that observing many ``on sale'' tags exerts a context effect on the user, making them feel better about selecting a more expensive option. 
Context effects that our method extracts from choice data could then motivate further experimental work (e.g., A/B testing). We focus on the case where items are described by a set of features (e.g., for hotels: price, star rating, promotion status) and where the utility of each item is a function of its features. 
This setup has two major benefits, as it allows us to
(i) make predictions about new items not observed in the data and 
(ii) learn interpretable effects that provide generally applicable insight into human choice behavior.

We define \emph{feature context effects}, which describe the change in the importance of a feature in determining choice as a function of features of the choice set. 
For instance, suppose a diner is presented with two choice sets on different occasions, one consisting of fast food chains and the other of high-end restaurants. In the choice set with higher mean price, the diner is likely to place more weight on wine selection, while in the choice set with lower mean price, the diner could place more weight on service speed. 
We introduce two models --- the linear context logit (LCL) and decomposed linear context logit (DLCL) --- to learn these types of interpretable feature context effects from choice data. 

The linear context logit accounts for context effects by adjusting the importance of each feature according to the mean features over the choice set (this form naturally arises from simple assumptions on context effects). As the name indicates, we assume these relationships are linear for interpretability and ease of inference. In structure, the LCL is based on the multinomial logit, and it inherits all of its benefits --- in particular, a log-concave likelihood function, ease of identification, and compatibility with utility-maximization through the RUM framework. The LCL can thus be efficiently, optimally, and uniquely estimated from choice data using standard convex optimization methods. 

The decomposed linear context logit is more expressive, but slightly harder to estimate.
In the DLCL, we break up the context effects exerted by each feature into their own sub-models and weight these according to the relative strengths of the effects. In addition to more flexibly accounting for context, the DLCL also captures mixed populations where the preferences of different latent sub-populations are dominated by different effects. 

We perform an extensive analysis of choice datasets using our models, showing that statistically significant feature context effects occur and recovering intuitive effects. For example, we find evidence that people pick more expensive hotels when their choice sets have high star ratings, that people offered more oily sushi show more aversion to oiliness, and that when deciding whose Facebook wall to post on, people care more about mutual connections when choosing from popular friends.\footnote{By necessity, these are all correlative rather than causal claims; we discuss this more in the results section.} We show that accounting for feature context effects provides a boost in predictive power, although our primary focus is on learning interpretable context effects 
(there are more general, neural-network-based methods that can possibly improve prediction~\cite{rosenfeld2020predicting,pfannschmidt2019learning}). 
Additionally, we demonstrate how individual effects can be statistically tested and how sparsity-encouraging $L_1$ regularization can help identify the most influential context effects in our models. 

Our empirical study is split into two sections. First, we examine datasets specifically collected to understand preferences, covering a variety of choice domains including sushi, hotel bookings, and cars. Next, we apply our methods to social network analysis, where we demonstrate context effects in competing theories of triadic closure, which describes the tendency of new friendships to form among friends-of-friends~\cite{easley2010networks,granovetter1977strength}. Discrete choice models have recently found compelling use in studying social network dynamics~\cite{overgoor2019choosing,overgoor2020scaling,gupta2020mixed,feinberg2020choices}. Here, we show how incorporating context effects using our models can yield new sociological insights.

\xhdr{Additional related work}
The idea of ``context-dependent preferences'' is an old one; in a paper with that very title \cite{tversky1993context}, \citeauthor{tversky1993context} introduced the componential context model to account for many observed context effects, including the asymmetric dominance and compromise effects. The LCL model that we develop could be viewed as a RUM adaptation of the componential context model where the background contexts are the mean choice set features and the relative advantage term is omitted. Importantly, the LCL can be learned from choice observations, whereas the componential context model provides a possible description of behavior with no associated estimation method.

In the machine learning literature, the LCL is most similar in spirit to the context-dependent random utility model (CDM)~\cite{seshadri2019discovering} in that we consider pairwise contextual interactions, but with the important distinction that our model operates on features rather than on items, allowing for the discovery of general, non-item-specific effects. Our framework for context-dependent utilities is related to a special case of set-dependent weights~\cite{rosenfeld2020predicting} and FETA~\cite{pfannschmidt2019learning}, although those methods use neural network implementations. Other machine learning models for context effects include the blade-chest model~\cite{chen2016modeling} and its extensions~\cite{chen2016predicting}, which are restricted to pairwise comparisons, and the salient features model~\cite{bower2020salient}, which considers different subsets of features in each choice set.

Recent research has framed network growth (the formation of new connections in, e.g., communication or friendship networks) as discrete choice~\cite{overgoor2019choosing}. In that paper, \citeauthor{overgoor2019choosing} suggested context effects in network growth as a direction for more flexible modeling. 
The models we introduce are a first step in this direction, and we show that modeling context effects is useful for both improved predictive capability and generating additional insight into social processes. More recently still, some of the same authors introduced the de-mixed mixed logit~\cite{overgoor2020scaling} that allows varying preferences over disjoint choice sets (e.g., friends, friends-of-friends, and unrelated nodes). While this approach does allow for some IIA violations, it does not reveal whether (and if so, how) the features of items in each choice set affect the preferences of choosers. The same is true of recent work on applying mixed logit to network growth~\cite{gupta2020mixed}.

\section{Discrete Choice Background}\label{sec:discrete_choice}
We first briefly review the discrete choice modeling framework (see~\cite{train2009discrete,ben2018discrete} for thorough treatments), following the notation from~\cite{overgoor2019choosing}. In a discrete choice setting, an individual selects an item from a set of available items, the choice set. We use $\mathcal X$ to denote the universe of all items and $C \subseteq \mathcal X$ the choice set in a particular choice instance. A choice dataset $\mathcal D$ is a set of $n$ pairs $(i, C)$, where $i \in C$ is the item selected. Each item $i$ is described by vector of $d$ features $x_i \in \R^d$ that determine the preferences of the chooser.

Of particular economic interest are random utility models \cite{marschak1960rum} (RUMs), which are based on the idea that individuals try to maximize their utility, but can only do so noisily. In a RUM, an individual draws a random utility for each item (where each item has its own utility distribution) and selects the item with maximum observed utility. 
The workhorse RUM is the conditional multinomial logit (MNL)~\cite{mcfadden1973conditional}, which has interpretable parameters that are readily estimated from data.
In the MNL model, the observed utility of each item $i$ is the random quantity $\theta^T x_i + \varepsilon$, where the fixed but latent parameter $\theta \in \R^d$ (the \emph{preference vector}) stores the relative importance of each feature (the \emph{preference coefficients}) and the random noise term $\varepsilon$ follows a standard Gumbel distribution with CDF $e^{-e^{-x}}$. 
This noise distribution is chosen so that the MNL choice probabilities have a simple closed form \cite{train2009discrete}, namely, a softmax over the utilities. Under an MNL, the probability that $i$ is chosen from the choice set $C$, denoted $\Pr(i, C)$, is
\begin{equation}
	\Pr(i, C) = \frac{\exp(\theta^T x_i)}{\sum_{j \in C}\exp(\theta^T x_j)}.
\end{equation}

The MNL model famously obeys the axiom of \emph{independence of irrelevant alternatives} (IIA) \cite{luce1959individual}, stating that relative choice probabilities are unaffected by the choice set. Formally, a model satisfies IIA if for any two choice sets $C, C'$ and items $i, j \in C\cap C'$, 
\begin{equation}
 	\frac{\Pr(i, C)}{\Pr(j, C)} = \frac{\Pr(i, C')}{\Pr(j, C')}.
 \end{equation}

As we have discussed, this assumption is often violated in practice by context effects. One model that can account for context effects is the mixed logit (in fact, continuous mixed logit is powerful enough to encompass all RUMs~\cite{mcfadden2000mixed}). The DLCL model that we will introduce is related to a discrete mixed logit, so we briefly describe it here. In a discrete mixed logit, we have $M$ populations, each of which has its own preference vector $\theta_m$. The mixing parameters $\pi_1, \dots, \pi_M$ (with $\sum_{m=1}^M \pi_m = 1$) describe the relative sizes of the populations. This results in choice probabilities
\begin{equation}
	\Pr(i, C) = \sum_{m=1}^M \pi_m \frac{\exp(\theta_m^T x_i)}{\sum_{j \in C}\exp(\theta_m^T x_j)}.
\end{equation}
While mixed logit can produce IIA violations, it does so by hypothesizing populations each with their own context-effect-free preferences (context effects only appear in the aggregate data). In contrast, our models identify context effects in individual preferences. 

\section{Models of Feature Context Effects}\label{sec:models}

 \begin{figure}[t]
\centering
\includegraphics[width=\linewidth]{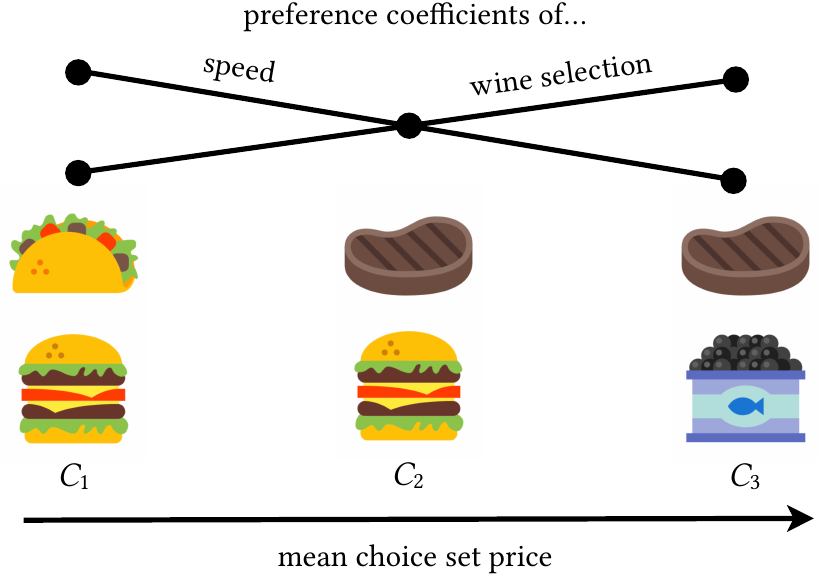}  	
\caption{Feature context effects in the toy restaurant example (\Cref{ex:quick_taco}). In $C_1 = \{$Quick Taco, Burger Express$\}$, more emphasis is placed on service speed and less on wine selection. In $C_2 = \{$Steak Deluxe, Burger Express$\}$, equal weight is placed on both. In $C_3=\{$Steak Deluxe, Ch\^{a}teau Caviar$\}$, more importance is placed on wine selection and less on service speed. We can view this as a positive effect of choice set price on the wine preference coefficient and a negative effect of choice set price on the speed preference coefficient. (Icons by \cite{icons8}.)
}
\label{fig:quick_taco}
\end{figure}

In order to capture context effects at the individual level, the choice set itself needs to influence the preferences of a chooser. 
In the most general extension of the MNL, we could replace $\theta$ with $\theta + F(C)$, where $F$ is an arbitrary function of the choice set (this is analogous to the set-dependent weights model \cite{rosenfeld2020predicting}, but framed as a RUM). This allows each feature to exert an arbitrary influence on the base preference coefficient of each other feature. 
We say that a \emph{feature context effect} occurs when $F(C) \ne 0$. We provide the following simple example to illustrate this idea.

\begin{example}\label{ex:quick_taco}
	Suppose we are choosing between restaurants, each described by three features: \emph{price}, \emph{service speed}, and \emph{wine selection} (encoded in that order and centered to have zero mean). The universe contains four restaurants: (1) Burger Express, (2) Quick Taco, (3) Steak Deluxe, and (4) Ch\^{a}teau Caviar, described by the following feature vectors:
\[
	x_1 = \begin{bmatrix}
	-1\\
	2\\
	-1
	\end{bmatrix},
	\quad
	x_2 = \begin{bmatrix}
	-1\\
	1\\
	-2
	\end{bmatrix},
	\quad
	x_3 = \begin{bmatrix}
	1\\
	0\\
	1
	\end{bmatrix},
	\quad
	x_4 = \begin{bmatrix}
	1\\
	-3\\
	2
	\end{bmatrix}.
\]
Suppose that under an MNL, our preference for low price, fast service speed, and good wine selection is encoded in a fixed preference vector $\theta = [-1, 1, 1]$. 
However, given the options $C_1=\{$Burger Express, Quick Taco$\}$, we might care more about speed and less about wine selection, modifying our preferences to $\theta+ F(C_1) = [-1, 2, 0]$. 
On the other hand, given the options $C_3=\{$Steak Deluxe, Ch\^{a}teau Caviar$\}$, we place more emphasis on wine selection and less on speed, so that $\theta + F(C_3) = [-1, 0, 2]$. 
In an intermediate choice set $C_2 = \{$Steak Deluxe, Burger Express$\}$, our preferences remain $\theta + F(C_2) = [-1, 1, 1]$ (i.e., $F(C_2) = 0$ and there is no net feature context effect). 
One natural $F$ that accounts for these context-dependent preferences is a function that increases the coefficient for speed when the prices in the choice set are low and increases the coefficient for wine selection when prices in the choice set are high. \Cref{fig:quick_taco} illustrates this idea.
\end{example}

We make two simplifying assumptions on the choice set effect function $F(C)$.
The first is that the effect of a choice set additively decomposes into effects of its items, i.e.,
$F(C)$ is proportional to $\sum_{j \in C} f(x_j)$ for some fixed function $f$. (The same assumption is made by the CDM \cite{seshadri2019discovering} and FETA~\cite{pfannschmidt2019learning}.)
While in principle higher-order interactions are possible (e.g., the effect of item $i$ on item $j$ depends on whether item $k$ is also present), the number of higher-order interactions is exponential in the size of the choice set; these interactions would not be visible in choice datasets (which typically do not contain observations from every possible choice set) and it is likely that higher-order effects are sparse~\cite{batsell1985new}.
Second, we assume that the effect of each item is diluted in large choice sets
and we model this with a proportionality constant of $1 / \lvert C \rvert$, i.e.,
$F(C) = 1/\lvert C \rvert\sum_{j \in C}f(x_j)$.

\subsection{Linear Context Logit}
While in principle features could exert arbitrary influences on each other, we focus on the case when context effects are linear, which makes inference tractable and, crucially, preserves interpretability. We use $x_{C} = 1/\lvert C \rvert \sum_{j\in C} x_j$ to denote the mean feature vector of the choice set $C$. 
For $f$ linear, we can write $f(x_j) = Ax_j$ for some matrix $A \in \R^{d\times d}$, and the choice set context function $F$ is
\[
F(C) = \frac{1}{\lvert C \rvert}\sum_{j \in C}f(x_j) = \frac{1}{\lvert C \rvert}\sum_{j \in C}Ax_j = Ax_C.
\]
We call this model the linear context logit (LCL), and it produces choice probabilities
\begin{equation}\label{eq:lcl}
	\Pr(i, C) = \frac{\exp(\left[\theta + Ax_C\right]^T x_i)}{\sum_{j \in C}\exp(\left[\theta + Ax_C\right]^T x_j)}.
\end{equation}
The model has $d^2+d$ parameters: $d^2$ for $A$ and $d$ for $\theta$. In the LCL, $A_{pq}$ specifies the effect of feature $q$ on the coefficient of feature $p$. 
If $A_{pq}$ is positive (resp.~negative), then higher values of $q$ in the choice set result in a higher (resp.~lower) preference coefficient for $p$. If $A = 0$, then the LCL degenerates to an MNL. 

When analyzing data in \cref{sec:results}, we often see strong effects from the diagonal entries of $A$.
The signs of the diagonal entries of $A$ can be explained by known context effects.
The case of $A_{pp} < 0$ is consistent with the similarity effect (similar options split the preferences of the chooser),
and the case of $A_{pp} > 0$ is consistent asymmetric dominance (similar options help reveal the inferior alternatives).

To further aid intuition, we consider how to express the context effects in the toy restaurant example with the LCL.

\begin{excont}[continued]
	We can encode the function $f$ in the LCL with the context effect matrix
	\[
	A = \begin{bmatrix}
		0 & 0 & 0\\
		-1 & 0 & 0 \\
		1 & 0 & 0
	\end{bmatrix}.
	\]
Recall that the order of features is price, service speed, and wine selection.
The two non-zero entries of $A$ can be directly interpreted as feature context effects: $A_{21}=-1$ says that when the choice set prices are higher, the importance of service speed decreases and $A_{31}=1$ says that when the choice set prices are higher, the importance of wine selection increases. Conversely, when prices are lower, service speed is more important and wine selection is less important.

We can now verify that this context effect matrix $A$ accounts for our varying preferences. In the choice sets $C_1, C_2,$ and $C_3$, the mean feature vectors are
	\[
	x_{C_1} = \begin{bmatrix}
		-1\\
		1.5\\
		-1.5
	\end{bmatrix},\quad 
	x_{C_2} = \begin{bmatrix}
		0\\
		1\\
		0
	\end{bmatrix},\quad
	x_{C_3} = \begin{bmatrix}
		1\\
		-1.5\\
		1.5
	\end{bmatrix}.
	\]
	Notice that the mean price (the first feature) steadily increases from $C_1$ to $C_2$ to $C_3$. According to the LCL (with base utilities $\theta = [-1, 1, 1]$), the context-adjusted preference vectors are $\theta + Ax_{C_1} = [-1, 2, 0]$, $\theta + Ax_{C_2}=[-1, 1, 1]$, and $\theta + Ax_{C_3}=[-1, 0, 2]$, as desired.
\end{excont}

Just as in the MNL, we can derive the closed form in \eqref{eq:lcl} if choosers have random utilities $\left[\theta + Ax_C\right]^T x_i + \varepsilon$, where $\varepsilon$ follows a standard Gumbel distribution and the random variable samples are i.i.d. If we want a more parsimonious model, we can impose sparsity on $A$ through $L_1$ regularization (we do this in our empirical analysis) or we could take a low constant-rank approximation of $A$ if the number of features is prohibitively large, making the number of parameters linear in $d$.

\subsection{Decomposed Linear Context Logit}
The LCL accounts for context effects of each feature on every other feature all at once. 
While this makes the LCL simple and parameter learning tractable (\cref{sec:estimation}), this part of the model can sometimes be limiting. 
More specifically, the LCL implicitly assumes that the intercepts of all linear context effects exerted by one feature are the same (we have $d^2$ slopes in $A$, but only $d$ intercepts in $\theta$). In other words, it assumes that the coefficient of feature $p$ is the same when feature $q = 0$ as when feature $q' = 0$. Since this is a subtle point, we provide an example of a linear context effect not expressible with an LCL.

\begin{example}\label{ex:dlcl}
	Suppose we have four choice sets with the following mean features:
	\[
		x_{C_1}=\begin{bmatrix}
		1\\
		0\\
		0
		\end{bmatrix},\quad 
		x_{C_2}=\begin{bmatrix}
		2\\
		0\\
		0
		\end{bmatrix},\quad 
		x_{C_3}=\begin{bmatrix}
		0\\
		1\\
		0
		\end{bmatrix},\quad 
		x_{C_4}=\begin{bmatrix}
		0\\
		2\\
		0
		\end{bmatrix}.
	\]
	We will assume that each choice set is large enough that the LCL is uniquely identifiable (see \Cref{prop:lcl_identify_simple} for a sufficient identifiability condition).
	Under the LCL, the coefficient of the third feature in these choice sets is:
		\[
		\begin{tabular}{ll}
			\toprule
			choice set & 3rd feature coefficient\\
			\midrule
			$C_1$ & $\theta_{3} + A_{31}$\\
			$C_2$ & $\theta_{3} + 2A_{31}$\\
			$C_3$ & $\theta_{3} + A_{32}$\\
			$C_4$ & $\theta_{3} + 2A_{32}$\\
			\bottomrule
		\end{tabular}
		\]
	We can determine $\theta_3$ and $A_{31}$ from observing the coefficient of the third feature in choice sets $C_1$ and $C_2$ (two equations, two unknowns). If we then observe the preference coefficient of the third feature in $C_3$, this allows us to find $A_{32}$ and thus determines the coefficient of the third feature in $C_4$. To get a context effect not expressible with the LCL, we can use $C_1, C_2$, and $C_3$ to uniquely specify $\theta_3, A_{31}$, and $A_{32}$ and then construct choice probabilities in $C_4$ so that the coefficient of the third feature is anything other than $\theta_{3} + 2A_{32}$. This example would be expressible if we allowed the linear context effects exerted by each feature to have varying intercepts as well as slopes. Notice that in choice sets $C_1$ and $C_2$, the only context effect is being exerted by feature 1, while in $C_3$ and $C_4$, the only context effect is being exerted by feature 2. Despite the fact that the captured effects come from different sources, the intercepts of the linear models capturing these two effects are the same, namely $\theta_3$. 
\end{example}

Motivated by observing these varying intercepts in real data (see \Cref{fig:context_effect_example}), we propose a second model that decomposes the LCL into context effects exerted by each feature. For this reason, we call it the decomposed linear context logit (DLCL). In the language of choice set effect functions, we now have $d$ context effect functions $F_1, \dots, F_d$ where each $F_k$ only depends on the values of feature $k$. We also replace $\theta$ with $d$ base preference vectors $B_1, \dots, B_d$ (which we combine into a $d \times d$ matrix $B$; we use matrix subscripts to index columns) that grant us varying intercepts. This gives us $d$ different contextual utilities $B_1 + F_1(C), \dots, B_d + F_d(C)$ that we will combine through a mixture model. 

Making the same assumptions as for the LCL, we decompose each choice set effect function $F_k(C) = \frac{1}{\lvert C \rvert}\sum_{j\in C} f_k((x_j)_k)$ (notice that $f_k$ is a function of only the $k$th feature, $(x_j)_k$). Incorporating a linearity assumption (and storing context effects exerted by feature $k$ in the $k$th column of $A$), we find
\[F_k(C) = \frac{1}{\lvert C \rvert} \sum_{j \in C} f_k((x_j)_k) = \frac{1}{\lvert C \rvert} \sum_{j \in C} A_k (x_j)_k = A_k (x_C)_k.\]
 
We use mixture weights $\pi_1, \dots, \pi_d$ with $\sum_{k=1}^d \pi_k = 1$ to describe the relative strengths of the effects exerted by each feature. For instance, in \Cref{ex:dlcl}, if the context effect exerted by feature 1 influences preferences more than the context effect exerted by feature 2, we would have $\pi_1 > \pi_2$. The DLCL is then a mixture of $d$ logits, where each component captures the context effects exerted by a single feature. The DLCL has choice probabilities
\begin{equation}
	\Pr(i, C) = \sum_{k=1}^d \pi_k \frac{\exp\big([B_k + A_k (x_C)_k]^T x_i\big)}{\sum_{j \in C} \exp\left([B_k + A_k (x_C)_k]^T x_j\right)}.
\end{equation}
Notice that each component corresponds to a logit with contextual preferences $B_k + F_k(C)$. In \Cref{ex:dlcl}, the context effect exerted by feature 1 is captured by the linear model $B_1 + A_1(x_C)_1$, while the effect exerted by feature 2 has a different intercept as well as a different slope: $B_2 + A_2(x_C)_2$.

The DLCL model has $2d^2 + d$ parameters: $d^2$ each for $A$ and $B$, and $d$ for $\pi$. The matrix $A$ has the same interpretation as in the LCL, while $B_{pq}$ represents the importance of feature $p$ when feature $q$ is zero (i.e., the intercept of the linear context effect exerted on $p$ by $q$). As in the LCL, we could in theory replace the linear model in each component with any function of $(x_C)_k$, but linear functions are a tractable and interpretable starting point. Note that the DLCL is \emph{not} a mixture of LCLs in the way that mixed logit is a mixture of MNLs: each component in the DLCL only acccounts for the context effect of one feature, whereas the LCL accounts for all feature context effects at once.

Additionally, we can take two views of what the DLCL mixture represents: we can either think of each individual combining several components of their preferences in a mixture (in which case the whole mixture model applies to every chooser), or we can think that every individual belongs to single component of the mixture and the mixing parameters describe the prevalence of each type of individual in the population. These two persepctives result in the same model and likelihood function, but it is useful to take the second view when we develop an EM algorithm for parameter estimation (\cref{sec:em_algorithm}).

\section{Identifiability of the LCL}\label{sec:lcl_identifiability}
We provide three results characterizing the identifiability of the LCL. Most significantly, we prove a necessary and sufficient condition that exactly determines when the model is identifiable (\Cref{thm:lcl_identify}). However, the condition is somewhat hard to reason about, so we also prove a simple necessary (but not sufficient) condition (\Cref{prop:lcl_non_identifiable}) and a simple sufficient (but not necessary) condition (\Cref{prop:lcl_identify_simple}). These supporting results and their proofs give additional insight into the main theorem.   

Following \citeauthor{seshadri2019discovering}~\cite{seshadri2019discovering}, we use $\mathcal{C_D}$ to denote the set of unique choice sets appearing in the dataset $\mathcal D$ and we say that an LCL is \emph{identifiable} from a dataset if there do not exist two distinct sets of parameters $(\theta, A)$ and $(\theta', A')$ that produce identical probability distributions over every choice set $C \in \mathcal{C_D}$. 
In the following, $\otimes$ denotes the Kronecker product.
\begin{theorem}\label{thm:lcl_identify}
	A $d$-feature linear context logit is identifiable from a dataset $\mathcal D$ if and only if
	\begin{equation}\label{eq:span_condition}\Span \left\{
		\begin{bmatrix}
		x_C\\
		1
	\end{bmatrix} \otimes (x_i- x_C)
	\mid C \in \mathcal{C_D}, i \in C \right\} = \R^{d^2+d}.
	\end{equation}
\end{theorem}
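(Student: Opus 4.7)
The plan is to reduce identifiability of the LCL to the question of whether a certain linear map on the parameter space is injective, and then to rewrite the kernel of that map in the Kronecker form given in the theorem. Concretely, fix two parameter pairs $(\theta, A)$ and $(\theta', A')$ and let $\Delta\theta = \theta - \theta'$, $\Delta A = A - A'$. Since each LCL distribution on a choice set $C$ is a softmax of the logits $[\theta + A x_C]^T x_i$ for $i \in C$, the two distributions agree on $C$ iff the logits agree up to an additive constant depending only on $C$, i.e.\ $[\Delta\theta + \Delta A\, x_C]^T x_i = c(C)$ for every $i \in C$. Averaging this equality over $i \in C$ to eliminate $c(C)$ gives the equivalent form
\[
[\Delta\theta + \Delta A\, x_C]^T (x_i - x_C) = 0 \quad \text{for all } C \in \mathcal{C_D},\ i \in C.
\]

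The next step is to vectorize. Using the standard identity $y^T M x = \Vector(M)^T (x \otimes y)$, I rewrite $(\Delta A\, x_C)^T(x_i - x_C) = \Vector(\Delta A)^T\bigl(x_C \otimes (x_i - x_C)\bigr)$ and $\Delta\theta^T(x_i - x_C) = \Delta\theta^T\bigl(1 \otimes (x_i - x_C)\bigr)$. Stacking the parameter perturbations into a single vector $v = \bigl[\Vector(\Delta A);\ \Delta\theta\bigr] \in \R^{d^2+d}$, the system above becomes
\[
v^T \left( \begin{bmatrix} x_C \\ 1 \end{bmatrix} \otimes (x_i - x_C) \right) = 0 \quad \text{for all } C \in \mathcal{C_D},\ i \in C.
\]

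The LCL is identifiable iff the only solution to this system is $v = 0$ (since $v = 0$ iff $\Delta\theta = 0$ and $\Delta A = 0$). A homogeneous linear system $v^T u = 0$ over a collection of vectors $\{u\}$ has only the trivial solution iff those vectors span the ambient space. Applying this to the collection $\{[x_C; 1] \otimes (x_i - x_C) : C \in \mathcal{C_D}, i \in C\}$ yields exactly the span condition \eqref{eq:span_condition}, completing the equivalence in both directions.

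The one subtle point I expect to verify carefully is the averaging step: I need to ensure that ``logits agree up to a constant per choice set'' is actually equivalent to the centered equation $[\Delta\theta + \Delta A\, x_C]^T (x_i - x_C) = 0$, not just implied by it. The forward direction is immediate by averaging; for the reverse, if the centered expression vanishes for all $i \in C$, then $[\Delta\theta + \Delta A\, x_C]^T x_i$ is independent of $i$ on $C$, which is exactly the constant-per-$C$ condition. Beyond that, the proof is essentially an exercise in getting the Kronecker conventions right (I will use $\Vector(u v^T) = v \otimes u$ consistently).
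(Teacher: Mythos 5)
Your proposal is correct and follows essentially the same route as the paper: the paper centers the log-probabilities via the quantity $\beta_{i,C}$ (which is exactly your ``subtract the per-choice-set constant by averaging'' step), applies the same vec identity to obtain the vectors $[x_C;1]\otimes(x_i-x_C)$, and concludes identifiability holds iff these vectors span $\R^{d^2+d}$. Your framing via the homogeneous system in the parameter difference $v$ is equivalent to the paper's framing via uniqueness of solutions to the consistent stacked linear system.
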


The proof of this theorem relies on three lemmas. 

\begin{lemma}[\cite{seshadri2019discovering}, Appendix A]\label{lemma:beta_bijection}
	For any choice set $C$, there is a bijection between the choice probabilities $\{\Pr(i, C) \mid i \in C\}$ and the log probability ratios $\{\beta_{i,C} \mid i \in C\}$ defined by 
	\begin{equation}
		\beta_{i,C} = \log\left(\frac{\Pr(i, C)}{\left[\prod_{j \in C} \Pr(j, C)\right]^{\frac{1}{\lvert C \rvert}}}\right).
	\end{equation}
\end{lemma}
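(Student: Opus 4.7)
The plan is to exhibit an explicit two-sided inverse. First I would pin down the domains: the choice probabilities $\{\Pr(i,C) \mid i \in C\}$ live in the (relative interior of the) simplex $\Delta_C = \{p \in \R^{|C|}_{>0} \mid \sum_{i\in C} p_i = 1\}$, which has dimension $|C|-1$; the $\beta$-vectors $\{\beta_{i,C} \mid i \in C\}$ will be shown to live in the hyperplane $H_C = \{\beta \in \R^{|C|} \mid \sum_{i\in C}\beta_{i,C} = 0\}$, which also has dimension $|C|-1$. Matching dimensions is reassuring but not needed for the argument; the real work is just to write down the inverse.

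Forward direction: given a probability vector $p \in \Delta_C$, expand the definition as
\begin{equation*}
\beta_{i,C} \;=\; \log \Pr(i,C) \;-\; \frac{1}{|C|}\sum_{j \in C} \log \Pr(j,C).
\end{equation*}
Summing over $i \in C$ cancels the two terms exactly, so $\sum_{i \in C}\beta_{i,C} = 0$, showing that the map $\Phi \colon \Delta_C \to H_C$ is well-defined.

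Backward direction: I claim the inverse $\Psi \colon H_C \to \Delta_C$ is the softmax
\begin{equation*}
\Psi(\beta)_i \;=\; \frac{\exp(\beta_{i,C})}{\sum_{j \in C}\exp(\beta_{j,C})}.
\end{equation*}
To motivate this, observe that the defining equation can be rewritten as $\log\Pr(i,C) = \beta_{i,C} + L$ where $L = \frac{1}{|C|}\sum_{j \in C}\log\Pr(j,C)$ does not depend on $i$; the normalization $\sum_i \Pr(i,C) = 1$ then forces $e^{L} = 1/\sum_j \exp(\beta_{j,C})$, yielding the softmax formula. Verifying $\Phi \circ \Psi = \mathrm{id}_{H_C}$ is a one-line calculation: plugging $\Psi(\beta)$ into the definition of $\beta_{i,C}$, the normalizing constant appears in both the numerator term and (by averaging) the denominator term, and cancels using $\sum_i \beta_{i,C} = 0$. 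Verifying $\Psi \circ \Phi = \mathrm{id}_{\Delta_C}$ is equally direct after substituting the expression for $\beta_{i,C}$ into the softmax and noting that the common factor $e^{-L}$ cancels between numerator and denominator, leaving $\Pr(i,C)/\sum_j \Pr(j,C) = \Pr(i,C)$.

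There is no real obstacle here; the only subtlety is domain hygiene. The forward map requires $\Pr(i,C) > 0$ for the logarithms to make sense, so the bijection is between the open simplex and the hyperplane $H_C$; if the original setting allows zero probabilities, the statement should be read as applying to the interior, or one interprets $\beta_{i,C} = -\infty$ whenever $\Pr(i,C) = 0$ (in which case the softmax inverse still recovers $0$ in the limit). I would include a brief remark to that effect so the reader knows the lemma is really about strictly positive choice distributions, which is the only regime used in the subsequent identifiability arguments.
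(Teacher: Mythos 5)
Your proof is correct and takes essentially the same route as the paper: both exhibit the softmax $\beta \mapsto \exp(\beta_{i,C})/\sum_{j\in C}\exp(\beta_{j,C})$ as the inverse map and verify that the normalizing constant $\bigl(\prod_{h\in C}\Pr(h,C)\bigr)^{1/|C|}$ cancels. Your additional observations (the zero-sum hyperplane as codomain, the check of both compositions, and the positivity caveat) are sound refinements but do not change the argument.
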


\begin{proof}
	We can compute $\beta_{i,C}$ given all choice probabilities in $C$ as defined above. To obtain probabilities given log probability ratios, take 
	\begin{align*}
		\frac{\exp(\beta_{i, C})}{\sum_{j \in C} \exp(\beta_{j, C})} &= \frac{\frac{\Pr(i, C)}{\left(\prod_{h \in C} \Pr(h, C)\right)^{\frac{1}{\lvert C \rvert}}}}{\sum_{j \in C}\frac{\Pr(j, C)}{\left(\prod_{h \in C} \Pr(h, C)\right)^{\frac{1}{\lvert C \rvert}}}}\\
		&= \frac{\Pr(i, C)}{\sum_{j \in C}\Pr(j, C)}\\
		&= \Pr(i, C).\qedhere
	\end{align*}
\end{proof}

This means we can prove identifiability from the $\beta$s rather than from choice probabilities. We can also get a simple form for $\beta_{i, C}$ under the LCL.

\begin{lemma}\label{lemma:lcl_beta}
	In the LCL, $\beta_{i, C} = (\theta + Ax_C)^T (x_i - x_C)$.
\end{lemma}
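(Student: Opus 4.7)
The plan is to compute $\beta_{i,C}$ directly by substituting the LCL choice probabilities into the definition from \Cref{lemma:beta_bijection} and simplifying. The normalization constant $Z = \sum_{j \in C} \exp([\theta + Ax_C]^T x_j)$ will cancel out, leaving a clean expression involving the utilities.

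First, I would introduce the shorthand $u_j = (\theta + Ax_C)^T x_j$ for the contextual utility of item $j$ in choice set $C$, so that $\Pr(j,C) = \exp(u_j)/Z$. Then $\log \Pr(i,C) = u_i - \log Z$ and $\frac{1}{|C|}\sum_{j \in C} \log \Pr(j,C) = \frac{1}{|C|}\sum_{j \in C} u_j - \log Z$. Subtracting the second from the first (which is exactly what taking the log of the ratio in the definition of $\beta_{i,C}$ does), the $\log Z$ terms cancel, giving $\beta_{i,C} = u_i - \frac{1}{|C|}\sum_{j \in C} u_j$.

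Next, I would observe that since $u_j$ is linear in $x_j$ with coefficient $\theta + Ax_C$ (which depends on $C$ but not on $j$), the average $\frac{1}{|C|}\sum_{j \in C} u_j$ equals $(\theta + Ax_C)^T \bigl(\frac{1}{|C|}\sum_{j \in C} x_j\bigr) = (\theta + Ax_C)^T x_C$ by the definition of $x_C$. Factoring out the common coefficient vector yields $\beta_{i,C} = (\theta + Ax_C)^T (x_i - x_C)$, as claimed.

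There is no genuine obstacle here: the argument is a one-line substitution followed by linearity of the inner product, and the structural reason it works is that the MNL normalizer is eliminated by the geometric-mean construction in $\beta_{i,C}$, while the shared utility coefficient makes the average of utilities equal to the utility of the average feature vector. The only thing worth being careful about is keeping $x_C$ fixed throughout the choice set (it does not depend on the summation index $j$), which is what allows the coefficient vector to be pulled outside the sum.
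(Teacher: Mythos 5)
Your proposal is correct and follows essentially the same route as the paper's proof: substitute the LCL probabilities into the definition of $\beta_{i,C}$, observe that the log-normalizer cancels, and use linearity of the inner product to rewrite the averaged utility as $(\theta + Ax_C)^T x_C$. The only cosmetic difference is that you take logs before subtracting whereas the paper simplifies the ratio inside the logarithm first; the substance is identical.
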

\begin{proof}
Define $\theta_C = \theta + Ax_C$ for brevity.
	\begin{align*}
	\beta_{i, C} &= \log\left(\frac{\Pr(i, C)}{\left(\prod_{h \in C} \Pr(h, C)\right)^{\frac{1}{\lvert C \rvert}}}\right)\\
	&= \log\left(\frac{\frac{\exp\left(\theta_C^T x_i\right)}{\sum_{j \in C} \exp\left(\theta_C^T x_j\right)}}{\left(\prod_{h \in C} \frac{\exp\left(\theta_C^T x_h\right)}{\sum_{j \in C} \exp\left(\theta_C^T x_j\right)}\right)^{\frac{1}{\lvert C \rvert}}}\right)\\
	&= \log\left(\frac{\exp\left(\theta_C^T x_i\right)}{\left[\prod_{h \in C}\exp\left(\theta_C^T x_h\right)\right]^{\frac{1}{\lvert C \rvert}}}\right)\\
	&= \theta_C^T x_i - \frac{1}{\lvert C \rvert}\sum_{h \in C} \theta_C^T x_j\\
	&= \theta_C^T (x_i -  x_C). \qedhere
\end{align*}
\end{proof}

Let $\Vector(A)$ denote the vectorization of the matrix $A$ (the vector formed by stacking the columns of $A$).
\begin{lemma}[Special case of the vec trick, \cite{roth1934direct}]\label{lemma:vector_product}
	For any vectors $x \in \R^m, y \in \R^n$ and matrix $A \in \R^{m\times n}$, $x^T A y = (y \otimes x)^T \Vector(A)$.
\end{lemma}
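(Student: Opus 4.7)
The identity is a routine consequence of the definitions of vectorization and Kronecker product; my plan is to expand both sides as explicit double sums over the entries of $A$ and verify they match term by term.

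First I would rewrite the left-hand side using the definition of matrix-vector multiplication:
\[
x^T A y \;=\; \sum_{i=1}^{m} x_i (Ay)_i \;=\; \sum_{i=1}^{m}\sum_{j=1}^{n} x_i\, A_{ij}\, y_j.
\]
Next I would unpack the right-hand side using the standard column-major convention. The vectorization $\Vector(A) \in \R^{mn}$ stacks the columns of $A$, so that the entry $A_{ij}$ (with $i \in \{1,\dots,m\}$, $j \in \{1,\dots,n\}$) occupies position $(j-1)m + i$. By the definition of the Kronecker product, $y \otimes x \in \R^{mn}$ has the value $y_j x_i$ in that same position $(j-1)m + i$. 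Therefore
\[
(y \otimes x)^T \Vector(A) \;=\; \sum_{j=1}^{n}\sum_{i=1}^{m} y_j\, x_i\, A_{ij},
\]
which equals the double sum for $x^T A y$ after reordering the summation.

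The only potential obstacle is bookkeeping: making sure the indexing conventions for column-major vectorization and the Kronecker product agree, so that the linear index $(j-1)m + i$ simultaneously selects $A_{ij}$ from $\Vector(A)$ and $y_j x_i$ from $y \otimes x$. Once that alignment is fixed, no algebraic manipulation beyond reindexing a double sum is required, and the identity follows immediately.
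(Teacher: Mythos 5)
Your proposal is correct and follows essentially the same route as the paper's proof: both expand $x^T A y$ as the double sum $\sum_{i,j} A_{ij} x_i y_j$ and identify the reordered sum with the inner product of $y \otimes x$ (the stacked blocks $y_j x$) against the column-stacked $\Vector(A)$. Your explicit tracking of the linear index $(j-1)m+i$ is just a more detailed bookkeeping of the same alignment the paper expresses by writing out the block vector.
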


\begin{proof}
	\begin{align*}
		x^T A y 
		= \sum_{i=1}^m \sum_{j=1}^n A_{ij}x_i y_j
		&= \sum_{j=1}^n y_j \sum_{i=1}^m  A_{ij}x_i\\		
		&= \begin{bmatrix}
			y_1 x\\
			y_2x\\
			\vdots\\
			y_n x
		\end{bmatrix}^T \Vector(A)\\
		&= (y \otimes x)^T \Vector(A). \qedhere
	\end{align*}
\end{proof}

With these facts in hand, we are ready to prove \Cref{thm:lcl_identify}.
\begin{proof}[Proof of \Cref{thm:lcl_identify}]
	Consider the log probability ratio of an item $i$ appearing in choice set $C$:
	\begin{align*}
		\beta_{i, C} &= (x_i - x_C)^T(\theta + Ax_C) \tag{by \Cref{lemma:lcl_beta}}\\
		&= (x_i - x_C)^T \begin{bmatrix}A & \theta\end{bmatrix} \begin{bmatrix}
			 x_C\\
			1
		\end{bmatrix}\\
		&= \left( \begin{bmatrix}
		 x_C\\
		1
	\end{bmatrix} \otimes (x_i- x_C)\right )^T
		\Vector \left(\begin{bmatrix}A & \theta\end{bmatrix}\right). \tag{by \Cref{lemma:vector_product}}
	\end{align*}
	Let $m = |\{(i, C) \mid C \in \mathcal{C_D}, i \in C\}|$ be the number of distinct (item, choice set) pairs in the dataset. 
	Index these pairs from $1$ to $m$. We construct the following $m\times (d^2+d)$ linear system by stacking all the $\beta_{i, C}$ equations:
	\begin{align*}
		\begin{bmatrix}
			\left( \begin{bmatrix}
		 x_{C_1}\\
		1
	\end{bmatrix} \otimes (x_{i_1}- x_{C_1})\right )^T\\
			\vdots\\
			\left( \begin{bmatrix}
		 x_{C_m}\\
		1
	\end{bmatrix} \otimes (x_{i_m}- x_{C_m})\right )^T\\ 
		\end{bmatrix} \Vector \left(\begin{bmatrix}A & \theta \end{bmatrix}\right)
		= \begin{bmatrix}
			\beta_{i_1, C_1}\\
			\vdots\\
			\beta_{i_m, C_{m}}
		\end{bmatrix}.
	\end{align*}
	Supposing the choice probabilities are generated according to the LCL, this system is consistent (although it is highly overdetermined with a large dataset). Any solution to this system is a setting of the parameters $\theta, A$ that results in the observed log probability ratios (and therefore choice probabilities, by \Cref{lemma:beta_bijection}). Since we know the system is consistent, it has a unique solution (i.e., the LCL is identifiable) if and only if the rows of the matrix span $\R^{d^2+d}$.
\end{proof}
To better understand the span condition of \Cref{thm:lcl_identify}, we now provide a simple necessary condition for indentifiability. 
Recall that a set of vectors $\{x_0, \dots, x_d\} \subset \R^d$ is \emph{affinely independent} if the set of vectors $\{x_1 - x_0,\dots, x_d - x_0\}$ is linearly independent. 
This is equivalent to requiring that $$\Span\left\{\begin{bmatrix}
		x_0\\
		1
	\end{bmatrix} , \dots,
	\begin{bmatrix}
		x_d\\
		1
	\end{bmatrix} 
	\right\} = \R^{d+1}.$$

\begin{proposition}\label{prop:lcl_non_identifiable}
	No $d$-feature linear context logit is identifiable from a dataset $\mathcal D$ if it does not include a set of $d+1$ choice sets with affinely independent mean feature vectors.
\end{proposition}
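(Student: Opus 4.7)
The plan is to prove the contrapositive: if the LCL is identifiable from $\mathcal D$, then $\mathcal{C_D}$ contains $d+1$ choice sets whose mean feature vectors are affinely independent. Equivalently, I will show that whenever $\{x_C : C \in \mathcal{C_D}\}$ has affine dimension at most $d-1$, the span condition \eqref{eq:span_condition} of \Cref{thm:lcl_identify} fails, and so by that theorem the LCL is not identifiable.

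First, I would restate the affine independence hypothesis in the form that matches the span condition. Since $\{x_0,\dots,x_d\}$ is affinely independent iff $\{[x_0;1],\dots,[x_d;1]\}$ spans $\R^{d+1}$, the assumption that $\mathcal D$ does not contain $d+1$ affinely independent mean feature vectors is equivalent to saying that $\Span\{[x_C;1] \mid C \in \mathcal{C_D}\}$ is a proper subspace of $\R^{d+1}$. Hence there exists a nonzero $v \in \R^{d+1}$ with $v^T\begin{bmatrix}x_C\\1\end{bmatrix}=0$ for every $C \in \mathcal{C_D}$.

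Next, I would exhibit a nonzero vector in $\R^{d^2+d}$ orthogonal to every generator of the span in \eqref{eq:span_condition}. Pick any nonzero $w \in \R^d$ and consider $v \otimes w \in \R^{d^2+d}$. This is nonzero because $v,w$ are nonzero. Using the standard mixed-product identity $(a\otimes b)^T(c\otimes e) = (a^T c)(b^T e)$, for every choice set $C \in \mathcal{C_D}$ and every $i \in C$,
\begin{equation*}
(v \otimes w)^T\!\left(\begin{bmatrix}x_C\\1\end{bmatrix} \otimes (x_i - x_C)\right) = \left(v^T\begin{bmatrix}x_C\\1\end{bmatrix}\right)\!\bigl(w^T(x_i - x_C)\bigr) = 0.
\end{equation*}
Therefore the span in \eqref{eq:span_condition} lies inside the hyperplane $\{u \in \R^{d^2+d} \mid (v \otimes w)^T u = 0\}$ and cannot equal $\R^{d^2+d}$. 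By \Cref{thm:lcl_identify}, the LCL is not identifiable from $\mathcal D$, which completes the contrapositive.

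There is no serious obstacle: the argument is a direct application of \Cref{thm:lcl_identify} combined with the Kronecker mixed-product property. The only mild subtlety is translating ``no $d+1$ affinely independent mean feature vectors'' into the statement that the augmented vectors $[x_C;1]$ fail to span $\R^{d+1}$, which is exactly the equivalence noted in the paragraph preceding the proposition.
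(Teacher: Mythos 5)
Your proof is correct, but it takes a different route from the paper's. You derive the proposition as a corollary of \Cref{thm:lcl_identify}: from the failure of affine independence you extract a nonzero $v \in \R^{d+1}$ annihilating every augmented mean vector $[x_C;1]$, and then the mixed-product identity shows that $v \otimes w$ (for any nonzero $w \in \R^d$) is orthogonal to every generator $\bigl[\begin{smallmatrix}x_C\\1\end{smallmatrix}\bigr] \otimes (x_i - x_C)$, so the span condition \eqref{eq:span_condition} fails. Each step checks out, including the translation of ``no $d+1$ affinely independent mean feature vectors'' into ``$\Span\{[x_C;1]\}$ is a proper subspace of $\R^{d+1}$,'' which is exactly the equivalence the paper records just before the proposition. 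The paper instead gives a self-contained, more constructive argument that never invokes the theorem: it observes that the choice probabilities in each set depend only on $\theta_C = \theta + Ax_C$, that with fewer than $d+1$ affinely independent mean vectors there are infinitely many affine maps $(\theta, A)$ agreeing on those vectors, and that any such map automatically agrees on every other $x_{C'}$ because $x_{C'}$ is an affine combination of them. Your approach buys brevity and a clean linear-algebra mechanism (it also makes transparent \emph{why} the necessary condition is exactly the ``first factor'' of the Kronecker span condition), at the cost of depending on the full strength of \Cref{thm:lcl_identify}; the paper's approach exhibits the non-identifiability directly as an ambiguity in the affine map $x_C \mapsto \theta_C$, which is the intuition the authors want the reader to carry into \Cref{prop:lcl_identify_simple}.
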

\begin{proof}
Suppose that $ x_{C_1}, \dots,  x_{C_k}$ ($k < d+1$) is a maximal set of affinely independent mean feature vectors appearing in the dataset $\mathcal D$. In each one of these choice sets $C_i$, the choice probabilities are determined by $\theta_{C_i} = \theta + A x_{C_i}$. However, since $k < d+1$, there are infinitely many affine transformations $\theta+Ax_{C_i}$ that map every $x_{C_i}$ to its corresponding $\theta_{C_i}$. For any other choice set $C' \notin \{C_1, \dots, C_k\}$, we can express its mean feature vector as an affine combination $ x_{C'} = \sum_{i=1}^k \alpha_i  x_{C_i}$, 
where $\sum_{i=1}^k \alpha_i=1$. 
We then have $\theta_{C'} = \theta+A(\sum_{i=1}^k \alpha_i  x_{C_i}) = \sum_{i=1}^k \alpha_i (\theta+A x_{C_i}) = \sum_{i=1}^k \alpha_i \theta_{C_i}$, so any of the infinitely many affine transformations that correctly map $x_{C_i}$ to $\theta_{C_i}$ will also map $x_{C'}$ to $\theta_{C'}$. This means there are infinitely many parameter settings $\theta$ and $A$ that would result in the same choice probabilities, so the LCL is not identifiable.
\end{proof}
The span condition in \Cref{thm:lcl_identify} is made more difficult to reason about because of the coupling between individual feature vectors $x_i$ and mean feature vectors $x_C$. We therefore provide a simple sufficient condition for identifiability that decouples these requirements and is optimal in the number of distinct choice sets (by \Cref{prop:lcl_non_identifiable}).

\begin{proposition}\label{prop:lcl_identify_simple}
	If a dataset contains $d+1$ distinct choice sets $C_0, \dots, C_d$ such that
	\begin{enumerate}
		\item[i.] the set of mean feature vectors $\{ x_{C_0}, \dots, x_{C_d}\}$ is affinely independent (the necessary condition from \Cref{prop:lcl_non_identifiable}) and
		\item[ii.] in each choice set $C_i$, there is some set of $d+1$ items with affinely independent features,
	\end{enumerate}
	then we can uniquely identify a $d$-feature LCL. 
\end{proposition}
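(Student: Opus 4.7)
The plan is to apply \Cref{thm:lcl_identify} directly: it suffices to show that under hypotheses (i) and (ii), the span condition \eqref{eq:span_condition} is satisfied. So I would focus on showing that the $(d^2+d)$-dimensional vectors $\begin{bmatrix} x_C \\ 1 \end{bmatrix} \otimes (x_i - x_C)$, restricted to $C \in \{C_0, \dots, C_d\}$ and $i \in C$, already span $\R^{d^2+d}$. I would organize the argument ``fiber by fiber'' over the choice sets $C_i$, first nailing down what each fixed $C_i$ contributes, and then combining the contributions using (i).

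First I would fix a single choice set $C_i$ and let $w_i = \begin{bmatrix} x_{C_i} \\ 1 \end{bmatrix}$. For $j$ ranging over $C_i$, the available vectors are $w_i \otimes (x_j - x_{C_i})$, which obviously lie in the subspace $\{w_i \otimes u : u \in \R^d\}$. The key sub-claim is that the differences $\{x_j - x_{C_i} : j \in C_i\}$ actually span $\R^d$. This follows from hypothesis (ii): if $y_0, \dots, y_d \in C_i$ are affinely independent, then $\{y_1 - y_0, \dots, y_d - y_0\}$ is a basis of $\R^d$, and for any reference point $z$ (in particular $z = x_{C_i}$) we have $y_k - z = (y_k - y_0) + (y_0 - z)$, so $\Span\{y_k - z : k = 0, \dots, d\} \supseteq \Span\{y_k - y_0 : k = 1, \dots, d\} = \R^d$. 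Therefore the tensors from $C_i$ alone span the entire ``slice'' $\{w_i \otimes u : u \in \R^d\}$.

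Next I would combine across the choice sets using hypothesis (i). Affine independence of $\{x_{C_0}, \dots, x_{C_d}\}$ is exactly the statement that $\{w_0, \dots, w_d\}$ is linearly independent in $\R^{d+1}$, hence a basis. Picking any basis $\{u_1, \dots, u_d\}$ of $\R^d$, the set $\{w_i \otimes u_k : 0 \le i \le d,\ 1 \le k \le d\}$ is a basis of $\R^{d+1} \otimes \R^d \cong \R^{d^2+d}$ (this is the standard fact that tensor products of bases are bases). Since each $w_i \otimes u_k$ already lies in the subspace spanned by the tensors from $C_i$ by the previous paragraph, the union across $i$ spans all of $\R^{d^2+d}$, and the span condition of \Cref{thm:lcl_identify} is verified.

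The main obstacle I anticipate is purely notational: keeping the Kronecker bookkeeping straight while transitioning from ``for each $C_i$ we get the full slice $w_i \otimes \R^d$'' to ``the union of slices is $\R^{d+1} \otimes \R^d$.'' Once one commits to viewing the ambient space as $\R^{d+1} \otimes \R^d$, the argument is essentially the observation that a collection of subspaces $w_i \otimes \R^d$ indexed by a basis $\{w_i\}$ of the left factor covers the whole tensor product. Everything else is a direct translation of the two hypotheses into the two factors of the tensor.
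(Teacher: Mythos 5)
Your proof is correct, but it takes a genuinely different route from the paper's. The paper proves the proposition constructively and does not pass through \Cref{thm:lcl_identify} at all: it first uses within-choice-set differences $\beta_{i_1,C}-\beta_{i_2,C}=\theta_C^T(x_{i_1}-x_{i_2})$ together with hypothesis (ii) to solve a $d\times d$ linear system for each set-dependent utility $\theta_C=\theta+Ax_C$, and then uses hypothesis (i) plus the fact that an affine map on $\R^d$ is determined by its action on $d+1$ affinely independent points to recover $\theta$ and $A$ from the pairs $(x_{C_i},\theta_{C_i})$. You instead verify the span condition \eqref{eq:span_condition} directly, decomposing $\R^{d^2+d}\cong\R^{d+1}\otimes\R^d$ into slices $w_i\otimes\R^d$ with $w_i=\left[\begin{smallmatrix}x_{C_i}\\1\end{smallmatrix}\right]$: hypothesis (ii) fills each slice (since the span of $\{x_j-x_{C_i}: j\in C_i\}$ contains all pairwise differences $y_k-y_0$ of the affinely independent items and hence is $\R^d$), and hypothesis (i) makes $\{w_0,\dots,w_d\}$ a basis of $\R^{d+1}$, so the union of slices spans everything by the tensor-product-of-bases fact. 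Both arguments are sound; the paper's has the advantage of being self-contained and constructive (it exhibits the recovery procedure), while yours makes explicit how the two hypotheses correspond to the two Kronecker factors in the theorem's condition, which arguably better explains why the sufficient condition ``decouples'' the requirements on items and on mean feature vectors. The only cosmetic wrinkle is your sentence deriving the sub-claim: as written, $y_k-z=(y_k-y_0)+(y_0-z)$ does not immediately give the containment of spans; you should note that the span of $\{y_k-z\}_{k=0}^d$, being a subspace, contains the differences $(y_k-z)-(y_0-z)=y_k-y_0$, which is what yields $\R^d$.
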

\begin{proof}
We will use differences in log probability ratios to first identify the choice set dependent utilities $\theta_C = \theta+A x_C$ in each choice set and then combine those to determine $\theta$ and $A$. 

To remove a dependence on mean feature vectors, consider the difference of two log probability ratios in the same choice set:
\begin{align*}
 	\beta_{i_1, C} - \beta_{i_2, C} &= \theta_C^T (x_{i_1} - x_C) - \theta_C^T (x_{i_2} - x_C) \tag{by \Cref{lemma:lcl_beta}}\\
	 &= \theta_C^T (x_{i_1} - x_{i_2}).
\end{align*} 
In order to identify the vector $\theta_C$, form the following linear system from $d$ such differences, all in the same choice set $C$:
\begin{align*}
	\begin{bmatrix}
		(x_{i_1} - x_{i_0})^T\\
		(x_{i_2} - x_{i_0})^T\\
		\vdots\\
		(x_{i_d} - x_{i_0})^T
	\end{bmatrix} \theta_C = \begin{bmatrix}
		\beta_{i_1, C} - \beta_{i_0, C}\\
		\beta_{i_2, C} - \beta_{i_0, C}\\
		\vdots\\
		\beta_{i_d, C} - \beta_{i_0, C}
	\end{bmatrix}
\end{align*}
If the rows of the matrix are linearly independent, then we can uniquely solve this system to find $\theta_C$. For this to be the case, we need the $d+1$ feature vectors $x_{i_0}, \dots, x_{i_d}$ to be affinely independent.

In order to recover $\theta$ and $A$, we need to solve the affine system $\theta+Ax_C = \theta_C$ for $\theta$ and $A$ given observations of $x_C$ and $\theta_C$. 
Affine transformations in $d$ dimensions are uniquely specified by their action on a set of $d+1$ affinely independent vectors. So, if we have $d+1$ observed choice sets $C_0, \dots, C_d$ whose mean feature vectors $x_{C_0}, \dots, x_{C_d}$ are affinely independent (and if we know $\theta_{C_0}, \dots, \theta_{C_d}$), then we can uniquely identify $\theta$ and $A$. As we have seen, we can find $\theta_{C_0}, \dots, \theta_{C_d}$ if each of $C_0, \dots, C_d$ has $d+1$ items with affinely independent feature vectors.
\end{proof}

This concludes our analysis of LCL identifiability. Intuitively, our results show that identifiability requires enough choice sets with sufficiently different mean features containing enough sufficiently different items (with some coupling between the two requirements). The necessary and sufficient condition of \Cref{thm:lcl_identify} is easily satisfied in practice if there are no redundant features, so a unique best-fit LCL can be recovered from most of the datasets we examine. We leave characterization of DLCL identifiability for future work, as even mixed logits 
have notoriously complex identifiability conditions~\cite{grun2008identifiability,zhao2016learning,chierichetti2018learning}.

\section{Estimation}\label{sec:estimation}
One of the strengths of discrete choice models is that they can be learned from choice data. Given a dataset $\mathcal D$ consisting of observations $(i, C)$, where $i$ was selected from the choice set $C$, we wish to recover the parameters of a model that best describe the dataset. In this section, we describe estimation procedures for the LCL and DLCL. First, we show that the likelihood function of the LCL is log-concave, so maximum likelihood estimation can be used to optimally estimate the model. On the other hand, the DLCL does not have a log-concave likelihood. To address this issue, we derive the expectation-maximization algorithm for the DLCL, which only requires optimizing convex subproblems.

\subsection{Maximum Likelihood Estimation}
We use maximum likelihood estimation as the primary method of estimating the LCL and DLCL. For any discrete choice model, the likelihood of the parameters $\theta$ given a dataset $\mathcal D$ is
\begin{equation}
	\mathcal L (\theta; \mathcal D) = \prod_{(i, C) \in \mathcal D} \Pr(i, C),
\end{equation}
where $\Pr(i, C)$ depends on $\theta$.
In MLE, we find the parameters $\theta$ that maximize $\mathcal L(\theta; \mathcal D)$. Equivalently, we can minimize the negative log-likelihood (NLL):
\begin{equation}
	-\ell (\theta; \mathcal D) = -\sum_{(i, C) \in \mathcal D} \log \Pr(i, C).
\end{equation}
The NLL of the linear context logit is:
\begin{align}
	-\ell (\theta, A; \mathcal D) &= -\smashoperator{\sum_{(i, C) \in \mathcal D}} \log \frac{\exp(\left[\theta + Ax_C\right]^T x_i)}{\sum_{j \in C}\exp(\left[\theta + Ax_C\right]^T x_j)}\\
	&= \smashoperator{\sum_{(i, C) \in \mathcal D}}  -\left(\theta + Ax_C\right)^T x_i + \log \sum_{j \in C}\exp(\left[\theta + Ax_C\right]^T x_j).\label{eq:lcl_nll}
\end{align}
The LCL's negative log-likelihood function is convex (equivalently, the likelihood is log-concave). To see this, notice that the first term in the summand of \eqref{eq:lcl_nll} is a linear combination of entries of $\theta$ and $A$, so it is jointly convex in $\theta$ and $A$. Meanwhile, log-sum-exp is convex and monotonically increasing, so its composition with the linear functions $\left[\theta + Ax_C\right]^T x_j$ is also convex. We then have that $-\ell (\theta, A; \mathcal D)$ is convex, as the sum of convex functions is convex. Moreover, the second partial derivatives of the NLL function are all bounded (by a constant depending on the dataset), so its gradient is Lipschitz continuous. We can therefore use gradient descent to efficiently find a global optimum of $-\ell (\theta, A; \mathcal D)$.

On the other hand, the NLL of the DLCL (like that of the mixed logit) is not convex, so we can only hope to find a local optimum with gradient descent. 
Nonetheless, we find that it performs well in practice. One detail slightly complicates the MLE procedure for DLCL, namely the constraints $\sum_{k=1}^d \pi_k = 1$ and $\pi_k \ge 0$. We avoid this issue by replacing each mixture proportion $\pi_i$ with the softmax $\exp(\pi'_i) / \sum_{k=1}^d \exp(\pi'_k)$, with no constraints on $\pi_i'$. This function is differentiable (making it amenable to automatic differentiation software) and naturally enforces both the sum and non-negativity constraints.

\subsection{EM Algorithm for the DLCL}\label{sec:em_algorithm}
A standard method for estimating mixture models like the mixed logit is expectation-maximization (EM)~\cite{dempster1977maximum,train2009discrete}. 
Here, we derive the EM algorithm for the DLCL (see \cite{hastie2009elements} for a general treatment of EM algorithms). 
In this section, we use $\mathcal D_h$ to refer to the $h$th observation $(i, C)$. We use $\Delta_h \in \{1, \dots, d\}$ to denote the latent mixture component that the observation $\mathcal D_h$ comes from (taking the view mentioned earlier about each observation belonging to one component).

The EM algorithm is an iterative procedure that begins with initial guesses for the parameters $\theta^{(0)} = (A^{(0)}, B^{(0)}, \pi^{(0)})$ and updates them until convergence. In the update step, we maximize the expectation of the log-likelihood $\ell(A, B; \mathcal D, \Delta)$ over the distribution of the unobserved variable $\Delta$ conditioned on the observations $\mathcal D$ and the current estimates of the parameters, denoted $\E_\Delta[\ell(A, B; \mathcal D, \Delta)\mid \mathcal D, \theta^{(t)}]$. The new estimates $A^{(t+1)}$ and $B^{(t+1)}$ are the maximizers of this function. The new estimate of the mixture proportions $\pi^{(t+1)}$ has a closed form based on the probability that each observation comes from each mixture component according to the current estimates of $A$ and $B$. See \Cref{alg:em} for the complete procedure. We derive the details here, starting with a breakdown of the expectation function:
\begin{equation}
	\begin{aligned}
		\E_\Delta&[\ell(A, B; \mathcal D, \Delta)\mid \mathcal D, \theta^{(t)}]\\
		&= \sum_{(i, C) = \mathcal D_h} \sum_{k=1}^d \Pr(\Delta_h = k\mid i, C, \theta^{(t)}) \log  \Pr(i, C\mid \Delta_h=k, A, B).
	\end{aligned}
 	 \label{eq:emfunction}
 \end{equation} 
We can compute the first term in the summand (the \emph{responsibilities} describing how well each mixture component describes each observation) using Bayes' Theorem:
\begin{align}
	\Pr(\Delta_h = k\mid i, C, \theta^{(t)}) &= \frac{\Pr(i, C \mid \Delta_h = k, \theta^{(t)})\Pr(\Delta_h = k \mid \theta^{(t)})}{\Pr(i, C\mid \theta^{(t)})}\\
	&= \pi_{h}^{(t)}\frac{\Pr(i, C \mid \Delta_h = k, \theta^{(t)})}{\Pr(i, C \mid \theta^{(t)})}. \label{eq:responsibilities}
\end{align}
The numerator of \Cref{eq:responsibilities} is just the $k$th component of the DLCL choice probability (with our estimates for $A$ and $B$):
\begin{align}
	\Pr(i, C \mid \Delta_h = k, \theta^{(t)}) &= \frac{\exp\big([B_k^{(t)} + A_k^{(t)} (x_C)_k]^T x_i\big)}{\sum_{j \in C} \exp\big([B_k^{(t)} + A_k^{(t)} (x_C)_k]^T x_j\big)}.
\end{align}
Meanwhile, the denominator of \Cref{eq:responsibilities} is the sum of these probabilities weighted by the mixture weight estimates:
\begin{align}
	\Pr(i, C \mid \theta^{(t)}) &= \sum_{k=1}^d \pi_k^{(t)} \Pr(i, C \mid \Delta_h = k, \theta^{(t)}).
\end{align}
The last term in \Cref{eq:emfunction} is a function of the parameters $A, B$ (not their estimates):
\begin{align}
	&\log  \Pr(i, C\mid \Delta_h=k, A, B) = \log \left[\frac{\exp\big([B_k + A_k (x_C)_k]^T x_i\big)}{\sum_{j \in C} \exp\left([B_k + A_k (x_C)_k]^T x_j\right)}\right]\\
	&= [B_k + A_k (x_C)_k]^T x_i - \log \sum_{j \in C} \exp\left([B_k + A_k (x_C)_k]^T x_j\right). \label{eq:emlogprob}
\end{align}
\Cref{eq:emlogprob} is concave by the same reasoning that the LCL's NLL (\Cref{eq:lcl_nll}) is convex. Thus, the expectation $\E_\Delta[\ell(A, B; \mathcal D, \Delta)\mid \mathcal D, \theta^{(t)}]$, being the sum of positively scaled concave functions, is also concave. Its gradient is also Lipschitz continuous, just like the LCL's NLL. We can therefore find a global maximum using gradient ascent (in practice, we use gradient \emph{descent} to \emph{minimize} $-Q(A, B \mid \theta^{(t)})$). 

A major advantage of the EM algorithm over MLE for the DLCL is that it only requires optimizing convex functions with Lipschitz continuous gradients. 
However, while the EM algorithm is guaranteed to improve the log-likelihood of the parameter estimates at each step \cite{hastie2009elements}, it may still arrive at a local maximum. In practice, we find that the EM algorithm for DLCL outperforms stochastic gradient descent in 18/22 of our datasets (\Cref{tbl:em_comparison} in \Cref{sec:empirical_appendix}). 

\begin{algorithm}[tb]
   \caption{EM algorithm for estimating DLCL parameters.}
   \label{alg:em}
\begin{algorithmic}[1]
\algsetup{
  indent=1.5em,
  linenosize=\scriptsize,
  linenodelimiter=
}

  \STATE {\bfseries Input:} $m$ observations $\mathcal D$, $d$ features
  \STATE $A^{(0)}, B^{(0)}  \gets $ $d\times d$ randomly initialized matrices
  \STATE $\pi^{(0)} \gets $ $d$-dimensional vector with all entries equal to $\frac{1}{d}$
  \STATE $t \gets 0$
  \WHILE{not converged}
    \STATE $p_{hk} \gets \frac{\exp\big([B_k^{(t)} + A_k^{(t)} (x_C)_k]^T x_i\big)}{\sum_{j \in C} \exp\big([B_k^{(t)} + A_k^{(t)} (x_C)_k]^T x_j\big)}$ \\ \quad for each $(i, C) = \mathcal D_h$ and $k = 1, \dots, d$
    \STATE $r_{hk} \gets \frac{\pi_k^{(t)}p_{hk}}{\sum_{g=1}^d \pi_g^{(t)} p_{hk}}$ for each $h =1, \dots, m$ and $k = 1, \dots, d$
    \STATE \begin{equation*}
    	\begin{aligned}	
    		\textstyle Q(A, B \mid \theta^{(t)}) \gets &\sum_{(i, C) = \mathcal D_h} \sum_{k = 1}^d r_{hk} \Big[[B_k + A_k (x_C)_k]^T x_i \\
    		&- \log \sum_{j \in C} \exp\left([B_k + A_k (x_C)_k]^T x_j\right) \Big]
    	\end{aligned}	
    \end{equation*}
    \STATE Find a minimizer $A^*, B^*$ of $-Q(A, B \mid \theta^{(t)})$ using gradient descent
    \STATE $A^{(t+1)} \gets A^*$
    \STATE $B^{(t+1)} \gets B^*$
    \STATE $\pi_k^{(t+1)} \gets \frac{1}{|\mathcal D|} \sum_{h=1}^{|\mathcal D|} r_{hk}$ for each $k = 1, \dots, d$
	\STATE $t \gets t+1$	
  \ENDWHILE
  \RETURN $A^{(t)}, B^{(t)}, \pi^{(t)}$
\end{algorithmic}
\end{algorithm}

\section{Empirical Analysis}\label{sec:empricial_analysis}
We apply our LCL and DLCL models to two collections of real-word choice datasets. First, we examine datasets specifically collected to understand choice in various domains, including car purchasing and hotel booking. The features describing items naturally differ in each of these datasets. The second collection of datasets comes from a particular choice process in social networks, namely the formation of new connections. Here, we use graph properties as features (such as in-degree, a proxy for popularity~\cite{moody2011popularity}), allowing us to compare social dynamics across email, SMS, trust, and comment networks. In both dataset collections, we first establish that context effects occur and that our models better describe the data than traditional context-effect-free models, MNL and mixed logit. We then show how the learned models can be interpreted to recover intuitive feature context effects. Our code and results (including scripts for generating all plots and tables appearing in this paper) are available at \url{https://github.com/tomlinsonk/feature-context-effects} and links to our datasets are in \cref{sec:dataset_details}.

\xhdr{Estimation details}
For prediction experiments, we use 60\% of samples for training, 20\% for validation, and 20\% for testing. When testing model fit using likelihood-ratio tests, we estimate models from the whole dataset. We use the PyTorch implementation of the Adam optimizer~\cite{kingma2015adam,paszke2019pytorch} for maximum likelihood estimation with batch size 128 and the \texttt{amsgrad} flag \cite{reddi2018convergence}. We run the optimizer for $500$ epochs or 1 hour, whichever comes first. For the whole-model fits, we use weight decay $0.001$ and perform a search over the learning rates $0.0005, 0.001, 0.005, 0.01, 0.05, 0.1$, selecting the value that results in the highest likelihood model. For our prediction experiments, we perform a grid search over the weight decays $0, 0.0001, 0.0005, 0.001, 0.005, 0.01$ and the same learning rates as above. Here, we select the pair of weight decay and learning rate that achieves the best likelihood on the validation set.%
\footnote{More specifically, each hyperparameter setting produces a sequence of likelihoods by evaluating the likelihood on the validation data at the end of each training epoch. To account for noise, we select the hyperparameter setting where the minimum likelihood over the last five steps is maximal.}
Predictions are evaluated on the held-out test set. 
We use $d$ (the number of features) components for mixed logit to provide a fair comparison against DLCL (which always uses $d$ components).

\subsection{General Choice Datasets}
\begin{table}[t]
\centering
\caption{General choice datasets summary.}
\label{tbl:general_datasets}
\begin{tabular}{lrrrr}
\toprule
\textbf{Dataset} &
\textbf{Choices}&
\textbf{Features} &
\textbf{Largest Choice Set}
\\
\midrule
\textsc{district} & 5376 & 27 & 2\\
\textsc{district-smart} & 5376 & 6 & 2\\
\textsc{sushi} & 5000 & 6 & 10\\
\textsc{expedia} & 276593 & 5 & 38 \\
\textsc{car-a} & 2675 & 4 & 2\\
\textsc{car-b} & 2206 & 5 & 2\\
\textsc{car-alt} & 4654 & 21 & 6\\
\bottomrule
\end{tabular}
\end{table}
We analyze six choice datasets coming from online and survey data (\Cref{tbl:general_datasets}). The classic \textsc{sushi} dataset \cite{kamishima2003nantonac} includes surveys in which each respondent ranked 10 sushi (randomly selected from a set of 100 options) from favorite to least favorite. We consider the top ranked sushi to be the choice from the set of 10 options. The \textsc{expedia} dataset \cite{kaggle2013expedia} comes from online hotel booking. It contains user searches, displayed results, and which hotel was booked. We consider the set of search results to be the choice set and the booked hotel to be the choice. The \textsc{district} dataset \cite{kaufman2017measure,bower2020salient} contains pairwise comparisons between US congressional district shapes, with geometric properties as features. Survey respondents were asked to select which district was more ``compact'' (towards an understanding of anti-gerrymandering laws). The \textsc{district-smart} dataset is identical, but contains the subset of features identified by the authors of \cite{kaufman2017measure} as ``good predictors of compactness.'' The \textsc{car-a} and \textsc{car-b} datasets \cite{abbasnejad2013learning} contain pairwise comparisons between hypothetical cars described by features such as body type (SUV, sedan) and transmission (manual, automatic). \textsc{car-alt} \cite{brownstone1996transactions,mcfadden2000mixed} is similar, but has choice sets of six hypothetical cars and focuses on alternative-fuel vehicles (e.g., electric, compressed natural gas).

In all datasets, we standardize the features to have zero mean and unit variance, which allows us to more meaningfully compare learned parameters across datasets. \Cref{sec:dataset_details} has more details about the dataset features and preprocessing steps. The LCL is identifiable in \textsc{district-smart}, \textsc{expedia}, and \textsc{sushi}, but not the other general choice datasets (\Cref{sec:dataset_lcl_identify}). However, the $L_2$ regularization we apply (in the form of Adam weight decay) identifies the model in all cases.

\subsection{Network Datasets}\label{sec:network_datasets}
The general choice datasets above come with their own specialized set of features. For this reason, it is not possible to compare feature context effects across them. However, finding common patterns across datasets is one key step in showing that these effects are worth studying for their insight into human behavior as well as for their theoretical interest or use in prediction. To this end, we also study a collection of temporal social network datasets, where the choices are which edge to form and item features are graph properties of the nodes. This setting allows us to examine comparable context effects across thirteen datasets. In addition to providing datasets with identical features, this social network study is also of interest for insight into sociological processes and highlights how our models can be applied to a particular domain.

Recent work~\cite{overgoor2019choosing} showed that many models of network growth can be viewed through the lens of discrete choice. In a directed graph, the formation of the edge $u \rightarrow v$ can be thought of as a choice by the node $u$ to initiate new contact with $v$ (the graph might be a citation, communication, or friendship network, for example). The set from which $u$ chooses can vary, including all nodes in the graph or perhaps only a subset of closely available nodes. We focus specifically on (directed) \emph{triadic closure} \cite{rapoport1953spread,granovetter1977strength,easley2010networks}, where the node $u$ closes a triangle $u \rightarrow v \rightarrow w$ by adding the edge $u \rightarrow w$. This phenomenon is used in many influential network growth models~\cite{jin2001structure,holme2002growing,vazquez2003growing} and real-world networks show evidence of triadic closure in the form of high clustering coefficients \cite{holland1971transitivity,watts1998collective} and closure coefficients~\cite{yin2019local,yin2020measuring}.

\xhdr{Identifying choices from temporal network data}
Our network analysis assumes that the graphs grow according to a multi-mode model that combines triadic closure with a method of global edge formation. In particular, we assume that at each step, an initiating node either decides to form an edge to any node in the graph with probability $r$ or decides to close a triangle with probability $1-r$. This is the same setup used by the Jackson--Rogers model \cite{jackson2007meeting,holme2002growing} and the more general $(r, p)$-model \cite{overgoor2019choosing}. We focus only on the instances where a node decided to close a triangle, and assume that the node $u$ first picks one of its neighbors $v$ uniformly at random before choosing one of $v$'s neighbors as a new connection to initiate.

With this setup, we can reconstruct choice sets for each triangle closure in a directed temporal network dataset. 
In our setup, each time we observe a new edge $u \rightarrow w$ that closes at least one previously unclosed triangle, 
we model this as first selecting an intermediate $v$ uniformly at random through which the triangle was closed 
(note that $u \rightarrow w$ can close multiple unclosed triangles $u \rightarrow v \rightarrow w$ and $u \rightarrow v' \rightarrow w$). We then consider the choice set for that triangle closure to be the out-neighbors of $v$ that are not out-neighbors of $u$. For example, in a friendship network, this triangle closure could occur by $u$ attending a party hosted by $v$ at which they choose which of $v$'s friends to become friends with themselves.
Since we do not know from the network data whose party $u$ attended, 
we model the selection among possible intermediaries $v$s uniformly at random, 
conditioned on observing the triangle closure $u \rightarrow w$.
(One could model the intermediary selection of $v$ differently, say by weighting recent connections more heavily,
but we do not do this here.)

\xhdr{Node features}
The features of each node in the choice set are computed at the instant before the edge is closed (the features we consider evolve as the network grows over time). In our network datasets, we have timestamps on each edge and an edge may be observed many times (e.g., in an email network, $u$ may send $w$ many emails). The number of times an edge is observed is its \emph{weight}; an edge not appearing in the graph has weight 0. We use six features to describe each node $w$ that could be selected by the chooser $u$:
\begin{enumerate}
	\item \emph{in-degree}: the number of edges entering the target node $w$,
	\item \emph{shared neighbors}: the number of in- or out-neighbors of $u$ that are also in- or out-neighbors of $w$,
	\item \emph{reciprocal weight}: the weight of the reverse edge $u \leftarrow w$,
	\item \emph{send recency}: the number of seconds since $w$ initiated any outgoing edge,
	\item \emph{receive recency}: the number of seconds since $w$ received any incoming edge, and
	\item \emph{reciprocal recency}: the number of seconds since the reverse edge $u \leftarrow w$ was last observed.
\end{enumerate}
Following \citeauthor{overgoor2020scaling}~\cite{overgoor2020scaling}, we log-transform features 1 and 2. We take $\log (1+\text{feature 3})$ to handle weight 0 (by construction, in-degree and shared neighbors are never 0, since $v$ is always a shared neighbor of $u$ and $w$). Lastly, we transform the three temporal features with $\log^{-1}(2+\text{feature})$ and set them to 0 if the event has never occurred. This transformation ensures that (1) we can handle 0 seconds since the last event, (2) higher values mean more recency, and (3) ``no occurrence'' results in the lowest possible value of the transformed feature.

\begin{table}[t]
\centering
\caption{Network datasets summary.}
\label{tbl:network_datasets}
\begin{tabular}{lrrr}
\toprule
\textbf{Dataset} &
\textbf{Nodes} &
\textbf{Edges}&
\textbf{Triangle closures}\\
\midrule
\textsc{synthetic-mnl} & 1000& 391294 & 50000\\
\textsc{synthetic-lcl} & 1000 & 380584 & 50000  \\
\textsc{email-enron} & 18592 & 53477 & 19900 \\
\textsc{email-eu} & 986 & 24929 & 19603 \\
\textsc{email-w3c} & 20082 & 33409 & 3271  \\
\textsc{sms-a} & 44430 & 68834 & 6311 \\
\textsc{sms-b} & 72146 & 100974 & 9376 \\
\textsc{sms-c} & 14433 & 23285 & 2732 \\
\textsc{bitcoin-alpha} & 3783 & 24186 & 8823 \\
\textsc{bitcoin-otc} & 5881 & 35592 & 12750 \\
\textsc{reddit-hyperlink} & 23499 & 91946 & 37115 \\
\textsc{wiki-talk} & 22067 & 81125 & 27505  \\
\textsc{facebook-wall} & 46952 & 274086 & 68776  \\
\textsc{mathoverflow} & 24818 & 239978 & 137455  \\
\textsc{college-msg} & 1899 & 20296 & 6267 \\
\bottomrule
\end{tabular}
\end{table}

\xhdr{Network datasets} 
We processed thirteen network datasets, including three email datasets (\textsc{email-enron} \cite{benson2018simplicial}, \textsc{email-eu} \cite{leskovec2007graph,yin2017local,benson2018simplicial}, \textsc{email-w3c} \cite{Craswell-2005-TREC,benson2018found}); three SMS datasets (\textsc{sms-a}, \textsc{sms-b}, and \textsc{sms-c} \cite{wu2010evidence}), two Bitcoin trust datasets (\textsc{bitcoin-alpha} and \textsc{bitcoin-otc} \cite{kumar2016edge,kumar2018rev2}), an online messaging dataset (\textsc{college-msg} \cite{panzarasa2009patterns}), a hyperlink dataset (\textsc{reddit-hyperlink} \cite{kumar2018community}), and three online forum datasets (\textsc{facebook-wall} \cite{viswanath2009evolution}, \textsc{mathoverflow} \cite{paranjape2017motifs}, and \textsc{wiki-talk} \cite{leskovec2010signed,leskovec2010predicting}). All of these datasets are publicly available online (\cref{sec:dataset_details} has links to our preprocessed data as well as descriptions of the preprocessing steps and original sources.)

In addition to these real-world networks, we generate two synthetic networks to validate our identification of context effects, \textsc{synthetic-mnl} and \textsc{synthetic-lcl}. To generate these networks, we begin with 1000 isolated nodes. At each step, we add an edge uniformly at random with probability 0.9. With probability 0.1, we instead close a triangle by selecting a node $u$ and one of its neighbors $v$ uniformly at random. We then use either an MNL (for \textsc{synthetic-mnl}) or LCL (for \textsc{synthetic-lcl}) to choose which triangle $u \rightarrow v \rightarrow \,?$ to close (if there are no triangles to close starting from $u$, we add a random edge). We use the same features as in the real-world datasets, with Poisson-distributed simulated timestamp gaps between successive edges. We repeat this process until we 50000 triangles are closed
(\cref{sec:dataset_details} has details of the arbitrary parameters we used for the MNL and LCL).

\Cref{tbl:network_datasets} summarizes the network data. Using \Cref{thm:lcl_identify}, we find that the LCL is uniquely identifiable in every network dataset (\Cref{sec:dataset_lcl_identify}). 
Whereas we split the general choice datasets into training, validation, and testing sets uniformly at random, 
we instead split the network datasets temporally so that future edges are predicted based on parameters estimated from past edges.

\begin{figure}[t]
\includegraphics[width=\columnwidth]{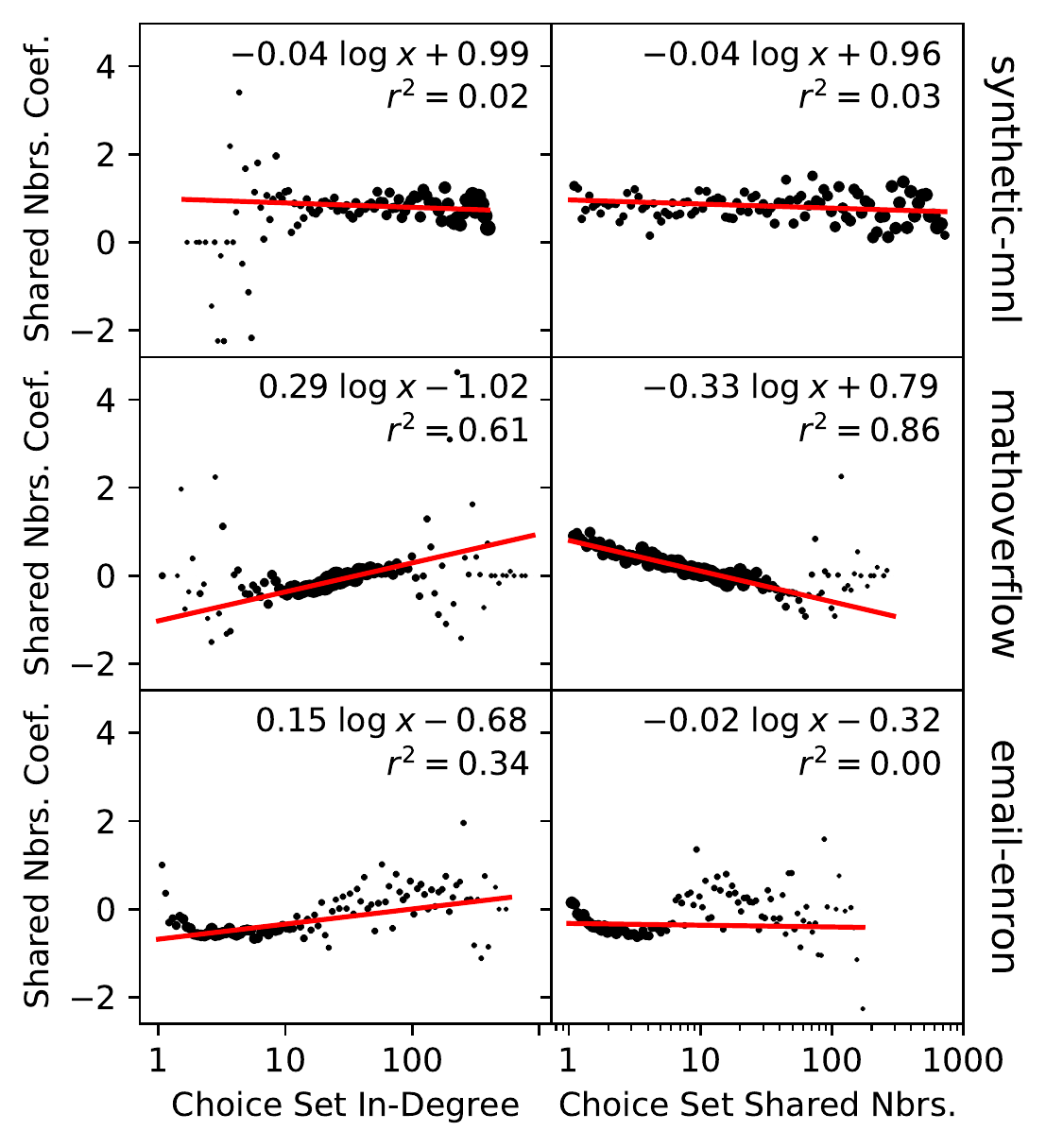}
\caption{Linear context effects observed in \textsc{mathoverflow} (middle row) and non-linear effects in \textsc{email-enron} (bottom row) in contrast with a context-effect-free synthetic dataset (top row). Each point shows the preference coefficient of the shared neighbors feature in choice sets with varying mean in-degree (left column) and shared neighbor counts (right column). These coefficients were learned by splitting observations into 100 bins according to their mean feature values and learning an MNL for each bin separately. The area of each point is proportional to the square root of the number of observations in its bin. The red lines are weighted least squares fits, with parameters and $r^2$ values shown in each subplot.}
\label{fig:context_effect_example}
\end{figure}

\begin{table}[t]
\centering
\caption{Dataset negative log-likelihoods. LCL and DLCL are significantly better fits than MNL and mixed logit in several general choice datasets and all real network datasets. This indicates the presence of feature context effects.}
\label{tbl:nll}
\begin{tabular}{lrrrr}
\toprule
& \textbf{MNL} & \textbf{LCL} & \begin{tabular}{r}
	\textbf{Mixed}\\\textbf{logit}
\end{tabular} & \textbf{DLCL}\\
\midrule
\textsc{district} & 3313 & \textbf{3130}\phantom{$^*$} & 3258 & 3206\phantom{$^{\dagger}$}\\
\textsc{district-smart} & 3426 & \textbf{3278}$^*$ & 3351 & 3303$^{\dagger}$\\
\textsc{expedia} & 839505 & 837649$^*$ & 839055 & \textbf{837569}$^{\dagger}$\\
\textsc{sushi} & 9821 & 9773$^*$ & 9793 & \textbf{9764}\phantom{$^{\dagger}$}\\
\textsc{car-a} & 1702 & 1694\phantom{$^*$} & 1696 & \textbf{1692}\phantom{$^{\dagger}$}\\
\textsc{car-b} & 1305 & 1295\phantom{$^*$} & 1297 & \textbf{1284}\phantom{$^{\dagger}$}\\
\textsc{car-alt} & 7393 & \textbf{6733}$^*$ & 7301 & 7011$^{\dagger}$\\
\midrule
\textsc{synthetic-mnl} & \textbf{210473} & 210486\phantom{$^*$} & 210503 & 210504\phantom{$^{\dagger}$}\\
\textsc{synthetic-lcl} & 140279 & \textbf{137232}$^*$ & 139539 & 137937$^{\dagger}$\\
\textsc{wiki-talk} & 99608 & 97748$^*$ & 95761 & \textbf{95134}$^{\dagger}$\\
\textsc{reddit-hyperlink} & 135108 & 132880$^*$ & 133766 & \textbf{132473}$^{\dagger}$\\
\textsc{bitcoin-alpha} & 19675 & 19190$^*$ & 19093 & \textbf{18877}$^{\dagger}$\\
\textsc{bitcoin-otc} & 26968 & 26101$^*$ & 25768 & \textbf{25348}$^{\dagger}$\\
\textsc{sms-a} & 8252 & \textbf{8056}$^*$ & 8239 & 8154$^{\dagger}$\\
\textsc{sms-b} & 13153 & \textbf{12823}$^*$ & 13147 & 12975$^{\dagger}$\\
\textsc{sms-c} & 4988 & 4880$^*$ & 4928 & \textbf{4871}$^{\dagger}$\\
\textsc{email-enron} & 73015 & 70061$^*$ & 71450 & \textbf{69254}$^{\dagger}$\\
\textsc{email-eu} & 53025 & 51822$^*$ & 51988 & \textbf{51431}$^{\dagger}$\\
\textsc{email-w3c} & 11012 & 10677$^*$ & 9898 & \textbf{9758}$^{\dagger}$\\
\textsc{facebook-wall} & 118208 & \textbf{116062}$^*$ & 117210 & 116328$^{\dagger}$\\
\textsc{college-msg} & 14575 & 14120$^*$ & 13849 & \textbf{13712}$^{\dagger}$\\
\textsc{mathoverflow} & 500537 & 479999$^*$ & 440482 & \textbf{435932}$^{\dagger}$\\
\midrule
\multicolumn{5}{l}{$^*$\footnotesize{Significant likelihood-ratio test vs.~MNL ($p<0.001$)}}\\
\multicolumn{5}{l}{$^{\dagger}$\footnotesize{Significant likelihood-ratio test vs.~mixed logit ($p< 0.001$)}}\\
\end{tabular}

\end{table}

\begin{table*}[h]
\centering
\caption{Test mean relative rank (lower is better), with standard deviations in parentheses. LCL and DLCL consistently outperform MNL and mixed logit, respectively. The differences are significant according to the Wilcoxon test in most of the datasets in which context effects are present. The best value for each dataset is bolded.}
\label{tbl:prediction}
\begin{tabular}{lccrrccrr}
\toprule{}
&\multirow{2}{*}{\textbf{MNL}} & \multirow{2}{*}{\textbf{LCL}} & \multicolumn{2}{c}{\textbf{Wilcoxon test}} & \multirow{2}{*}{\begin{tabular}{c}
\textbf{Mixed} \\\textbf{logit}
\end{tabular}} & \multirow{2}{*}{\textbf{DLCL}} & \multicolumn{2}{c}{\textbf{Wilcoxon test}}\\
\cmidrule{4-5}\cmidrule{8-9}
& & & \multicolumn{1}{c}{$T$} & \multicolumn{1}{c}{$p$-value} & & & \multicolumn{1}{c}{$T$} & \multicolumn{1}{c}{$p$-value}\\
\midrule
\textsc{district} & .3680 (.4823) & .3253\phantom{$^*$} (.4685) & 21692 & $0.0099$ & \textbf{.3188} (.4660) & .3225\phantom{$^{\dagger}$} (.4674) & 1783 & $0.67$\\
\textsc{district-smart} & .4006 (.4900) & .3764\phantom{$^*$} (.4845) & 13352 & $0.096$ & \textbf{.3271} (.4692) & .3448\phantom{$^{\dagger}$} (.4753) & 4736 & $0.12$\\
\textsc{expedia} & .3859 (.2954) & .3800$^*$ (.2945) & 54748886 & $< 10^{-16}$ & .3201 (.2825) & \textbf{.3195}$^{\dagger}$ (.2823) & 169205439 & $3.6 \times 10^{-12}$\\
\textsc{sushi} & .2727 (.2751) & .2737\phantom{$^*$} (.2781) & 8488 & $0.31$ & \textbf{.2724} (.2765) & .2732\phantom{$^{\dagger}$} (.2765) & 16841 & $0.83$\\
\textsc{car-a} & \textbf{.3570} (.4791) & \textbf{.3570}\phantom{$^*$} (.4791) & --- & $1.0$ & \textbf{.3570} (.4791) & \textbf{.3570}\phantom{$^{\dagger}$} (.4791) & --- & $1.0$\\
\textsc{car-b} & .3326 (.4711) & \textbf{.3213}\phantom{$^*$} (.4670) & 70 & $0.25$ & .3303 (.4703) & .3235\phantom{$^{\dagger}$} (.4678) & 63 & $0.47$\\
\textsc{car-alt} & .2944 (.2875) & \textbf{.2661}$^*$ (.2819) & 38042 & $0.00022$ & .2931 (.2966) & .2798\phantom{$^{\dagger}$} (.2837) & 15927 & $0.0068$\\
\midrule
\textsc{synthetic-mnl} & .1513 (.1865) & \textbf{.1512}$^*$ (.1863) & 6854099 & $5.2 \times 10^{-7}$ & .1513 (.1865) & \textbf{.1512}$^{\dagger}$ (.1862) & 8084911 & $4.4 \times 10^{-5}$\\
\textsc{synthetic-lcl} & .1360 (.1684) & \textbf{.1357}$^*$ (.1683) & 5928387 & $< 10^{-16}$ & .1360 (.1684) & .1358$^{\dagger}$ (.1683) & 5823072 & $< 10^{-16}$\\
\textsc{wiki-talk} & .2942 (.2914) & \textbf{.2620}$^*$ (.2736) & 3104783 & $< 10^{-16}$ & .3029 (.2989) & .2689$^{\dagger}$ (.2783) & 2826059 & $< 10^{-16}$\\
\textsc{reddit-hyperlink} & .2859 (.2611) & .2742$^*$ (.2576) & 5816298 & $< 10^{-16}$ & .2759 (.2566) & \textbf{.2721}$^{\dagger}$ (.2571) & 5223683 & $1.2 \times 10^{-6}$\\
\textsc{bitcoin-alpha} & .2724 (.3246) & .2504$^*$ (.3106) & 111647 & $< 10^{-16}$ & .2637 (.3248) & \textbf{.2502}$^{\dagger}$ (.3145) & 94940 & $< 10^{-16}$\\
\textsc{bitcoin-otc} & .1891 (.2756) & .1440$^*$ (.2385) & 166792 & $< 10^{-16}$ & .1732 (.2663) & \textbf{.1404}$^{\dagger}$ (.2322) & 170182 & $< 10^{-16}$\\
\textsc{sms-a} & .2825 (.3250) & \textbf{.2640}\phantom{$^*$} (.3204) & 44703 & $0.0028$ & .2823 (.3273) & .2777\phantom{$^{\dagger}$} (.3292) & 33843 & $0.56$\\
\textsc{sms-b} & .3045 (.3419) & \textbf{.2857}\phantom{$^*$} (.3331) & 139666 & $0.0099$ & .3042 (.3434) & .2904\phantom{$^{\dagger}$} (.3363) & 94000 & $0.0097$\\
\textsc{sms-c} & .3115 (.3455) & .3145\phantom{$^*$} (.3536) & 13741 & $0.28$ & \textbf{.3095} (.3479) & .3209\phantom{$^{\dagger}$} (.3586) & 8614 & $0.013$\\
\textsc{email-enron} & .1265 (.2068) & .1235$^*$ (.2117) & 1135919 & $3.3 \times 10^{-16}$ & .1191 (.1979) & \textbf{.1168}$^{\dagger}$ (.2014) & 1153907 & $1.2 \times 10^{-9}$\\
\textsc{email-eu} & .2683 (.3021) & .2635\phantom{$^*$} (.3016) & 1304640 & $0.051$ & .2756 (.3054) & \textbf{.2623}$^{\dagger}$ (.2986) & 1126935 & $4.3 \times 10^{-8}$\\
\textsc{email-w3c} & .1332 (.2070) & .1145$^*$ (.1771) & 35779 & $6.8 \times 10^{-9}$ & .1035 (.1714) & \textbf{.1006}\phantom{$^{\dagger}$} (.1662) & 33907 & $0.0014$\\
\textsc{facebook-wall} & .2176 (.2895) & \textbf{.2037}$^*$ (.2812) & 4916288 & $< 10^{-16}$ & .2170 (.2888) & .2102$^{\dagger}$ (.2868) & 4032776 & $1.2 \times 10^{-8}$\\
\textsc{college-msg} & .1850 (.2726) & \textbf{.1726}$^*$ (.2634) & 57859 & $1.6 \times 10^{-5}$ & .1839 (.2742) & .1727$^{\dagger}$ (.2606) & 63634 & $6.5 \times 10^{-6}$\\
\textsc{mathoverflow} & .1365 (.2486) & .1131$^*$ (.2171) & 42555886 & $< 10^{-16}$ & .1136 (.2152) & \textbf{.1097}$^{\dagger}$ (.2114) & 33272583 & $< 10^{-16}$\\
\midrule
\multicolumn{5}{l}{$^*$\footnotesize{Significant two-sided Wilcoxon signed-rank test vs.~MNL ($p<0.001$)}}\\
\multicolumn{5}{l}{$^{\dagger}$\footnotesize{Significant two-sided Wilcoxon signed-rank test vs.~mixed logit ($p<0.001$)}}\\
\end{tabular}
\end{table*}

\subsection{Results}\label{sec:results}
Our analysis of these datasets focuses on two questions: whether significant linear feature context effects appear in practice and if so, how we can identify and interpret them using our models. 

\xhdr{Binned MNLs for visualizing feature context effect} 
As a first step towards identifying whether linear context effects occur in datasets, we bin the samples of each dataset according to the mean values of each feature in the choice set. We then fit MNLs within each bin to examine whether the preference coefficients of features vary as the mean choice set features vary. \Cref{fig:context_effect_example} shows two clear linear (with the respect to the log-transformed feature) context effects in \textsc{mathoverflow}: (1) as the mean in-degree of the choice set increases, so does the shared neighbors preference coefficient and (2) the shared neighbors coefficient decreases in choice sets with higher mean shared neighbors. 
Colloquially, (1) close ties are a stronger predictor of new connections when selecting between a set of popular individuals and (2) common connections matter less when choosing from a closely connected group. The different intercepts of these two effects in \textsc{mathoverflow} also motivate decomposing the LCL to the DLCL. In addition, this figure shows evidence of non-linear context effects in \textsc{email-enron}, although we still model this with a linear effect.

\xhdr{Evaluating model fit}
Having seen in this example that context effects may be worth capturing, we compare the two models we introduce (LCL and DLCL) to the traditional choice models they subsume (MNL and mixed logit, respectively) with likelihood-ratio tests. By Wilks' theorem \cite{wilks1938large}, under the null hypothesis (i.e., that LCL/DLCL do not describe the data better than MNL/mixed logit) the distribution of twice the difference in the log-likelihoods of the models approaches a $\chi^2$-distribution with degrees of freedom equal to the difference in the number of parameters ($d^2$) as the number of samples grows. Note that this result relies on i.i.d. observations, which may not be the case in real datasets, so the test statistics should be interpreted with care. Additionally, our datasets are quite large, resulting in very small $p$-values --- this does not say anything about the effect size, only that we can be confident that some non-zero context effect is occurring. To correct for multiple hypotheses, we use $p<0.001$ as our significance threshold. 

\Cref{tbl:nll} shows the total NLL of every dataset under the four models, along with markers indicating the significant likelihood-ratio tests (the test statistics and $p$-values of each can be found in \Cref{sec:empirical_appendix}). The bolded entries indicate the best likelihood for each dataset. Validating our approach, the best likelihood for the \textsc{synthetic-mnl} dataset is achieved by MNL\footnote{Note that the true LCL optimum would have better likelihood than MNL, since LCL subsumes MNL, but the complexity introduced by additional parameters means that LCL does not beat MNL within the 500 training epochs.} and the best likelihood for \textsc{synthetic-lcl} is achieved by LCL. Moreover, the likelihood-ratio tests behave as we should expect on the two synthetic datasets: significant for \textsc{synthetic-lcl} ($p < 10^{-16}$), insignificant for \textsc{synthetic-mnl} ($p=1.0$). 
On the real network datasets, all likelihood-ratio tests are significant (all with $p<10^{-9}$), indicating that feature context effects are occurring. In the general choice datasets, \textsc{expedia} ($p < 10^{-16}$), \textsc{district-smart} ($p < 10^{-16}$), \textsc{sushi} ($p = 1.6\times 10^{-7}$), and \textsc{car-alt} ($p < 10^{-16}$) have significant context effects according to the LCL likelihood-ratio test. The \textsc{district} ($p=1.0$), \textsc{car-a} ($p=0.44$), and \textsc{car-b} ($p=1.0$) datasets do not have significant context effects according to the LCL likelihood-ratio test. 

The relative likelihoods of the models give us insight into what types of effects are more present in each dataset: LCL and DLCL both capture how the preference coefficients of features vary as a linear function of the mean choice set features, while mixed logit and DLCL both capture preferences that vary among subpopulations. Thus, if we see a large gap in likelihood between MNL/mixed logit and LCL/DLCL, but not between MNL and mixed logit or between LCL and DLCL, then we have substantial evidence for linear context effects, but not subpopulations with heterogeneous preferences (an alternative mechanism for IIA violations). As we would expect, this is exactly the pattern that appears for \textsc{synthetic-lcl}. This pattern also appears in \textsc{sms-a}, \textsc{sms-b}, \textsc{expedia}, and \textsc{car-alt}. More commonly, datasets show evidence of both linear context effects and heterogeneous subpopulations (for instance, both of these phenomena are pronounced in \textsc{mathoverflow}). 

\xhdr{Evaluating predictive power}
The likelihood-ratio tests provide strong evidence that feature context effects occur in many real-world choice datasets. A related question is whether capturing them improves out-of-sample prediction performance. To address this question, we measure the mean relative rank of the true selected item in the output ranking of each method.
More specifically, we define the \emph{relative rank} of an item $i$ to be its index when the choice set $C$ is sorted in descending probability order (with ties resolved by taking the mean of all possible indices), divided by $\lvert C \rvert-1$. If the true selected item has the highest probability, this results in a relative rank of $0$, and if it has the lowest probability, it has a relative rank of $1$. The mean relative rank over the test set is a measure of how good the model's predictions are, from 0 (best possible) to 1 (worst possible). We use this measure rather than \emph{mean reciprocal rank}, which is common in evaluating preference learning methods, because it suffers from the undesirable property of being a mean of ordinal values and can therefore produce non-intuitive results~\cite{fuhr2018some}. 
\Cref{tbl:prediction} reports the complete mean relative rank results.

In order to test whether the differences in mean relative ranks of the LCL (resp.~DLCL) and MNL (resp.~mixed logit) are significant, we apply a two-sided Wilcoxon signed-rank test~\cite{wilcoxon1945individual} (the differences in relative rank are not normally distributed, especially in pairwise comparison data, so we do not use a paired $t$-test), as implemented by \texttt{scipy}~\cite{virtanen2020scipy}. 
In the general choice datasets, LCL has significantly better predictive performance in \textsc{expedia} ($p < 10^{-16}$), and \textsc{car-alt} ($p=2.2\times 10^{-4}$), while DLCL is only significantly better than mixed logit in \textsc{expedia} ($p = 3.6 \times 10^{-12}$).
In all of the real networks except \textsc{email-eu} ($p=0.34$) and the three SMS datasets ($p=0.0028, 0.0099, 0.28$, respectively), LCL has significantly better predictive performance than MNL (all $p < 10^{-4}$).
On these datasets, DLCL has significantly better predictive performance than mixed logit in all network datasets (all $p < 0.0001$) except \textsc{sms-a} ($p=0.56$), \textsc{sms-b} ($p=0.0097$), \textsc{sms-c} ($p=0.013$) and \textsc{email-w3c} ($p=0.0014$). 
See \Cref{tbl:prediction} for detailed results, including test statistics.

These results illustrate that in almost all of the datasets in which context effects appear to be present (according to likelihood-ratio tests), accounting for them results in improved predictive power. In some cases, the improvement can be quite large, despite the fact that the models account for subtle effects: for example, in \textsc{bitcoin-otc}, the mean relative rank is $24\%$ lower in LCL than in MNL. In other cases, the improvement is more modest, as in \textsc{facebook-wall} ($6\%$ lower).

\xhdr{Interpreting learned models on general choice datasets}
The previous analyses of model fit and predictive power indicate that linear context effects are indeed a significant factor in our choice datasets. In this section, we investigate what these effects are and show how our models can be interpreted to discover choice behaviors indicated by the datasets. We focus on the LCL because of its simpler structure and convex objective (we can be more confident in qualitative analyses, knowing that our optimization procedure finds a (near) minimizer).

Recall that the learned context effect matrix $A$ contains entries $A_{pq}$ for each pair of features $p, q$. When $A_{pq}$ is positive, this means that higher values of $q$ in the choice set increase the preference coefficient of $p$ (and negative values indicate a decrease). In this notation, the \emph{column} is the feature \emph{exerting} the effect and the \emph{row} is the feature \emph{influenced} by the effect.

For the general choice datasets, we pick out three datasets for detailed examination: \textsc{expedia}, \textsc{car-alt}, and \textsc{sushi}. Both \textsc{expedia} and \textsc{car-alt} show large context effects that significantly improve predictive power. However, both of these datasets have choice sets that correlate with user preferences (discussed in more detail below), which likely contributes to these effects. On the other hand, choice sets in \textsc{sushi} are independent of respondent preferences~\cite{kamishima2003nantonac} and we also see significant differences in model likelihoods. 
The five context effects with largest magnitude in each dataset are shown in \Cref{tbl:expedia_biggest,tbl:caralt_biggest,tbl:sushi_biggest}. Note that features are all standardized, so picking the largest entries of $A$ is meaningful. 

While the magnitude of the effect in $A$ tells us the \emph{size} of the learned context effect, it does not indicate how \emph{significant} the effect is in determining choice. To measure this, we perform likelihood-ratio tests (against MNL) on a constrained LCL in which all but one of the entries in $A$ is set to zero. This allows us to determine how much an individual context effect contributes to the overall choice process. Note that the negative log-likelihood function of the LCL remains convex under this constraint, so we can still estimate the model well.\footnote{This demonstrates a useful application of the LCL that is similar to other sparse regressions or feature selection methods: we can enforce a candidate sparsity pattern on the context effect matrix, train the model, and determine the significance of that subset of effects.} We use the same hyperparameters and training procedure as before. The tables show the learned magnitude of the context effect in this contrained model, denoted $\overline{A}_{pq}$. Additionally, we display the likelihood-ratio test statsitic (LRT) and $p$-value in each table, indicating the statistical significance of the feature context effect.  

\begin{table}[h]
\caption{Five largest context effects in \textsc{sushi}.}
\label{tbl:sushi_biggest}
\begin{tabular}{lrrrr}
\toprule
\textbf{Effect ($q$ on $p$)} & \textbf{$A_{pq}$} & \textbf{$\overline{A}_{pq}$} & \textbf{LRT} & $p$-value\\
\midrule
\emph{popularity} on \emph{popularity} & $-0.28$ & $-0.11$ & $3.0$ & $0.081$\\
\emph{availability} on \emph{is maki} & $0.24$ & $0.04$ & $0.39$ & $0.53$\\
\emph{oiliness} on \emph{oiliness} & $-0.20$ & $-0.26$ & $23$ & $1.5 \times 10^{-6}$\\
\emph{popularity} on \emph{availability} & $0.19$ & $0.09$ & $2.3$ & $0.13$\\
\emph{availability} on \emph{oiliness} & $-0.18$ & $-0.04$ & $0.74$ & $0.39$\\
\bottomrule
\end{tabular}
\end{table}

First, we examine the \textsc{sushi} dataset, which benefits from randomly chosen choice sets. By far the most significant effect is that respondents given more oily sushi options showed a stronger aversion to oily sushi ($p = 1.5 \times 10^{-6}$). The randomization of choice sets allows us to hypothesize that this is a true causal effect: that too much oiliness on the menu makes oily foods less appealing, which could be an example of the similarity effect.
The other context effects with largest magnitude in $A$ are not significant on their own. 
Notice that the magnitudes of the other effects decrease significantly in the constrained model with little impact on the likelihood, a sure sign of null effects --- in constrast, the oiliness aversion effect \emph{increases} in magnitude in the constrained model. This example demonstrates why it is vital to not only consider learned parameters but
to also understand potential context effects of interest in isolation.
Post-selection inference for sparse models is another option for similar analyses~\cite{lee2016exact,taylor2018post}.

\begin{table}[h]
\caption{Five largest context effects in \textsc{expedia}.}
\label{tbl:expedia_biggest}
\begin{tabular}{lrrrr}
\toprule
\textbf{Effect ($q$ on $p$)} & $A_{pq}$ & \textbf{$\overline{A}_{pq}$} & \textbf{LRT} & $p$-value\\
\midrule
\emph{location score} on \emph{price} & $-0.47$ & $-0.13$ & $10$ & $0.002$\\
\emph{on promotion} on \emph{price} & $0.27$ & $0.13$ & $17$ & $3.5 \times 10^{-5}$\\
\emph{review score} on \emph{price} & $-0.19$ & $-0.13$ & $29$ & $8.6 \times 10^{-8}$\\
\emph{star rating} on \emph{price} & $0.15$ & $0.20$ & $65$ & $6.0 \times 10^{-16}$\\
\emph{price} on \emph{star rating} & $0.10$ & $0.03$ & $4.0$ & $0.046$\\
\bottomrule
\end{tabular}
\end{table}

In \textsc{expedia}, four of the of five the largest-magnitude effects are highly significant in isolation. The largest effect in the full model is a decrease in willingness to pay (i.e., cheaper options are more preferred) when the mean location score of the choice set is high ($p=0.002$). Additionally, if many of the options are marked as ``on promotion,'' people seem more willing to book higher priced hotels ($p=3.5 \times 10^{-5}$). Interestingly, when the available hotels tend to be well-reviewed by other Expedia users, people are more price-averse ($p=8.6 \times 10^{-8}$), but they are less price-averse when the available hotels tend to have high star ratings ($p=6.0 \times 10^{-16}$). This may be because people searching for five-star hotels are not looking for the cheapest options, whereas people searching for well-reviewed hotels are looking for good deals.\footnote{The dataset does not include this information, only the location, length of stay, booking window, adult/children count, and room count of the search. Expedia does allow for filtering by review, star rating, etc.~after a search is performed, but it is not clear whether the dataset includes searches that were subsequently filtered.} When interpreting these effects, it is important to keep in mind that the choice sets in \textsc{expedia} may be influenced by user preferences to begin with, so we cannot determine whether the effects are causal. Nonetheless, the learned LCL model lends as much insight into user behavior as we could hope from found choice data and could motivate a randomized controlled trial aimed at determining causal effects. It also illustrates an important point to keep in mind when using choice data from recommender systems: choice sets are not independent from preferences.

\begin{table}[h]
\caption{Five largest context effects in \textsc{car-alt}.}
\label{tbl:caralt_biggest}
\begin{tabular}{lrrrr}
\toprule
\textbf{Effect ($q$ on $p$)} & $A_{pq}$ & \textbf{$\overline{A}_{pq}$} & \textbf{LRT} & $p$-value\\
\midrule
\emph{truck} on \emph{truck} & $1.06$ & $0.83$ & $239$ & $< 10^{-16}$\\
\emph{van} on \emph{van} & $0.94$ & $0.97$ & $309$ & $< 10^{-16}$\\
\emph{suv} on \emph{station wagon} & $0.89$ & $0.98$ & $-0.21$ & $1.0$\\
\emph{station wagon} on \emph{station wagon} & $0.88$ & $0.93$ & $153$ & $< 10^{-16}$\\
\emph{sports car} on \emph{station wagon} & $0.86$ & $0.96$ & $-0.21$ & $1.0$\\
\bottomrule
\end{tabular}
\end{table}

In \textsc{car-alt}, the three of the largest context effects are positive self-effects of the \emph{truck}, \emph{van}, and \emph{station wagon} body types (the other possible types are \emph{car}, \emph{sports car}, and \emph{SUV}). These effects are highly significant according to the constrained LCL (all $p< 10^{-16}$). These are the least preferred (and ``non-sporty'') body types, with base preference coefficients of $-0.49$, $-0.13$, and $-0.23$, respectively (compared to $0.58$ for \emph{sports car} and $0.60$ for \emph{SUV}; \emph{car} is coded as the lack of any of these binary indicators, and therefore has an effective preference coefficient of $0.00$). The high significance of these effects is likely due to the fact that the surveys were designed to have vehicles with ``body types and prices [...] that were similar (but not identical) to the household’s description of their next intended vehicle purchase'' \cite{brownstone1996transactions}. For example, choice sets with more trucks were offered to people who stated they wanted a truck; we observe the result of this biased surveying, that in choice sets with more trucks, people are more likely to prefer trucks. Our model (and any other model that operates on observational choice data) can only identify correlations. This example illustrates another potential use of the LCL: discovering choice set assignment bias. The fact that choice sets were crafted to align with respondent prefrences in \textsc{car-alt} is not mentioned in an influential discrete choice paper using the dataset~\cite{mcfadden2000mixed}. We discovered the choice set assignment mechanism after the strong LCL self-effects caught our attention and motivated us to track down the original dataset description. Thus, the LCL brought an important (and previously overlooked) aspect of a dataset to light. 

\begin{figure}[!h]
\includegraphics[width=0.93\columnwidth]{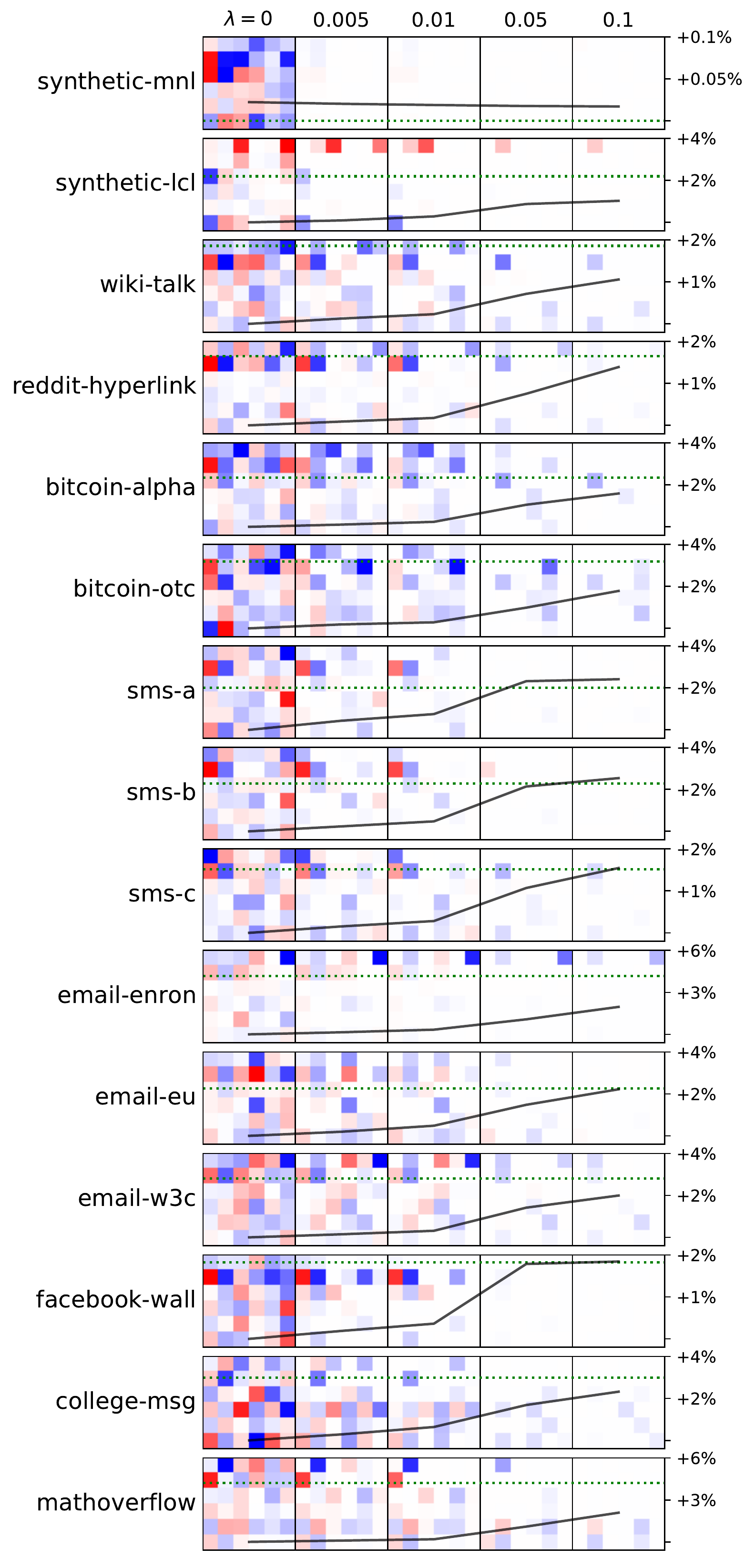}
\caption{Effect of $L_1$ regularization on the context effect matrix $A$ in the LCL. The parameter $\lambda$ (increasing left to right) controls the strength of the regularization. Each box visualizes the learned matrix $A$ (blue = negative, red = positive, white = zero; consistent color scales within but not between rows) at the regularization level specified by the column. The black line tracks the total NLL of the LCL (the \% on the y-axes is relative to the NLL of the best model plotted for that dataset). The dotted green line is the significance threshold of a likelihood-ratio test against an MNL ($p<0.001$; black line below the threshold means the LCL is a significantly better fit than MNL).}
\label{fig:l1}
\end{figure}

\begin{figure}[t]
\includegraphics[width=\columnwidth]{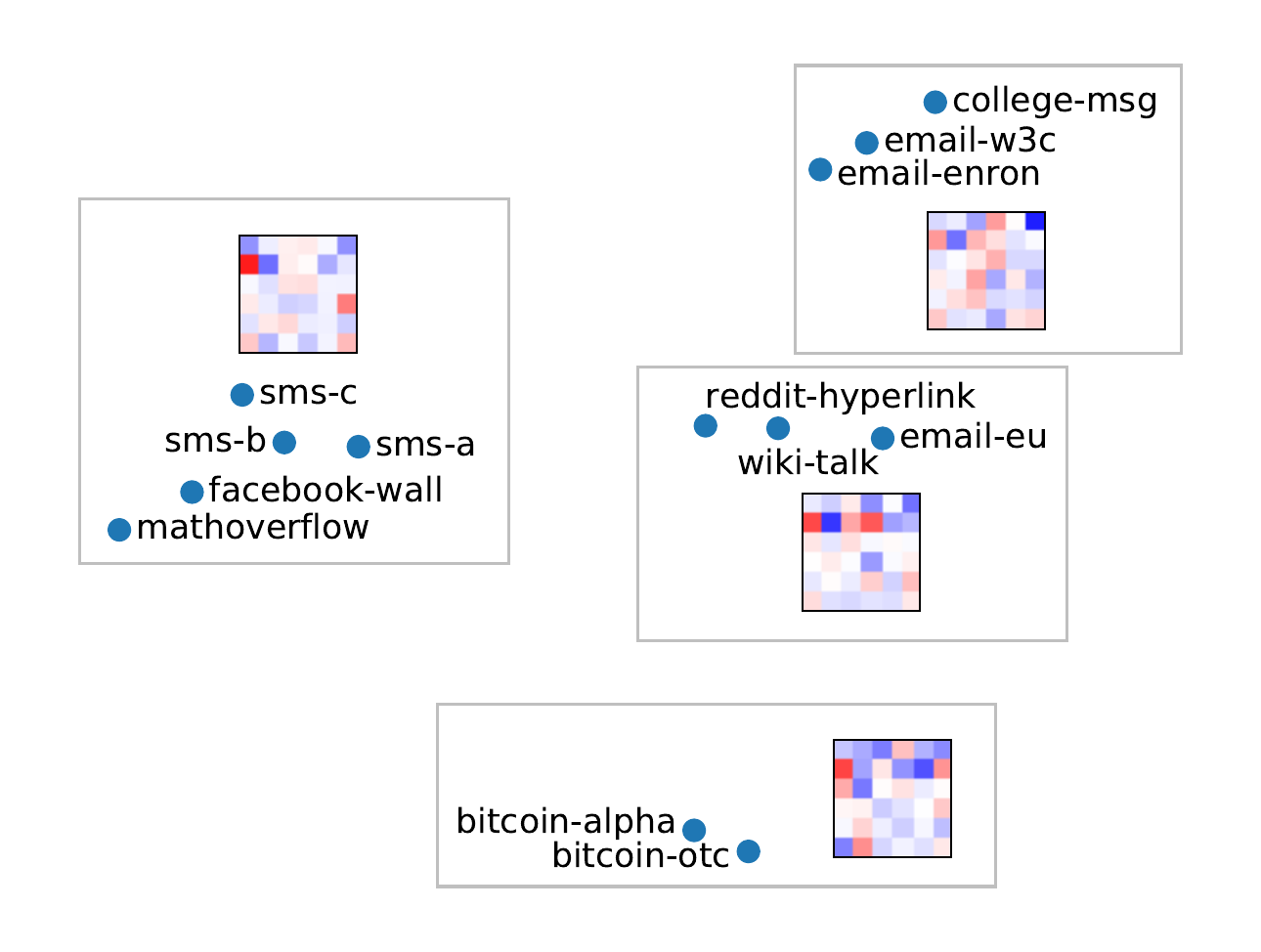}
\caption{t-SNE embedding of the learned LCL context effect matrices $A$ of the thirteen real network datasets. Each $6 \times 6$ matrix was vectorized and scaled to have unit norm before being embedded in two dimensions. The mean matrix $A$ in each of the four clusters is shown, with blue representing negative entries and red representing positive entries.}
\label{fig:tsne}
\end{figure}

\xhdr{Interpreting learned models on network growth datasets}
We take a different approach to examining significant context effects in the network datasets to showcase another useful application of the LCL. To visualize how much different effects influence choice in the network datasets, we apply $L_1$ regularization of varying strength to the LCL matrix $A$ during training (using the same hyperparameters for Adam), which encourages sparsity (\Cref{fig:l1}). We also visualize how this increasing sparsity impacts the likelihood of the model, and at what point the likelihood becomes no better than MNL according to a likelihood-ratio test.

\Cref{fig:l1} reveals several important effects shared by multiple datasets. For example, in \textsc{mathoverflow}, \textsc{facebook-wall}, \textsc{sms-a}, \textsc{sms-b}, \textsc{reddit-hyperlink}, feature 1 (in-degree) has a positive effect on the coefficient of feature 2 (shared neighbors). Colloquially, the data indicate that close connections matter more when choosing from a popular group. Another common effect appears in \textsc{email-enron} and \textsc{email-w3c}, a negative effect of feature 6 (reciprocal recency) on feature 1 (in-degree): high-volume email recipients are less likely to be targeted when the sender's inbox is full of recent messages from other potential targets. 

In both of these examples, when increasing regularization causes those entries of $A$ to go to 0, we see a jump in the likelihood, indicating that these are important effects to capture (note that we plot NLL, so lower is better). Additionally, we see in the top row how a dataset with no context effects (\textsc{synthetic-mnl}) behaves: $A$ immediately goes to 0 when any $L_1$ regularization is applied, without any worsening of the dataset likelihood.

Motivated by observing these shared effects across network datasets, we performed a t-SNE~\cite{maaten2008visualizing} embedding of the learned $A$ matrices (\Cref{fig:tsne}). As visualized in the embedding, datasets coming from similar domains tend to have similar context effects.

\section{Discussion}\label{sec:discussion}

Discovering intuitive context effects from choice data using our models has a number of possible applications. In recommender systems, insight into context effects could inform the set of options suggested to the user. Our findings could also serve as motivation for more controlled investigation of particular context effects in economics or psychology. A central contribution of our work is showing how intuitive and general context effects can be automatically recovered from observed choices and tested for significance. While we focused on linear context effects for simplicity, some of our datasets (e.g., \textsc{email-enron} in \Cref{fig:context_effect_example}) show evidence of non-linearity. A method of capturing these more complex effects (while retaining ease of training and interpretation) would be a valuable future contribution.

Our network data analysis demonstrates the presence of context effects in network growth, 
which can aid network growth modeling within the fields of network science and social network analysis.
We focused on triadic closure, which naturally provides small choice sets in which to observe the effect of context. Incorporating context effects in other modes of network growth (such as connections with unrelated nodes) is an interesting avenue for future research. One challenge this poses is that global modes of edge formation have much larger choice sets, requiring the use of negative sampling to preserve effective estimation \cite{overgoor2020scaling}. It is not immediately clear how to combine negative sampling with context effects over a large choice set.

One limitation of our approach (and any method operating on observational choice data) is that the generalizability of the effects we identify is constrained by correlations present in the data. For example, choice sets arising from recommender systems (such as \textsc{expedia}) are correlated with the preferences of their users by design. This makes it difficult to distinguish between how a user's preferences are affected by the choice set and how the user's preferences influence the choice set, particularly if we do not have access to the recommender system's process of choice set determination. In other cases, we might have random choice sets (as in \textsc{sushi}) or we might have no information about how choice sets are determined. In the latter case, our approach could also be used to find evidence of choice sets targeted at chooser preferences: if we observe many positive self-effects (e.g., preference for trucks is higher in choice sets with more trucks, as we saw in \textsc{car-alt}), this could mean that people's preferences are being catered to in their options. In some cases, this could be undesirable (e.g., if the party presenting individuals with options is supposed to be impartial), and our methods could provide a mechanism for identifying unwanted interventions.

Another challenging avenue for future work would be a method of discovering more complex relational context effects from choice data. The feature context effects we study describe the influence of one feature on another, but some of the traditional context effects studied in economics and psychology (e.g., the compromise effect) are based on the relationship between the features of several items. These effects are typically studied with targeted models that are hand-crafted to capture the desired effect. A general method of encoding and learning relational context effects could enable the discovery of new complex effects not yet envisioned by choice theorists, but nonetheless present in choice datasets. 

\section*{Acknowledgments}
This research was supported by ARO MURI, ARO Award W911NF19-1-0057, NSF Award DMS-1830274, and JP Morgan Chase \& Co. We thank Jan Overgoor and Johan Ugander for helpful conversations and Sophia Franco for naming suggestions.

\bibliographystyle{ACM-Reference-Format}
\bibliography{references}


\begin{thebibliography}{101}


\ifx \showCODEN    \undefined \def \showCODEN     #1{\unskip}     \fi
\ifx \showDOI      \undefined \def \showDOI       #1{#1}\fi
\ifx \showISBNx    \undefined \def \showISBNx     #1{\unskip}     \fi
\ifx \showISBNxiii \undefined \def \showISBNxiii  #1{\unskip}     \fi
\ifx \showISSN     \undefined \def \showISSN      #1{\unskip}     \fi
\ifx \showLCCN     \undefined \def \showLCCN      #1{\unskip}     \fi
\ifx \shownote     \undefined \def \shownote      #1{#1}          \fi
\ifx \showarticletitle \undefined \def \showarticletitle #1{#1}   \fi
\ifx \showURL      \undefined \def \showURL       {\relax}        \fi
\providecommand\bibfield[2]{#2}
\providecommand\bibinfo[2]{#2}
\providecommand\natexlab[1]{#1}
\providecommand\showeprint[2][]{arXiv:#2}

\bibitem[\protect\citeauthoryear{Abbasnejad, Sanner, Bonilla, and
  Poupart}{Abbasnejad et~al\mbox{.}}{2013}]%
        {abbasnejad2013learning}
\bibfield{author}{\bibinfo{person}{Ehsan Abbasnejad}, \bibinfo{person}{Scott
  Sanner}, \bibinfo{person}{Edwin~V Bonilla}, {and} \bibinfo{person}{Pascal
  Poupart}.} \bibinfo{year}{2013}\natexlab{}.
\newblock \showarticletitle{Learning community-based preferences via dirichlet
  process mixtures of gaussian processes}. In
  \bibinfo{booktitle}{\emph{Twenty-third International Joint Conference on
  Artificial Intelligence}}.
\newblock


\bibitem[\protect\citeauthoryear{Batsell and Polking}{Batsell and
  Polking}{1985}]%
        {batsell1985new}
\bibfield{author}{\bibinfo{person}{Richard~R Batsell} {and}
  \bibinfo{person}{John~C Polking}.} \bibinfo{year}{1985}\natexlab{}.
\newblock \showarticletitle{A new class of market share models}.
\newblock \bibinfo{journal}{\emph{Marketing Science}} \bibinfo{volume}{4},
  \bibinfo{number}{3} (\bibinfo{year}{1985}), \bibinfo{pages}{177--198}.
\newblock


\bibitem[\protect\citeauthoryear{Ben-Akiva and Lerman}{Ben-Akiva and
  Lerman}{2018}]%
        {ben2018discrete}
\bibfield{author}{\bibinfo{person}{Moshe Ben-Akiva} {and}
  \bibinfo{person}{Steven~R Lerman}.} \bibinfo{year}{2018}\natexlab{}.
\newblock \bibinfo{booktitle}{\emph{Discrete choice analysis: theory and
  application to travel demand}}.
\newblock \bibinfo{publisher}{Transportation Studies}.
\newblock


\bibitem[\protect\citeauthoryear{Benson, Abebe, Schaub, Jadbabaie, and
  Kleinberg}{Benson et~al\mbox{.}}{2018a}]%
        {benson2018simplicial}
\bibfield{author}{\bibinfo{person}{Austin~R Benson}, \bibinfo{person}{Rediet
  Abebe}, \bibinfo{person}{Michael~T Schaub}, \bibinfo{person}{Ali Jadbabaie},
  {and} \bibinfo{person}{Jon Kleinberg}.} \bibinfo{year}{2018}\natexlab{a}.
\newblock \showarticletitle{Simplicial closure and higher-order link
  prediction}.
\newblock \bibinfo{journal}{\emph{Proceedings of the National Academy of
  Sciences}} \bibinfo{volume}{115}, \bibinfo{number}{48}
  (\bibinfo{year}{2018}), \bibinfo{pages}{E11221--E11230}.
\newblock


\bibitem[\protect\citeauthoryear{Benson and Kleinberg}{Benson and
  Kleinberg}{2018}]%
        {benson2018found}
\bibfield{author}{\bibinfo{person}{Austin~R Benson} {and} \bibinfo{person}{Jon
  Kleinberg}.} \bibinfo{year}{2018}\natexlab{}.
\newblock \showarticletitle{Found graph data and planted vertex covers}. In
  \bibinfo{booktitle}{\emph{Advances in Neural Information Processing
  Systems}}. \bibinfo{pages}{1356--1367}.
\newblock


\bibitem[\protect\citeauthoryear{Benson, Kumar, and Tomkins}{Benson
  et~al\mbox{.}}{2016}]%
        {benson2016relevance}
\bibfield{author}{\bibinfo{person}{Austin~R Benson}, \bibinfo{person}{Ravi
  Kumar}, {and} \bibinfo{person}{Andrew Tomkins}.}
  \bibinfo{year}{2016}\natexlab{}.
\newblock \showarticletitle{On the relevance of irrelevant alternatives}. In
  \bibinfo{booktitle}{\emph{Proceedings of the 25th International Conference on
  World Wide Web}}. \bibinfo{pages}{963--973}.
\newblock


\bibitem[\protect\citeauthoryear{Benson, Kumar, and Tomkins}{Benson
  et~al\mbox{.}}{2018b}]%
        {benson2018discrete}
\bibfield{author}{\bibinfo{person}{Austin~R Benson}, \bibinfo{person}{Ravi
  Kumar}, {and} \bibinfo{person}{Andrew Tomkins}.}
  \bibinfo{year}{2018}\natexlab{b}.
\newblock \showarticletitle{A discrete choice model for subset selection}. In
  \bibinfo{booktitle}{\emph{Proceedings of the Eleventh ACM International
  Conference on Web Search and Data Mining}}. \bibinfo{pages}{37--45}.
\newblock


\bibitem[\protect\citeauthoryear{Bordalo, Gennaioli, and Shleifer}{Bordalo
  et~al\mbox{.}}{2012}]%
        {bordalo2012salience}
\bibfield{author}{\bibinfo{person}{Pedro Bordalo}, \bibinfo{person}{Nicola
  Gennaioli}, {and} \bibinfo{person}{Andrei Shleifer}.}
  \bibinfo{year}{2012}\natexlab{}.
\newblock \showarticletitle{Salience theory of choice under risk}.
\newblock \bibinfo{journal}{\emph{The Quarterly Journal of Economics}}
  \bibinfo{volume}{127}, \bibinfo{number}{3} (\bibinfo{year}{2012}),
  \bibinfo{pages}{1243--1285}.
\newblock


\bibitem[\protect\citeauthoryear{Bordalo, Gennaioli, and Shleifer}{Bordalo
  et~al\mbox{.}}{2013}]%
        {bordalo2013salience}
\bibfield{author}{\bibinfo{person}{Pedro Bordalo}, \bibinfo{person}{Nicola
  Gennaioli}, {and} \bibinfo{person}{Andrei Shleifer}.}
  \bibinfo{year}{2013}\natexlab{}.
\newblock \showarticletitle{Salience and consumer choice}.
\newblock \bibinfo{journal}{\emph{Journal of Political Economy}}
  \bibinfo{volume}{121}, \bibinfo{number}{5} (\bibinfo{year}{2013}),
  \bibinfo{pages}{803--843}.
\newblock


\bibitem[\protect\citeauthoryear{Bower and Balzano}{Bower and Balzano}{2020}]%
        {bower2020salient}
\bibfield{author}{\bibinfo{person}{Amanda Bower} {and} \bibinfo{person}{Laura
  Balzano}.} \bibinfo{year}{2020}\natexlab{}.
\newblock \showarticletitle{Preference Modeling with Context-Dependent Salient
  Features}. In \bibinfo{booktitle}{\emph{Proceedings of the 37th International
  Coference on International Conference on Machine Learning}}.
\newblock


\bibitem[\protect\citeauthoryear{Brownstone, Bunch, Golob, and Ren}{Brownstone
  et~al\mbox{.}}{1996}]%
        {brownstone1996transactions}
\bibfield{author}{\bibinfo{person}{David Brownstone}, \bibinfo{person}{David~S
  Bunch}, \bibinfo{person}{Thomas~F Golob}, {and} \bibinfo{person}{Weiping
  Ren}.} \bibinfo{year}{1996}\natexlab{}.
\newblock \showarticletitle{A transactions choice model for forecasting demand
  for alternative-fuel vehicles}.
\newblock \bibinfo{journal}{\emph{Research in Transportation Economics}}
  \bibinfo{number}{4} (\bibinfo{year}{1996}), \bibinfo{pages}{87--129}.
\newblock


\bibitem[\protect\citeauthoryear{Bruch, Feinberg, and Lee}{Bruch
  et~al\mbox{.}}{2016}]%
        {bruch2016extracting}
\bibfield{author}{\bibinfo{person}{Elizabeth Bruch}, \bibinfo{person}{Fred
  Feinberg}, {and} \bibinfo{person}{Kee~Yeun Lee}.}
  \bibinfo{year}{2016}\natexlab{}.
\newblock \showarticletitle{Extracting multistage screening rules from online
  dating activity data}.
\newblock \bibinfo{journal}{\emph{Proceedings of the National Academy of
  Sciences}} \bibinfo{volume}{113}, \bibinfo{number}{38}
  (\bibinfo{year}{2016}), \bibinfo{pages}{10530--10535}.
\newblock


\bibitem[\protect\citeauthoryear{Chen and Joachims}{Chen and Joachims}{2016a}]%
        {chen2016modeling}
\bibfield{author}{\bibinfo{person}{Shuo Chen} {and} \bibinfo{person}{Thorsten
  Joachims}.} \bibinfo{year}{2016}\natexlab{a}.
\newblock \showarticletitle{Modeling intransitivity in matchup and comparison
  data}. In \bibinfo{booktitle}{\emph{Proceedings of the 9th ACM International
  Conference on Web Search and Data Mining}}. \bibinfo{pages}{227--236}.
\newblock


\bibitem[\protect\citeauthoryear{Chen and Joachims}{Chen and Joachims}{2016b}]%
        {chen2016predicting}
\bibfield{author}{\bibinfo{person}{Shuo Chen} {and} \bibinfo{person}{Thorsten
  Joachims}.} \bibinfo{year}{2016}\natexlab{b}.
\newblock \showarticletitle{Predicting matchups and preferences in context}. In
  \bibinfo{booktitle}{\emph{Proceedings of the 22nd ACM SIGKDD International
  Conference on Knowledge Discovery and Data Mining}}.
  \bibinfo{pages}{775--784}.
\newblock


\bibitem[\protect\citeauthoryear{Chierichetti, Kumar, and Tomkins}{Chierichetti
  et~al\mbox{.}}{2018}]%
        {chierichetti2018learning}
\bibfield{author}{\bibinfo{person}{Flavio Chierichetti}, \bibinfo{person}{Ravi
  Kumar}, {and} \bibinfo{person}{Andrew Tomkins}.}
  \bibinfo{year}{2018}\natexlab{}.
\newblock \showarticletitle{Learning a mixture of two multinomial logits}. In
  \bibinfo{booktitle}{\emph{International Conference on Machine Learning}}.
  \bibinfo{pages}{961--969}.
\newblock


\bibitem[\protect\citeauthoryear{Craswell, de~Vries, and Soboroff}{Craswell
  et~al\mbox{.}}{2005}]%
        {Craswell-2005-TREC}
\bibfield{author}{\bibinfo{person}{Nick Craswell}, \bibinfo{person}{Arjen~P de
  Vries}, {and} \bibinfo{person}{Ian Soboroff}.}
  \bibinfo{year}{2005}\natexlab{}.
\newblock \showarticletitle{Overview of the TREC 2005 Enterprise Track}. In
  \bibinfo{booktitle}{\emph{TREC}}, Vol.~\bibinfo{volume}{5}.
  \bibinfo{pages}{199--205}.
\newblock


\bibitem[\protect\citeauthoryear{Dempster, Laird, and Rubin}{Dempster
  et~al\mbox{.}}{1977}]%
        {dempster1977maximum}
\bibfield{author}{\bibinfo{person}{Arthur~P Dempster}, \bibinfo{person}{Nan~M
  Laird}, {and} \bibinfo{person}{Donald~B Rubin}.}
  \bibinfo{year}{1977}\natexlab{}.
\newblock \showarticletitle{Maximum likelihood from incomplete data via the EM
  algorithm}.
\newblock \bibinfo{journal}{\emph{Journal of the Royal Statistical Society:
  Series B (Methodological)}} \bibinfo{volume}{39}, \bibinfo{number}{1}
  (\bibinfo{year}{1977}), \bibinfo{pages}{1--22}.
\newblock


\bibitem[\protect\citeauthoryear{Dow and Endersby}{Dow and Endersby}{2004}]%
        {dow2004multinomial}
\bibfield{author}{\bibinfo{person}{Jay~K Dow} {and} \bibinfo{person}{James~W
  Endersby}.} \bibinfo{year}{2004}\natexlab{}.
\newblock \showarticletitle{Multinomial probit and multinomial logit: a
  comparison of choice models for voting research}.
\newblock \bibinfo{journal}{\emph{Electoral Studies}} \bibinfo{volume}{23},
  \bibinfo{number}{1} (\bibinfo{year}{2004}), \bibinfo{pages}{107--122}.
\newblock


\bibitem[\protect\citeauthoryear{Easley and Kleinberg}{Easley and
  Kleinberg}{2010}]%
        {easley2010networks}
\bibfield{author}{\bibinfo{person}{David Easley} {and} \bibinfo{person}{Jon
  Kleinberg}.} \bibinfo{year}{2010}\natexlab{}.
\newblock \bibinfo{booktitle}{\emph{Networks, crowds, and markets}}.
  Vol.~\bibinfo{volume}{8}.
\newblock \bibinfo{publisher}{Cambridge University Press}.
\newblock


\bibitem[\protect\citeauthoryear{Feinberg, Bruch, Braun, Falk, Fefferman, Feit,
  Helveston, Larremore, McShane, Patania, et~al\mbox{.}}{Feinberg
  et~al\mbox{.}}{2020}]%
        {feinberg2020choices}
\bibfield{author}{\bibinfo{person}{Fred Feinberg}, \bibinfo{person}{Elizabeth
  Bruch}, \bibinfo{person}{Michael Braun}, \bibinfo{person}{Brett~Hemenway
  Falk}, \bibinfo{person}{Nina Fefferman}, \bibinfo{person}{Elea~McDonnell
  Feit}, \bibinfo{person}{John Helveston}, \bibinfo{person}{Daniel Larremore},
  \bibinfo{person}{Blakeley~B McShane}, \bibinfo{person}{Alice Patania},
  {et~al\mbox{.}}} \bibinfo{year}{2020}\natexlab{}.
\newblock \showarticletitle{Choices in networks: a research framework}.
\newblock \bibinfo{journal}{\emph{Marketing Letters}} (\bibinfo{year}{2020}),
  \bibinfo{pages}{1--11}.
\newblock


\bibitem[\protect\citeauthoryear{Fuhr}{Fuhr}{2018}]%
        {fuhr2018some}
\bibfield{author}{\bibinfo{person}{Norbert Fuhr}.}
  \bibinfo{year}{2018}\natexlab{}.
\newblock \showarticletitle{Some common mistakes in IR evaluation, and how they
  can be avoided}. In \bibinfo{booktitle}{\emph{ACM SIGIR Forum}},
  Vol.~\bibinfo{volume}{51}. ACM New York, NY, USA, \bibinfo{pages}{32--41}.
\newblock


\bibitem[\protect\citeauthoryear{Granovetter}{Granovetter}{1977}]%
        {granovetter1977strength}
\bibfield{author}{\bibinfo{person}{Mark~S Granovetter}.}
  \bibinfo{year}{1977}\natexlab{}.
\newblock \showarticletitle{The strength of weak ties}.
\newblock In \bibinfo{booktitle}{\emph{Social Networks}}.
  \bibinfo{publisher}{Elsevier}, \bibinfo{pages}{347--367}.
\newblock


\bibitem[\protect\citeauthoryear{Gr{\"u}n and Leisch}{Gr{\"u}n and
  Leisch}{2008}]%
        {grun2008identifiability}
\bibfield{author}{\bibinfo{person}{Bettina Gr{\"u}n} {and}
  \bibinfo{person}{Friedrich Leisch}.} \bibinfo{year}{2008}\natexlab{}.
\newblock \showarticletitle{Identifiability of finite mixtures of multinomial
  logit models with varying and fixed effects}.
\newblock \bibinfo{journal}{\emph{Journal of Classification}}
  \bibinfo{volume}{25}, \bibinfo{number}{2} (\bibinfo{year}{2008}),
  \bibinfo{pages}{225--247}.
\newblock


\bibitem[\protect\citeauthoryear{Gupta and Porter}{Gupta and Porter}{2020}]%
        {gupta2020mixed}
\bibfield{author}{\bibinfo{person}{Harsh Gupta} {and} \bibinfo{person}{Mason~A.
  Porter}.} \bibinfo{year}{2020}\natexlab{}.
\newblock \bibinfo{title}{Mixed Logit Models and Network Formation}.
\newblock
\newblock
\showeprint[arxiv]{2006.16516}~[cs.SI]


\bibitem[\protect\citeauthoryear{Hastie, Tibshirani, and Friedman}{Hastie
  et~al\mbox{.}}{2009}]%
        {hastie2009elements}
\bibfield{author}{\bibinfo{person}{Trevor Hastie}, \bibinfo{person}{Robert
  Tibshirani}, {and} \bibinfo{person}{Jerome Friedman}.}
  \bibinfo{year}{2009}\natexlab{}.
\newblock \bibinfo{booktitle}{\emph{The elements of statistical learning: data
  mining, inference, and prediction}}.
\newblock \bibinfo{publisher}{Springer Science \& Business Media}.
\newblock


\bibitem[\protect\citeauthoryear{Holland and Leinhardt}{Holland and
  Leinhardt}{1971}]%
        {holland1971transitivity}
\bibfield{author}{\bibinfo{person}{Paul~W Holland} {and}
  \bibinfo{person}{Samuel Leinhardt}.} \bibinfo{year}{1971}\natexlab{}.
\newblock \showarticletitle{Transitivity in structural models of small groups}.
\newblock \bibinfo{journal}{\emph{Comparative group studies}}
  \bibinfo{volume}{2}, \bibinfo{number}{2} (\bibinfo{year}{1971}),
  \bibinfo{pages}{107--124}.
\newblock


\bibitem[\protect\citeauthoryear{Holme and Kim}{Holme and Kim}{2002}]%
        {holme2002growing}
\bibfield{author}{\bibinfo{person}{Petter Holme} {and}
  \bibinfo{person}{Beom~Jun Kim}.} \bibinfo{year}{2002}\natexlab{}.
\newblock \showarticletitle{Growing scale-free networks with tunable
  clustering}.
\newblock \bibinfo{journal}{\emph{Physical Review E}} \bibinfo{volume}{65},
  \bibinfo{number}{2} (\bibinfo{year}{2002}), \bibinfo{pages}{026107}.
\newblock


\bibitem[\protect\citeauthoryear{Howes, Warren, Farmer, El-Deredy, and
  Lewis}{Howes et~al\mbox{.}}{2016}]%
        {howes2016contextual}
\bibfield{author}{\bibinfo{person}{Andrew Howes}, \bibinfo{person}{Paul~A
  Warren}, \bibinfo{person}{George Farmer}, \bibinfo{person}{Wael El-Deredy},
  {and} \bibinfo{person}{Richard~L Lewis}.} \bibinfo{year}{2016}\natexlab{}.
\newblock \showarticletitle{Why contextual preference reversals maximize
  expected value.}
\newblock \bibinfo{journal}{\emph{Psychological Review}} \bibinfo{volume}{123},
  \bibinfo{number}{4} (\bibinfo{year}{2016}), \bibinfo{pages}{368}.
\newblock


\bibitem[\protect\citeauthoryear{Huber, Payne, and Puto}{Huber
  et~al\mbox{.}}{1982}]%
        {huber1982adding}
\bibfield{author}{\bibinfo{person}{Joel Huber}, \bibinfo{person}{John~W Payne},
  {and} \bibinfo{person}{Christopher Puto}.} \bibinfo{year}{1982}\natexlab{}.
\newblock \showarticletitle{Adding asymmetrically dominated alternatives:
  Violations of regularity and the similarity hypothesis}.
\newblock \bibinfo{journal}{\emph{Journal of Consumer Research}}
  \bibinfo{volume}{9}, \bibinfo{number}{1} (\bibinfo{year}{1982}),
  \bibinfo{pages}{90--98}.
\newblock


\bibitem[\protect\citeauthoryear{Icons8}{Icons8}{2020}]%
        {icons8}
\bibfield{author}{\bibinfo{person}{Icons8}.} \bibinfo{year}{2020}\natexlab{}.
\newblock \bibinfo{howpublished}{\url{https://icons8.com}}.
\newblock


\bibitem[\protect\citeauthoryear{Jackson and Rogers}{Jackson and
  Rogers}{2007}]%
        {jackson2007meeting}
\bibfield{author}{\bibinfo{person}{Matthew~O Jackson} {and}
  \bibinfo{person}{Brian~W Rogers}.} \bibinfo{year}{2007}\natexlab{}.
\newblock \showarticletitle{Meeting strangers and friends of friends: How
  random are social networks?}
\newblock \bibinfo{journal}{\emph{American Economic Review}}
  \bibinfo{volume}{97}, \bibinfo{number}{3} (\bibinfo{year}{2007}),
  \bibinfo{pages}{890--915}.
\newblock


\bibitem[\protect\citeauthoryear{Jin, Girvan, and Newman}{Jin
  et~al\mbox{.}}{2001}]%
        {jin2001structure}
\bibfield{author}{\bibinfo{person}{Emily~M Jin}, \bibinfo{person}{Michelle
  Girvan}, {and} \bibinfo{person}{Mark~EJ Newman}.}
  \bibinfo{year}{2001}\natexlab{}.
\newblock \showarticletitle{Structure of growing social networks}.
\newblock \bibinfo{journal}{\emph{Physical Review E}} \bibinfo{volume}{64},
  \bibinfo{number}{4} (\bibinfo{year}{2001}), \bibinfo{pages}{046132}.
\newblock


\bibitem[\protect\citeauthoryear{Kaggle}{Kaggle}{2013}]%
        {kaggle2013expedia}
\bibfield{author}{\bibinfo{person}{Kaggle}.} \bibinfo{year}{2013}\natexlab{}.
\newblock \bibinfo{title}{Personalize Expedia Hotel Searches - ICDM 2013}.
\newblock
  \bibinfo{howpublished}{\url{https://www.kaggle.com/c/expedia-personalized-sort}}.
\newblock


\bibitem[\protect\citeauthoryear{Kamishima}{Kamishima}{2003}]%
        {kamishima2003nantonac}
\bibfield{author}{\bibinfo{person}{Toshihiro Kamishima}.}
  \bibinfo{year}{2003}\natexlab{}.
\newblock \showarticletitle{Nantonac collaborative filtering: recommendation
  based on order responses}. In \bibinfo{booktitle}{\emph{Proceedings of the
  Ninth ACM SIGKDD International Conference on Knowledge Discovery and Data
  Mining}}. \bibinfo{pages}{583--588}.
\newblock


\bibitem[\protect\citeauthoryear{Kaufman, King, and Komisarchik}{Kaufman
  et~al\mbox{.}}{2017}]%
        {kaufman2017measure}
\bibfield{author}{\bibinfo{person}{Aaron Kaufman}, \bibinfo{person}{Gary King},
  {and} \bibinfo{person}{Mayya Komisarchik}.} \bibinfo{year}{2017}\natexlab{}.
\newblock \showarticletitle{How to measure legislative district compactness if
  you only know it when you see it}.
\newblock \bibinfo{journal}{\emph{American Journal of Political Science}}
  (\bibinfo{year}{2017}).
\newblock


\bibitem[\protect\citeauthoryear{Kim, Menzefricke, and Feinberg}{Kim
  et~al\mbox{.}}{2007}]%
        {kim2007capturing}
\bibfield{author}{\bibinfo{person}{Jin~Gyo Kim}, \bibinfo{person}{Ulrich
  Menzefricke}, {and} \bibinfo{person}{Fred~M Feinberg}.}
  \bibinfo{year}{2007}\natexlab{}.
\newblock \showarticletitle{Capturing flexible heterogeneous utility curves: A
  Bayesian spline approach}.
\newblock \bibinfo{journal}{\emph{Management Science}} \bibinfo{volume}{53},
  \bibinfo{number}{2} (\bibinfo{year}{2007}), \bibinfo{pages}{340--354}.
\newblock


\bibitem[\protect\citeauthoryear{Kingma and Ba}{Kingma and Ba}{2015}]%
        {kingma2015adam}
\bibfield{author}{\bibinfo{person}{Diederik~P Kingma} {and}
  \bibinfo{person}{Jimmy Ba}.} \bibinfo{year}{2015}\natexlab{}.
\newblock \showarticletitle{Adam: A method for stochastic optimization}. In
  \bibinfo{booktitle}{\emph{International Conference on Learning
  Representations}}.
\newblock
\newblock
\shownote{\emph{arXiv:1412.6980}.}


\bibitem[\protect\citeauthoryear{Kivetz, Netzer, and Srinivasan}{Kivetz
  et~al\mbox{.}}{2004}]%
        {kivetz2004alternative}
\bibfield{author}{\bibinfo{person}{Ran Kivetz}, \bibinfo{person}{Oded Netzer},
  {and} \bibinfo{person}{V Srinivasan}.} \bibinfo{year}{2004}\natexlab{}.
\newblock \showarticletitle{Alternative models for capturing the compromise
  effect}.
\newblock \bibinfo{journal}{\emph{Journal of Marketing Research}}
  \bibinfo{volume}{41}, \bibinfo{number}{3} (\bibinfo{year}{2004}),
  \bibinfo{pages}{237--257}.
\newblock


\bibitem[\protect\citeauthoryear{Kumar, Hamilton, Leskovec, and Jurafsky}{Kumar
  et~al\mbox{.}}{2018a}]%
        {kumar2018community}
\bibfield{author}{\bibinfo{person}{Srijan Kumar}, \bibinfo{person}{William~L
  Hamilton}, \bibinfo{person}{Jure Leskovec}, {and} \bibinfo{person}{Dan
  Jurafsky}.} \bibinfo{year}{2018}\natexlab{a}.
\newblock \showarticletitle{Community interaction and conflict on the web}. In
  \bibinfo{booktitle}{\emph{Proceedings of the 2018 World Wide Web
  Conference}}. International World Wide Web Conferences Steering Committee,
  \bibinfo{pages}{933--943}.
\newblock


\bibitem[\protect\citeauthoryear{Kumar, Hooi, Makhija, Kumar, Faloutsos, and
  Subrahmanian}{Kumar et~al\mbox{.}}{2018b}]%
        {kumar2018rev2}
\bibfield{author}{\bibinfo{person}{Srijan Kumar}, \bibinfo{person}{Bryan Hooi},
  \bibinfo{person}{Disha Makhija}, \bibinfo{person}{Mohit Kumar},
  \bibinfo{person}{Christos Faloutsos}, {and} \bibinfo{person}{VS
  Subrahmanian}.} \bibinfo{year}{2018}\natexlab{b}.
\newblock \showarticletitle{Rev2: Fraudulent user prediction in rating
  platforms}. In \bibinfo{booktitle}{\emph{Proceedings of the Eleventh ACM
  International Conference on Web Search and Data Mining}}. ACM,
  \bibinfo{pages}{333--341}.
\newblock


\bibitem[\protect\citeauthoryear{Kumar, Spezzano, Subrahmanian, and
  Faloutsos}{Kumar et~al\mbox{.}}{2016}]%
        {kumar2016edge}
\bibfield{author}{\bibinfo{person}{Srijan Kumar}, \bibinfo{person}{Francesca
  Spezzano}, \bibinfo{person}{VS Subrahmanian}, {and} \bibinfo{person}{Christos
  Faloutsos}.} \bibinfo{year}{2016}\natexlab{}.
\newblock \showarticletitle{Edge weight prediction in weighted signed
  networks}. In \bibinfo{booktitle}{\emph{2016 IEEE 16th International
  Conference on Data Mining (ICDM)}}. IEEE, \bibinfo{pages}{221--230}.
\newblock


\bibitem[\protect\citeauthoryear{Lee, Sun, Sun, Taylor, et~al\mbox{.}}{Lee
  et~al\mbox{.}}{2016}]%
        {lee2016exact}
\bibfield{author}{\bibinfo{person}{Jason~D Lee}, \bibinfo{person}{Dennis~L
  Sun}, \bibinfo{person}{Yuekai Sun}, \bibinfo{person}{Jonathan~E Taylor},
  {et~al\mbox{.}}} \bibinfo{year}{2016}\natexlab{}.
\newblock \showarticletitle{Exact post-selection inference, with application to
  the lasso}.
\newblock \bibinfo{journal}{\emph{The Annals of Statistics}}
  \bibinfo{volume}{44}, \bibinfo{number}{3} (\bibinfo{year}{2016}),
  \bibinfo{pages}{907--927}.
\newblock


\bibitem[\protect\citeauthoryear{Leong and Hensher}{Leong and Hensher}{2012}]%
        {leong2012embedding}
\bibfield{author}{\bibinfo{person}{Waiyan Leong} {and}
  \bibinfo{person}{David~Alan Hensher}.} \bibinfo{year}{2012}\natexlab{}.
\newblock \showarticletitle{Embedding decision heuristics in discrete choice
  models: A review}.
\newblock \bibinfo{journal}{\emph{Transport Reviews}} \bibinfo{volume}{32},
  \bibinfo{number}{3} (\bibinfo{year}{2012}), \bibinfo{pages}{313--331}.
\newblock


\bibitem[\protect\citeauthoryear{Leskovec, Huttenlocher, and
  Kleinberg}{Leskovec et~al\mbox{.}}{2010a}]%
        {leskovec2010predicting}
\bibfield{author}{\bibinfo{person}{Jure Leskovec}, \bibinfo{person}{Daniel
  Huttenlocher}, {and} \bibinfo{person}{Jon Kleinberg}.}
  \bibinfo{year}{2010}\natexlab{a}.
\newblock \showarticletitle{Predicting positive and negative links in online
  social networks}. In \bibinfo{booktitle}{\emph{Proceedings of the 19th
  International Conference on World Wide Web}}. \bibinfo{pages}{641--650}.
\newblock


\bibitem[\protect\citeauthoryear{Leskovec, Huttenlocher, and
  Kleinberg}{Leskovec et~al\mbox{.}}{2010b}]%
        {leskovec2010signed}
\bibfield{author}{\bibinfo{person}{Jure Leskovec}, \bibinfo{person}{Daniel
  Huttenlocher}, {and} \bibinfo{person}{Jon Kleinberg}.}
  \bibinfo{year}{2010}\natexlab{b}.
\newblock \showarticletitle{Signed networks in social media}. In
  \bibinfo{booktitle}{\emph{Proceedings of the SIGCHI Conference on Human
  Factors in Computing Systems}}. \bibinfo{pages}{1361--1370}.
\newblock


\bibitem[\protect\citeauthoryear{Leskovec, Kleinberg, and Faloutsos}{Leskovec
  et~al\mbox{.}}{2007}]%
        {leskovec2007graph}
\bibfield{author}{\bibinfo{person}{Jure Leskovec}, \bibinfo{person}{Jon
  Kleinberg}, {and} \bibinfo{person}{Christos Faloutsos}.}
  \bibinfo{year}{2007}\natexlab{}.
\newblock \showarticletitle{Graph evolution: Densification and shrinking
  diameters}.
\newblock \bibinfo{journal}{\emph{ACM Transactions on Knowledge Discovery from
  Data (TKDD)}} \bibinfo{volume}{1}, \bibinfo{number}{1}
  (\bibinfo{year}{2007}), \bibinfo{pages}{2--es}.
\newblock


\bibitem[\protect\citeauthoryear{Lockshin, Jarvis, d’Hauteville, and
  Perrouty}{Lockshin et~al\mbox{.}}{2006}]%
        {lockshin2006using}
\bibfield{author}{\bibinfo{person}{Larry Lockshin}, \bibinfo{person}{Wade
  Jarvis}, \bibinfo{person}{Fran{\c{c}}ois d’Hauteville}, {and}
  \bibinfo{person}{Jean-Philippe Perrouty}.} \bibinfo{year}{2006}\natexlab{}.
\newblock \showarticletitle{Using simulations from discrete choice experiments
  to measure consumer sensitivity to brand, region, price, and awards in wine
  choice}.
\newblock \bibinfo{journal}{\emph{Food Quality and Preference}}
  \bibinfo{volume}{17}, \bibinfo{number}{3-4} (\bibinfo{year}{2006}),
  \bibinfo{pages}{166--178}.
\newblock


\bibitem[\protect\citeauthoryear{Luce}{Luce}{1959}]%
        {luce1959individual}
\bibfield{author}{\bibinfo{person}{R~Duncan Luce}.}
  \bibinfo{year}{1959}\natexlab{}.
\newblock \bibinfo{booktitle}{\emph{Individual choice behavior: A theoretical
  analysis}}.
\newblock \bibinfo{publisher}{Courier Corporation}.
\newblock


\bibitem[\protect\citeauthoryear{Maaten and Hinton}{Maaten and Hinton}{2008}]%
        {maaten2008visualizing}
\bibfield{author}{\bibinfo{person}{Laurens van~der Maaten} {and}
  \bibinfo{person}{Geoffrey Hinton}.} \bibinfo{year}{2008}\natexlab{}.
\newblock \showarticletitle{Visualizing data using {t-SNE}}.
\newblock \bibinfo{journal}{\emph{Journal of Machine Learning Research}}
  \bibinfo{volume}{9}, \bibinfo{number}{Nov} (\bibinfo{year}{2008}),
  \bibinfo{pages}{2579--2605}.
\newblock


\bibitem[\protect\citeauthoryear{Marschak}{Marschak}{1960}]%
        {marschak1960rum}
\bibfield{author}{\bibinfo{person}{Jacob Marschak}.}
  \bibinfo{year}{1960}\natexlab{}.
\newblock \showarticletitle{Binary choice constraints on random utility
  indicators}. In \bibinfo{booktitle}{\emph{Stanford Symposium on Mathematical
  Methods in the Social Sciences}},
  \bibfield{editor}{\bibinfo{person}{K.~Arrow}} (Ed.).
  \bibinfo{pages}{312--329}.
\newblock


\bibitem[\protect\citeauthoryear{Masatlioglu, Nakajima, and Ozbay}{Masatlioglu
  et~al\mbox{.}}{2012}]%
        {masatlioglu2012revealed}
\bibfield{author}{\bibinfo{person}{Yusufcan Masatlioglu},
  \bibinfo{person}{Daisuke Nakajima}, {and} \bibinfo{person}{Erkut~Y Ozbay}.}
  \bibinfo{year}{2012}\natexlab{}.
\newblock \showarticletitle{Revealed attention}.
\newblock \bibinfo{journal}{\emph{American Economic Review}}
  \bibinfo{volume}{102}, \bibinfo{number}{5} (\bibinfo{year}{2012}),
  \bibinfo{pages}{2183--2205}.
\newblock


\bibitem[\protect\citeauthoryear{Maystre and Grossglauser}{Maystre and
  Grossglauser}{2015}]%
        {maystre2015fast}
\bibfield{author}{\bibinfo{person}{Lucas Maystre} {and}
  \bibinfo{person}{Matthias Grossglauser}.} \bibinfo{year}{2015}\natexlab{}.
\newblock \showarticletitle{Fast and accurate inference of Plackett--Luce
  models}. In \bibinfo{booktitle}{\emph{Advances in neural information
  processing systems}}. \bibinfo{pages}{172--180}.
\newblock


\bibitem[\protect\citeauthoryear{McFadden}{McFadden}{1973}]%
        {mcfadden1973conditional}
\bibfield{author}{\bibinfo{person}{Daniel McFadden}.}
  \bibinfo{year}{1973}\natexlab{}.
\newblock \showarticletitle{Conditional logit analysis of qualitative choice
  behavior}.
\newblock  (\bibinfo{year}{1973}).
\newblock


\bibitem[\protect\citeauthoryear{McFadden and Train}{McFadden and
  Train}{2000}]%
        {mcfadden2000mixed}
\bibfield{author}{\bibinfo{person}{Daniel McFadden} {and}
  \bibinfo{person}{Kenneth Train}.} \bibinfo{year}{2000}\natexlab{}.
\newblock \showarticletitle{Mixed {MNL} models for discrete response}.
\newblock \bibinfo{journal}{\emph{Journal of Applied Econometrics}}
  \bibinfo{volume}{15}, \bibinfo{number}{5} (\bibinfo{year}{2000}),
  \bibinfo{pages}{447--470}.
\newblock


\bibitem[\protect\citeauthoryear{McFadden, Tye, and Train}{McFadden
  et~al\mbox{.}}{1977}]%
        {mcfadden1977application}
\bibfield{author}{\bibinfo{person}{Daniel McFadden}, \bibinfo{person}{William~B
  Tye}, {and} \bibinfo{person}{Kenneth Train}.}
  \bibinfo{year}{1977}\natexlab{}.
\newblock \bibinfo{booktitle}{\emph{An application of diagnostic tests for the
  independence from irrelevant alternatives property of the multinomial logit
  model}}.
\newblock \bibinfo{publisher}{Institute of Transportation Studies, University
  of California Berkeley, CA}.
\newblock


\bibitem[\protect\citeauthoryear{Moody, Brynildsen, Osgood, Feinberg, and
  Gest}{Moody et~al\mbox{.}}{2011}]%
        {moody2011popularity}
\bibfield{author}{\bibinfo{person}{James Moody}, \bibinfo{person}{Wendy~D
  Brynildsen}, \bibinfo{person}{D~Wayne Osgood}, \bibinfo{person}{Mark~E
  Feinberg}, {and} \bibinfo{person}{Scott Gest}.}
  \bibinfo{year}{2011}\natexlab{}.
\newblock \showarticletitle{Popularity trajectories and substance use in early
  adolescence}.
\newblock \bibinfo{journal}{\emph{Social Networks}} \bibinfo{volume}{33},
  \bibinfo{number}{2} (\bibinfo{year}{2011}), \bibinfo{pages}{101--112}.
\newblock


\bibitem[\protect\citeauthoryear{Natenzon}{Natenzon}{2019}]%
        {natenzon2019random}
\bibfield{author}{\bibinfo{person}{Paulo Natenzon}.}
  \bibinfo{year}{2019}\natexlab{}.
\newblock \showarticletitle{Random choice and learning}.
\newblock \bibinfo{journal}{\emph{Journal of Political Economy}}
  \bibinfo{volume}{127}, \bibinfo{number}{1} (\bibinfo{year}{2019}),
  \bibinfo{pages}{419--457}.
\newblock


\bibitem[\protect\citeauthoryear{Oliphant}{Oliphant}{2006}]%
        {oliphant2006guide}
\bibfield{author}{\bibinfo{person}{Travis~E Oliphant}.}
  \bibinfo{year}{2006}\natexlab{}.
\newblock \bibinfo{booktitle}{\emph{A guide to NumPy}}.
  Vol.~\bibinfo{volume}{1}.
\newblock \bibinfo{publisher}{Trelgol Publishing USA}.
\newblock


\bibitem[\protect\citeauthoryear{Overgoor, Benson, and Ugander}{Overgoor
  et~al\mbox{.}}{2019}]%
        {overgoor2019choosing}
\bibfield{author}{\bibinfo{person}{Jan Overgoor}, \bibinfo{person}{Austin
  Benson}, {and} \bibinfo{person}{Johan Ugander}.}
  \bibinfo{year}{2019}\natexlab{}.
\newblock \showarticletitle{Choosing to grow a graph: Modeling network
  formation as discrete choice}. In \bibinfo{booktitle}{\emph{The World Wide
  Web Conference}}. \bibinfo{pages}{1409--1420}.
\newblock


\bibitem[\protect\citeauthoryear{Overgoor, Pakapol~Supaniratisai, and
  Ugander}{Overgoor et~al\mbox{.}}{2020}]%
        {overgoor2020scaling}
\bibfield{author}{\bibinfo{person}{Jan Overgoor}, \bibinfo{person}{George
  Pakapol~Supaniratisai}, {and} \bibinfo{person}{Johan Ugander}.}
  \bibinfo{year}{2020}\natexlab{}.
\newblock \showarticletitle{Scaling Choice Models of Relational Social Data}.
  In \bibinfo{booktitle}{\emph{Proceedings of the 26th ACM SIGKDD International
  Conference on Knowledge Discovery \& Data Mining}}.
  \bibinfo{pages}{1990--1998}.
\newblock


\bibitem[\protect\citeauthoryear{Panzarasa, Opsahl, and Carley}{Panzarasa
  et~al\mbox{.}}{2009}]%
        {panzarasa2009patterns}
\bibfield{author}{\bibinfo{person}{Pietro Panzarasa}, \bibinfo{person}{Tore
  Opsahl}, {and} \bibinfo{person}{Kathleen~M Carley}.}
  \bibinfo{year}{2009}\natexlab{}.
\newblock \showarticletitle{Patterns and dynamics of users' behavior and
  interaction: Network analysis of an online community}.
\newblock \bibinfo{journal}{\emph{Journal of the American Society for
  Information Science and Technology}} \bibinfo{volume}{60},
  \bibinfo{number}{5} (\bibinfo{year}{2009}), \bibinfo{pages}{911--932}.
\newblock


\bibitem[\protect\citeauthoryear{Paranjape, Benson, and Leskovec}{Paranjape
  et~al\mbox{.}}{2017}]%
        {paranjape2017motifs}
\bibfield{author}{\bibinfo{person}{Ashwin Paranjape}, \bibinfo{person}{Austin~R
  Benson}, {and} \bibinfo{person}{Jure Leskovec}.}
  \bibinfo{year}{2017}\natexlab{}.
\newblock \showarticletitle{Motifs in temporal networks}. In
  \bibinfo{booktitle}{\emph{Proceedings of the Tenth ACM International
  Conference on Web Search and Data Mining}}. \bibinfo{pages}{601--610}.
\newblock


\bibitem[\protect\citeauthoryear{Paszke, Gross, Massa, Lerer, Bradbury, Chanan,
  Killeen, Lin, Gimelshein, Antiga, et~al\mbox{.}}{Paszke
  et~al\mbox{.}}{2019}]%
        {paszke2019pytorch}
\bibfield{author}{\bibinfo{person}{Adam Paszke}, \bibinfo{person}{Sam Gross},
  \bibinfo{person}{Francisco Massa}, \bibinfo{person}{Adam Lerer},
  \bibinfo{person}{James Bradbury}, \bibinfo{person}{Gregory Chanan},
  \bibinfo{person}{Trevor Killeen}, \bibinfo{person}{Zeming Lin},
  \bibinfo{person}{Natalia Gimelshein}, \bibinfo{person}{Luca Antiga},
  {et~al\mbox{.}}} \bibinfo{year}{2019}\natexlab{}.
\newblock \showarticletitle{Py{T}orch: An imperative style, high-performance
  deep learning library}. In \bibinfo{booktitle}{\emph{Advances in Neural
  Information Processing Systems}}. \bibinfo{pages}{8024--8035}.
\newblock


\bibitem[\protect\citeauthoryear{Pathak and Shi}{Pathak and Shi}{2020}]%
        {pathak2020well}
\bibfield{author}{\bibinfo{person}{Parag~A Pathak} {and} \bibinfo{person}{Peng
  Shi}.} \bibinfo{year}{2020}\natexlab{}.
\newblock \showarticletitle{How well do structural demand models work?
  Counterfactual predictions in school choice}.
\newblock \bibinfo{journal}{\emph{Journal of Econometrics}}
  (\bibinfo{year}{2020}).
\newblock


\bibitem[\protect\citeauthoryear{Pfannschmidt, Gupta, and
  H{\"u}llermeier}{Pfannschmidt et~al\mbox{.}}{2019}]%
        {pfannschmidt2019learning}
\bibfield{author}{\bibinfo{person}{Karlson Pfannschmidt},
  \bibinfo{person}{Pritha Gupta}, {and} \bibinfo{person}{Eyke
  H{\"u}llermeier}.} \bibinfo{year}{2019}\natexlab{}.
\newblock \showarticletitle{Learning Choice Functions: Concepts and
  Architectures}.
\newblock \bibinfo{journal}{\emph{arXiv preprint arXiv:1901.10860}}
  (\bibinfo{year}{2019}).
\newblock


\bibitem[\protect\citeauthoryear{Plackett}{Plackett}{1975}]%
        {plackett1975analysis}
\bibfield{author}{\bibinfo{person}{Robin~L Plackett}.}
  \bibinfo{year}{1975}\natexlab{}.
\newblock \showarticletitle{The analysis of permutations}.
\newblock \bibinfo{journal}{\emph{Journal of the Royal Statistical Society:
  Series C (Applied Statistics)}} \bibinfo{volume}{24}, \bibinfo{number}{2}
  (\bibinfo{year}{1975}), \bibinfo{pages}{193--202}.
\newblock


\bibitem[\protect\citeauthoryear{Prelec, Wernerfelt, and Zettelmeyer}{Prelec
  et~al\mbox{.}}{1997}]%
        {prelec1997role}
\bibfield{author}{\bibinfo{person}{Drazen Prelec}, \bibinfo{person}{Birger
  Wernerfelt}, {and} \bibinfo{person}{Florian Zettelmeyer}.}
  \bibinfo{year}{1997}\natexlab{}.
\newblock \showarticletitle{The role of inference in context effects: Inferring
  what you want from what is available}.
\newblock \bibinfo{journal}{\emph{Journal of Consumer Research}}
  \bibinfo{volume}{24}, \bibinfo{number}{1} (\bibinfo{year}{1997}),
  \bibinfo{pages}{118--125}.
\newblock


\bibitem[\protect\citeauthoryear{Rapoport}{Rapoport}{1953}]%
        {rapoport1953spread}
\bibfield{author}{\bibinfo{person}{Anatol Rapoport}.}
  \bibinfo{year}{1953}\natexlab{}.
\newblock \showarticletitle{Spread of information through a population with
  socio-structural bias: I. Assumption of transitivity}.
\newblock \bibinfo{journal}{\emph{The Bulletin of Mathematical Biophysics}}
  \bibinfo{volume}{15}, \bibinfo{number}{4} (\bibinfo{year}{1953}),
  \bibinfo{pages}{523--533}.
\newblock


\bibitem[\protect\citeauthoryear{Reddi, Kale, and Kumar}{Reddi
  et~al\mbox{.}}{2018}]%
        {reddi2018convergence}
\bibfield{author}{\bibinfo{person}{Sashank~J Reddi}, \bibinfo{person}{Satyen
  Kale}, {and} \bibinfo{person}{Sanjiv Kumar}.}
  \bibinfo{year}{2018}\natexlab{}.
\newblock \showarticletitle{On the convergence of {A}dam and beyond}. In
  \bibinfo{booktitle}{\emph{International Conference on Learning
  Representations}}.
\newblock


\bibitem[\protect\citeauthoryear{Roe, Busemeyer, and Townsend}{Roe
  et~al\mbox{.}}{2001}]%
        {roe2001multialternative}
\bibfield{author}{\bibinfo{person}{Robert~M Roe}, \bibinfo{person}{Jermone~R
  Busemeyer}, {and} \bibinfo{person}{James~T Townsend}.}
  \bibinfo{year}{2001}\natexlab{}.
\newblock \showarticletitle{Multialternative decision field theory: A dynamic
  connectionist model of decision making}.
\newblock \bibinfo{journal}{\emph{Psychological Review}} \bibinfo{volume}{108},
  \bibinfo{number}{2} (\bibinfo{year}{2001}), \bibinfo{pages}{370}.
\newblock


\bibitem[\protect\citeauthoryear{Rooderkerk, Van~Heerde, and
  Bijmolt}{Rooderkerk et~al\mbox{.}}{2011}]%
        {rooderkerk2011incorporating}
\bibfield{author}{\bibinfo{person}{Robert~P Rooderkerk},
  \bibinfo{person}{Harald~J Van~Heerde}, {and} \bibinfo{person}{Tammo~HA
  Bijmolt}.} \bibinfo{year}{2011}\natexlab{}.
\newblock \showarticletitle{Incorporating context effects into a choice model}.
\newblock \bibinfo{journal}{\emph{Journal of Marketing Research}}
  \bibinfo{volume}{48}, \bibinfo{number}{4} (\bibinfo{year}{2011}),
  \bibinfo{pages}{767--780}.
\newblock


\bibitem[\protect\citeauthoryear{Rosenfeld, Oshiba, and Singer}{Rosenfeld
  et~al\mbox{.}}{2020}]%
        {rosenfeld2020predicting}
\bibfield{author}{\bibinfo{person}{Nir Rosenfeld}, \bibinfo{person}{Kojin
  Oshiba}, {and} \bibinfo{person}{Yaron Singer}.}
  \bibinfo{year}{2020}\natexlab{}.
\newblock \showarticletitle{Predicting Choice with Set-Dependent Aggregation}.
  In \bibinfo{booktitle}{\emph{37th International Conference on Machine
  Learning}}.
\newblock


\bibitem[\protect\citeauthoryear{Roth}{Roth}{1934}]%
        {roth1934direct}
\bibfield{author}{\bibinfo{person}{William~E Roth}.}
  \bibinfo{year}{1934}\natexlab{}.
\newblock \showarticletitle{On direct product matrices}.
\newblock \bibinfo{journal}{\emph{Bull. Amer. Math. Soc.}}
  \bibinfo{volume}{40}, \bibinfo{number}{6} (\bibinfo{year}{1934}),
  \bibinfo{pages}{461--468}.
\newblock


\bibitem[\protect\citeauthoryear{Ruiz, Athey, and Blei}{Ruiz
  et~al\mbox{.}}{2020}]%
        {ruiz2020shopper}
\bibfield{author}{\bibinfo{person}{Francisco~JR Ruiz}, \bibinfo{person}{Susan
  Athey}, {and} \bibinfo{person}{David~M Blei}.}
  \bibinfo{year}{2020}\natexlab{}.
\newblock \showarticletitle{Shopper: A probabilistic model of consumer choice
  with substitutes and complements}.
\newblock \bibinfo{journal}{\emph{Annals of Applied Statistics}}
  \bibinfo{volume}{14}, \bibinfo{number}{1} (\bibinfo{year}{2020}),
  \bibinfo{pages}{1--27}.
\newblock


\bibitem[\protect\citeauthoryear{Seshadri, Peysakhovich, and Ugander}{Seshadri
  et~al\mbox{.}}{2019}]%
        {seshadri2019discovering}
\bibfield{author}{\bibinfo{person}{Arjun Seshadri}, \bibinfo{person}{Alex
  Peysakhovich}, {and} \bibinfo{person}{Johan Ugander}.}
  \bibinfo{year}{2019}\natexlab{}.
\newblock \showarticletitle{Discovering Context Effects from Raw Choice Data}.
  In \bibinfo{booktitle}{\emph{International Conference on Machine Learning}}.
  \bibinfo{pages}{5660--5669}.
\newblock


\bibitem[\protect\citeauthoryear{Seshadri and Ugander}{Seshadri and
  Ugander}{2019}]%
        {seshadri2019fundamental}
\bibfield{author}{\bibinfo{person}{Arjun Seshadri} {and} \bibinfo{person}{Johan
  Ugander}.} \bibinfo{year}{2019}\natexlab{}.
\newblock \showarticletitle{Fundamental Limits of Testing the Independence of
  Irrelevant Alternatives in Discrete Choice}. In
  \bibinfo{booktitle}{\emph{Proceedings of the 2019 ACM Conference on Economics
  and Computation}}. \bibinfo{pages}{65--66}.
\newblock


\bibitem[\protect\citeauthoryear{Shafir, Simonson, and Tversky}{Shafir
  et~al\mbox{.}}{1993}]%
        {shafir1993reason}
\bibfield{author}{\bibinfo{person}{Eldar Shafir}, \bibinfo{person}{Itamar
  Simonson}, {and} \bibinfo{person}{Amos Tversky}.}
  \bibinfo{year}{1993}\natexlab{}.
\newblock \showarticletitle{Reason-based choice}.
\newblock \bibinfo{journal}{\emph{Cognition}} \bibinfo{volume}{49},
  \bibinfo{number}{1-2} (\bibinfo{year}{1993}), \bibinfo{pages}{11--36}.
\newblock


\bibitem[\protect\citeauthoryear{Simonson}{Simonson}{1989}]%
        {simonson1989choice}
\bibfield{author}{\bibinfo{person}{Itamar Simonson}.}
  \bibinfo{year}{1989}\natexlab{}.
\newblock \showarticletitle{Choice based on reasons: The case of attraction and
  compromise effects}.
\newblock \bibinfo{journal}{\emph{Journal of Consumer Research}}
  \bibinfo{volume}{16}, \bibinfo{number}{2} (\bibinfo{year}{1989}),
  \bibinfo{pages}{158--174}.
\newblock


\bibitem[\protect\citeauthoryear{Simonson and Tversky}{Simonson and
  Tversky}{1992}]%
        {simonson1992choice}
\bibfield{author}{\bibinfo{person}{Itamar Simonson} {and} \bibinfo{person}{Amos
  Tversky}.} \bibinfo{year}{1992}\natexlab{}.
\newblock \showarticletitle{Choice in context: Tradeoff contrast and
  extremeness aversion}.
\newblock \bibinfo{journal}{\emph{Journal of Marketing Research}}
  \bibinfo{volume}{29}, \bibinfo{number}{3} (\bibinfo{year}{1992}),
  \bibinfo{pages}{281--295}.
\newblock


\bibitem[\protect\citeauthoryear{Small and Hsiao}{Small and Hsiao}{1985}]%
        {small1985multinomial}
\bibfield{author}{\bibinfo{person}{Kenneth~A Small} {and}
  \bibinfo{person}{Cheng Hsiao}.} \bibinfo{year}{1985}\natexlab{}.
\newblock \showarticletitle{Multinomial logit specification tests}.
\newblock \bibinfo{journal}{\emph{International economic review}}
  (\bibinfo{year}{1985}), \bibinfo{pages}{619--627}.
\newblock


\bibitem[\protect\citeauthoryear{Taylor and Tibshirani}{Taylor and
  Tibshirani}{2018}]%
        {taylor2018post}
\bibfield{author}{\bibinfo{person}{Jonathan Taylor} {and}
  \bibinfo{person}{Robert Tibshirani}.} \bibinfo{year}{2018}\natexlab{}.
\newblock \showarticletitle{Post-selection inference for $\ell_1$-penalized
  likelihood models}.
\newblock \bibinfo{journal}{\emph{Canadian Journal of Statistics}}
  \bibinfo{volume}{46}, \bibinfo{number}{1} (\bibinfo{year}{2018}),
  \bibinfo{pages}{41--61}.
\newblock


\bibitem[\protect\citeauthoryear{Tomlinson and Benson}{Tomlinson and
  Benson}{2020}]%
        {tomlinson2020choice}
\bibfield{author}{\bibinfo{person}{Kiran Tomlinson} {and}
  \bibinfo{person}{Austin~R Benson}.} \bibinfo{year}{2020}\natexlab{}.
\newblock \showarticletitle{Choice Set Optimization Under Discrete Choice
  Models of Group Decisions}. In \bibinfo{booktitle}{\emph{Proceedings of the
  37th International Coference on International Conference on Machine
  Learning}}.
\newblock


\bibitem[\protect\citeauthoryear{Train}{Train}{2009}]%
        {train2009discrete}
\bibfield{author}{\bibinfo{person}{Kenneth~E Train}.}
  \bibinfo{year}{2009}\natexlab{}.
\newblock \bibinfo{booktitle}{\emph{Discrete choice methods with simulation}}.
\newblock \bibinfo{publisher}{Cambridge University Press}.
\newblock


\bibitem[\protect\citeauthoryear{Trueblood, Brown, Heathcote, and
  Busemeyer}{Trueblood et~al\mbox{.}}{2013}]%
        {trueblood2013context}
\bibfield{author}{\bibinfo{person}{Jennifer~S Trueblood},
  \bibinfo{person}{Scott~D Brown}, \bibinfo{person}{Andrew Heathcote}, {and}
  \bibinfo{person}{Jerome~R Busemeyer}.} \bibinfo{year}{2013}\natexlab{}.
\newblock \showarticletitle{Not just for consumers: Context effects are
  fundamental to decision making}.
\newblock \bibinfo{journal}{\emph{Psychological Science}} \bibinfo{volume}{24},
  \bibinfo{number}{6} (\bibinfo{year}{2013}), \bibinfo{pages}{901--908}.
\newblock


\bibitem[\protect\citeauthoryear{Turner, Schley, Muller, and Tsetsos}{Turner
  et~al\mbox{.}}{2018}]%
        {turner2018competing}
\bibfield{author}{\bibinfo{person}{Brandon~M Turner}, \bibinfo{person}{Dan~R
  Schley}, \bibinfo{person}{Carly Muller}, {and} \bibinfo{person}{Konstantinos
  Tsetsos}.} \bibinfo{year}{2018}\natexlab{}.
\newblock \showarticletitle{Competing theories of multialternative,
  multiattribute preferential choice.}
\newblock \bibinfo{journal}{\emph{Psychological Review}} \bibinfo{volume}{125},
  \bibinfo{number}{3} (\bibinfo{year}{2018}), \bibinfo{pages}{329}.
\newblock


\bibitem[\protect\citeauthoryear{Tversky}{Tversky}{1972}]%
        {tversky1972elimination}
\bibfield{author}{\bibinfo{person}{Amos Tversky}.}
  \bibinfo{year}{1972}\natexlab{}.
\newblock \showarticletitle{Elimination by aspects: A theory of choice.}
\newblock \bibinfo{journal}{\emph{Psychological Review}} \bibinfo{volume}{79},
  \bibinfo{number}{4} (\bibinfo{year}{1972}), \bibinfo{pages}{281}.
\newblock


\bibitem[\protect\citeauthoryear{Tversky and Simonson}{Tversky and
  Simonson}{1993}]%
        {tversky1993context}
\bibfield{author}{\bibinfo{person}{Amos Tversky} {and} \bibinfo{person}{Itamar
  Simonson}.} \bibinfo{year}{1993}\natexlab{}.
\newblock \showarticletitle{Context-dependent preferences}.
\newblock \bibinfo{journal}{\emph{Management Science}} \bibinfo{volume}{39},
  \bibinfo{number}{10} (\bibinfo{year}{1993}), \bibinfo{pages}{1179--1189}.
\newblock


\bibitem[\protect\citeauthoryear{Usher and McClelland}{Usher and
  McClelland}{2004}]%
        {usher2004loss}
\bibfield{author}{\bibinfo{person}{Marius Usher} {and} \bibinfo{person}{James~L
  McClelland}.} \bibinfo{year}{2004}\natexlab{}.
\newblock \showarticletitle{Loss aversion and inhibition in dynamical models of
  multialternative choice.}
\newblock \bibinfo{journal}{\emph{Psychological Review}} \bibinfo{volume}{111},
  \bibinfo{number}{3} (\bibinfo{year}{2004}), \bibinfo{pages}{757}.
\newblock


\bibitem[\protect\citeauthoryear{Van Der~Walt, Colbert, and Varoquaux}{Van
  Der~Walt et~al\mbox{.}}{2011}]%
        {van2011numpy}
\bibfield{author}{\bibinfo{person}{Stefan Van Der~Walt},
  \bibinfo{person}{S~Chris Colbert}, {and} \bibinfo{person}{Gael Varoquaux}.}
  \bibinfo{year}{2011}\natexlab{}.
\newblock \showarticletitle{The NumPy array: a structure for efficient
  numerical computation}.
\newblock \bibinfo{journal}{\emph{Computing in Science \& Engineering}}
  \bibinfo{volume}{13}, \bibinfo{number}{2} (\bibinfo{year}{2011}),
  \bibinfo{pages}{22}.
\newblock


\bibitem[\protect\citeauthoryear{V{\'a}zquez}{V{\'a}zquez}{2003}]%
        {vazquez2003growing}
\bibfield{author}{\bibinfo{person}{Alexei V{\'a}zquez}.}
  \bibinfo{year}{2003}\natexlab{}.
\newblock \showarticletitle{Growing network with local rules: Preferential
  attachment, clustering hierarchy, and degree correlations}.
\newblock \bibinfo{journal}{\emph{Physical Review E}} \bibinfo{volume}{67},
  \bibinfo{number}{5} (\bibinfo{year}{2003}), \bibinfo{pages}{056104}.
\newblock


\bibitem[\protect\citeauthoryear{Virtanen, Gommers, Oliphant, Haberland, Reddy,
  Cournapeau, Burovski, Peterson, Weckesser, Bright, et~al\mbox{.}}{Virtanen
  et~al\mbox{.}}{2020}]%
        {virtanen2020scipy}
\bibfield{author}{\bibinfo{person}{Pauli Virtanen}, \bibinfo{person}{Ralf
  Gommers}, \bibinfo{person}{Travis~E Oliphant}, \bibinfo{person}{Matt
  Haberland}, \bibinfo{person}{Tyler Reddy}, \bibinfo{person}{David
  Cournapeau}, \bibinfo{person}{Evgeni Burovski}, \bibinfo{person}{Pearu
  Peterson}, \bibinfo{person}{Warren Weckesser}, \bibinfo{person}{Jonathan
  Bright}, {et~al\mbox{.}}} \bibinfo{year}{2020}\natexlab{}.
\newblock \showarticletitle{{SciPy} 1.0: fundamental algorithms for scientific
  computing in {Python}}.
\newblock \bibinfo{journal}{\emph{Nature Methods}} \bibinfo{volume}{17},
  \bibinfo{number}{3} (\bibinfo{year}{2020}), \bibinfo{pages}{261--272}.
\newblock


\bibitem[\protect\citeauthoryear{Viswanath, Mislove, Cha, and
  Gummadi}{Viswanath et~al\mbox{.}}{2009}]%
        {viswanath2009evolution}
\bibfield{author}{\bibinfo{person}{Bimal Viswanath}, \bibinfo{person}{Alan
  Mislove}, \bibinfo{person}{Meeyoung Cha}, {and} \bibinfo{person}{Krishna~P
  Gummadi}.} \bibinfo{year}{2009}\natexlab{}.
\newblock \showarticletitle{On the evolution of user interaction in
  {Facebook}}. In \bibinfo{booktitle}{\emph{Proceedings of the 2nd ACM Workshop
  on Online Social Networks}}. \bibinfo{pages}{37--42}.
\newblock


\bibitem[\protect\citeauthoryear{Watts and Strogatz}{Watts and
  Strogatz}{1998}]%
        {watts1998collective}
\bibfield{author}{\bibinfo{person}{Duncan~J Watts} {and}
  \bibinfo{person}{Steven~H Strogatz}.} \bibinfo{year}{1998}\natexlab{}.
\newblock \showarticletitle{Collective dynamics of ‘small-world’ networks}.
\newblock \bibinfo{journal}{\emph{Nature}} \bibinfo{volume}{393},
  \bibinfo{number}{6684} (\bibinfo{year}{1998}), \bibinfo{pages}{440--442}.
\newblock


\bibitem[\protect\citeauthoryear{Wilcoxon}{Wilcoxon}{1945}]%
        {wilcoxon1945individual}
\bibfield{author}{\bibinfo{person}{Frank Wilcoxon}.}
  \bibinfo{year}{1945}\natexlab{}.
\newblock \showarticletitle{Individual Comparisons by Ranking Methods}.
\newblock \bibinfo{journal}{\emph{Biometrics}} \bibinfo{volume}{1},
  \bibinfo{number}{6} (\bibinfo{year}{1945}), \bibinfo{pages}{80--83}.
\newblock


\bibitem[\protect\citeauthoryear{Wilks}{Wilks}{1938}]%
        {wilks1938large}
\bibfield{author}{\bibinfo{person}{Samuel~S Wilks}.}
  \bibinfo{year}{1938}\natexlab{}.
\newblock \showarticletitle{The large-sample distribution of the likelihood
  ratio for testing composite hypotheses}.
\newblock \bibinfo{journal}{\emph{The Annals of Mathematical Statistics}}
  \bibinfo{volume}{9}, \bibinfo{number}{1} (\bibinfo{year}{1938}),
  \bibinfo{pages}{60--62}.
\newblock


\bibitem[\protect\citeauthoryear{Wu, Zhou, Xiao, Kurths, and Schellnhuber}{Wu
  et~al\mbox{.}}{2010}]%
        {wu2010evidence}
\bibfield{author}{\bibinfo{person}{Ye Wu}, \bibinfo{person}{Changsong Zhou},
  \bibinfo{person}{Jinghua Xiao}, \bibinfo{person}{J{\"u}rgen Kurths}, {and}
  \bibinfo{person}{Hans~Joachim Schellnhuber}.}
  \bibinfo{year}{2010}\natexlab{}.
\newblock \showarticletitle{Evidence for a bimodal distribution in human
  communication}.
\newblock \bibinfo{journal}{\emph{Proceedings of the National Academy of
  Sciences}} \bibinfo{volume}{107}, \bibinfo{number}{44}
  (\bibinfo{year}{2010}), \bibinfo{pages}{18803--18808}.
\newblock


\bibitem[\protect\citeauthoryear{Yang, Long, Smola, Zha, and Zheng}{Yang
  et~al\mbox{.}}{2011}]%
        {yang2011collaborative}
\bibfield{author}{\bibinfo{person}{Shuang-Hong Yang}, \bibinfo{person}{Bo
  Long}, \bibinfo{person}{Alexander~J Smola}, \bibinfo{person}{Hongyuan Zha},
  {and} \bibinfo{person}{Zhaohui Zheng}.} \bibinfo{year}{2011}\natexlab{}.
\newblock \showarticletitle{Collaborative competitive filtering: learning
  recommender using context of user choice}. In
  \bibinfo{booktitle}{\emph{Proceedings of the 34th international ACM SIGIR
  conference on Research and development in Information Retrieval}}.
  \bibinfo{pages}{295--304}.
\newblock


\bibitem[\protect\citeauthoryear{Yin, Benson, and Leskovec}{Yin
  et~al\mbox{.}}{2019}]%
        {yin2019local}
\bibfield{author}{\bibinfo{person}{Hao Yin}, \bibinfo{person}{Austin~R Benson},
  {and} \bibinfo{person}{Jure Leskovec}.} \bibinfo{year}{2019}\natexlab{}.
\newblock \showarticletitle{The local closure coefficient: A new perspective on
  network clustering}. In \bibinfo{booktitle}{\emph{Proceedings of the Twelfth
  ACM International Conference on Web Search and Data Mining}}.
  \bibinfo{pages}{303--311}.
\newblock


\bibitem[\protect\citeauthoryear{Yin, Benson, Leskovec, and Gleich}{Yin
  et~al\mbox{.}}{2017}]%
        {yin2017local}
\bibfield{author}{\bibinfo{person}{Hao Yin}, \bibinfo{person}{Austin~R Benson},
  \bibinfo{person}{Jure Leskovec}, {and} \bibinfo{person}{David~F Gleich}.}
  \bibinfo{year}{2017}\natexlab{}.
\newblock \showarticletitle{Local higher-order graph clustering}. In
  \bibinfo{booktitle}{\emph{Proceedings of the 23rd ACM SIGKDD International
  Conference on Knowledge Discovery and Data Mining}}.
  \bibinfo{pages}{555--564}.
\newblock


\bibitem[\protect\citeauthoryear{Yin, Benson, and Ugander}{Yin
  et~al\mbox{.}}{2020}]%
        {yin2020measuring}
\bibfield{author}{\bibinfo{person}{Hao Yin}, \bibinfo{person}{Austin~R Benson},
  {and} \bibinfo{person}{Johan Ugander}.} \bibinfo{year}{2020}\natexlab{}.
\newblock \showarticletitle{Measuring directed triadic closure with closure
  coefficients}.
\newblock \bibinfo{journal}{\emph{Network Science}} (\bibinfo{year}{2020}).
\newblock


\bibitem[\protect\citeauthoryear{Zhao, Piech, and Xia}{Zhao
  et~al\mbox{.}}{2016}]%
        {zhao2016learning}
\bibfield{author}{\bibinfo{person}{Zhibing Zhao}, \bibinfo{person}{Peter
  Piech}, {and} \bibinfo{person}{Lirong Xia}.} \bibinfo{year}{2016}\natexlab{}.
\newblock \showarticletitle{Learning mixtures of {Plackett-Luce} models}. In
  \bibinfo{booktitle}{\emph{International Conference on Machine Learning}}.
  \bibinfo{pages}{2906--2914}.
\newblock


\end{thebibliography}

\clearpage
\appendix

\section{Dataset Appendix}\label{sec:dataset_details}
Our preprocessed and documented versions of every dataset are available for download at \url{https://drive.google.com/file/d/1QAr-tCZ4OWRcrsQ0tHYwmTate5ED21PI/view}. Below, we describe the original sources for each dataset, along with feature descriptions for the general choice datasets and (where applicable) our preprocessing steps.

\subsection{General Choice Datasets}
\xhdr{\textsc{district}, \textsc{district-smart}} The \textsc{district} dataset was introduced in \cite{kaufman2017measure} and is scheduled to be uploaded to the Harvard Dataverse (\url{https://dataverse.harvard.edu}). At the time of writing, it is not yet available through the Dataverse. We obtained the dataset from Amanda Bower with permission from Aaron Kaufman. We used the preprocessed version that was used in \cite{bower2020salient}. The \textsc{district} dataset has 27 features that are all geometric properties of district shapes: \emph{points}, \emph{var\textunderscore{}xcoord}, \emph{var\textunderscore{}ycoord}, \emph{varcoord\textunderscore{}ratio}, \emph{avgline}, \emph{varline}, \emph{boyce}, \emph{lenwid}, \emph{jagged}, \emph{parts}, \emph{hull}, \emph{bbox}, \emph{reock}, \emph{polsby}, \emph{schwartzberg}, \emph{circle\textunderscore{}area}, \emph{circle\textunderscore{}perim}, \emph{hull\textunderscore{}area}, \emph{hull\textunderscore{}perim}, \emph{orig\textunderscore{}area}, \emph{district\textunderscore{}perim}, \emph{corners}, \emph{xvar}, \emph{yvar}, \emph{cornervar\textunderscore{}ratio}, \emph{sym\textunderscore{}x}, and \emph{sym\textunderscore{}y}. Detailed descriptions of these features can be found in~\cite[Appendix A]{kaufman2017measure}. We only use pairwise comparison data, not the ranking data that was also collected. The \textsc{district-smart} dataset uses only the six features \emph{hull}, \emph{bbox}, \emph{reock}, \emph{polsby}, \emph{sym\textunderscore{}x}, and \emph{sym\textunderscore{}y}, identified by \citeauthor{kaufman2017measure} to be ``good predictors of compactness''~\cite[Appendix F]{kaufman2017measure}.

\xhdr{\textsc{expedia}} Raw data downloaded from \url{https://www.kaggle.com/c/expedia-personalized-sort/overview}. We use the file \texttt{train.csv}. We select only the \texttt{srch\textunderscore{}id}s that result in a booking (\texttt{booking\textunderscore{}bool} $= 1$). We use the five features \texttt{prop\textunderscore{}starrating}, \texttt{prop\textunderscore{}review\textunderscore{}score}, \texttt{prop\textunderscore{}location\textunderscore{}score1}, \texttt{price\textunderscore{}usd}, and \texttt{promotion\textunderscore{}flag}, which we call \emph{star rating}, \emph{review score}, \emph{location score}, \emph{price}, and \emph{on promotion}, respectively.

\xhdr{\textsc{sushi}} Raw data downloaded from \url{http://www.kamishima.net/sushi/} as \texttt{sushi3-2016.zip}. We use the file \texttt{sushi3b.5000.10.order} and treat the top-ranked sushi as the selection from the set of 10 options. We use the features labeled \texttt{style}, \texttt{major group}, \texttt{the heaviness/oiliness in taste}, \texttt{how frequently the user eats the SUSHI}, \texttt{normalized price}, \texttt{how frequently the SUSHI is sold in sushi shop}. We call these features \emph{is maki}, \emph{is seafood}, \emph{oiliness}, \emph{popularity}, \emph{price}, and \emph{availability}. 
Additional details about these features are described by \citeauthor{kamishima2003nantonac}~\cite{kamishima2003nantonac}.

\xhdr{\textsc{car-a}, \textsc{car-b}} Raw data downloaded from \url{http://users.cecs.anu.edu.au/~u4940058/CarPreferences.html}. We call the first and second experiments \textsc{car-a} and \textsc{car-b}, respectively. In \textsc{car-a}, the four features are \emph{is SUV}, \emph{is manual}, \emph{engine capacity}, and \emph{is hybrid}. In \textsc{car-b}, the seven features are \emph{is sedan}, \emph{is SUV}, \emph{is hatchback}, \emph{is manual}, \emph{engine capacity}, \emph{is hybrid}, and \emph{is all-wheel-drive}.

\xhdr{\textsc{car-alt}} Raw data downloaded from \url{http://qed.econ.queensu.ca/jae/2000-v15.5/mcfadden-train/}. The 21 features are \emph{price divided by ln(income)}, \emph{range}, \emph{acceleration}, \emph{top speed}, \emph{pollution}, \emph{size}, \emph{"big enough"}, \emph{luggage space}, \emph{operating cost}, \emph{station availability}, \emph{SUV}, \emph{sports car}, \emph{station wagon}, \emph{truck}, \emph{van}, \emph{EV}, \emph{commute $< 5 \times$ EV}, \emph{college $\times$ EV}, \emph{CNG}, \emph{methanol}, \emph{college $\times$ methanol}. 
\citeauthor{mcfadden2000mixed}~\cite{mcfadden2000mixed} describe these features in detail and \citeauthor{brownstone1996transactions}~\cite{brownstone1996transactions} describe how the survey was conducted.

\subsection{Network Datasets}
See \Cref{sec:network_datasets} for details on how we turn timestamped edge data into triadic closure choices and the features we use. Those steps are shared across all of the following datasets.

\xhdr{\textsc{synthetic-mnl}} We used $\theta = [2, 1, 3, 1, 3, 5]$ and Poisson rate 5 to generate the network (see main text for details).

\xhdr{\textsc{synthetic-lcl}} We used $\theta = [2, 1, 3, 1, 3, 5]$,
\[A = \begin{bmatrix}
0 & 0 & 0 & 0 & 0 & 100\\
0 & 0 & 5 & 0 & 0 & 0\\
0 & -5 & 0 & 0 & 0 & 0\\
-5 & 0 & 0 & 0 & 0 & 0\\
0 & 0 & 0 & 0 & 0 & 0\\
0 & 0 & 5 & 0 & 0 & 0
\end{bmatrix},\]
 and Poisson rate 5 to generate the network (see main text for details).

\xhdr{\textsc{email-enron}} Raw data downloaded from \url{http://www.cs.cornell.edu/~arb/data/pvc-email-Enron/}. 

\xhdr{\textsc{email-eu}} Raw data downloaded from \url{https://snap.stanford.edu/data/email-Eu-core-temporal.html}. 

\xhdr{\textsc{email-w3c}} Raw data downloaded from \url{http://www.cs.cornell.edu/~arb/data/pvc-email-W3C/}. 

\xhdr{\textsc{sms-a}, \textsc{sms-b}, \textsc{sms-c}} Raw data downloaded from \url{https://www.pnas.org/content/107/44/18803/tab-figures-data}. We call ``Dataset S1,'' ``Dataset S2,'' and ``Dataset S3'' \textsc{sms-a}, \textsc{sms-b}, and \textsc{sms-c}, respectively.

\xhdr{\textsc{wiki-talk}} Raw data downloaded from \url{https://snap.stanford.edu/data/wiki-talk-temporal.html}. We reduce its size by only considering edges in 2004.

\xhdr{\textsc{reddit-hyperlink}} Raw data downloaded from \url{https://snap.stanford.edu/data/soc-RedditHyperlinks.html}. We consider only edges before 2015.

\xhdr{\textsc{bitcoin-alpha}} Raw data downloaded from \url{https://snap.stanford.edu/data/soc-sign-bitcoin-alpha.html}.

\xhdr{\textsc{bitcoin-otc}} Raw data downloaded from \url{https://snap.stanford.edu/data/soc-sign-bitcoin-otc.html}.

\xhdr{\textsc{facebook-wall}} Raw data downloaded from \url{http://konect.cc/networks/facebook-wosn-wall/}. 

\xhdr{\textsc{mathoverflow}} Raw data downloaded from \url{https://snap.stanford.edu/data/sx-mathoverflow.html}.

\xhdr{\textsc{college-msg}} Raw data downloaded from \url{https://snap.stanford.edu/data/CollegeMsg.html}.

\section{Empirical Analysis Appendix}\label{sec:empirical_appendix}

\subsection{LCL Identifiability in Datasets}\label{sec:dataset_lcl_identify}

We used the characterization in \Cref{thm:lcl_identify} to determine whether an LCL is uniquely identifiable from each of the datasets. We computed 
\begin{equation}\label{eq:dim_span}
	\dim\left(\Span \left\{
		\begin{bmatrix}
		x_C\\
		1
	\end{bmatrix} \otimes (x_i- x_C)
	\mid C \in \mathcal{C_D}, i \in C \right\}\right)
\end{equation}
for each dataset using \texttt{NumPy}'s \texttt{linalg.matrix\_rank}~\cite{oliphant2006guide,van2011numpy}. If the matrix formed by these vectors is full-rank ($d^2 + d$), then an LCL is identifiable by \Cref{thm:lcl_identify}. \Cref{tbl:identifiable} shows the LCL is identifiable in all network datasets and in \textsc{district-smart}, \textsc{expedia}, and \textsc{sushi}. Additionally, we checked the necessary condition of \Cref{prop:lcl_non_identifiable} ($d+1$ choice sets with affinely independent mean features) and found it to be satisfied in all datasets except \textsc{car-b}.

\begin{table}[t]
\caption{LCL identifiability in real-world datasets}
\label{tbl:identifiable}
\begin{tabular}{lrr}
\toprule
\textbf{Dataset} & \textbf{\Cref{thm:lcl_identify}}$^*$ & \textbf{\Cref{prop:lcl_non_identifiable}$^\dagger$}\\
\midrule
\textsc{district} & 368/756 & \textbf{28/28}\\
\textsc{district-smart} & \textbf{42/42} & \textbf{7/7}\\
\textsc{expedia} & \textbf{30/30} & \textbf{6/6}\\
\textsc{sushi} & \textbf{42/42} & \textbf{7/7}\\
\textsc{car-a} & 16/20 & \textbf{5/5}\\
\textsc{car-b} & 34/56 & 7/8\\
\textsc{car-alt} & 449/462 & \textbf{22/22}\\
\midrule
\textsc{synthetic-mnl} & \textbf{42/42} & \textbf{7/7}\\
\textsc{synthetic-lcl} & \textbf{42/42} & \textbf{7/7}\\
\textsc{wiki-talk} & \textbf{42/42} & \textbf{7/7}\\
\textsc{reddit-hyperlink} & \textbf{42/42} & \textbf{7/7}\\
\textsc{bitcoin-alpha} & \textbf{42/42} & \textbf{7/7}\\
\textsc{bitcoin-otc} & \textbf{42/42} & \textbf{7/7}\\
\textsc{sms-a} & \textbf{42/42} & \textbf{7/7}\\
\textsc{sms-b} & \textbf{42/42} & \textbf{7/7}\\
\textsc{sms-c} & \textbf{42/42} & \textbf{7/7}\\
\textsc{email-enron} & \textbf{42/42} & \textbf{7/7}\\
\textsc{email-eu} & \textbf{42/42} & \textbf{7/7}\\
\textsc{email-w3c} & \textbf{42/42} & \textbf{7/7}\\
\textsc{facebook-wall} & \textbf{42/42} & \textbf{7/7}\\
\textsc{college-msg} & \textbf{42/42} & \textbf{7/7}\\
\textsc{mathoverflow} & \textbf{42/42} & \textbf{7/7}\\
\midrule
\multicolumn{3}{l}{$^*$\footnotesize{Value of $\dim$ in \eqref{eq:dim_span} / $d^2 + d$}; bold if LCL is identifiable}\\
\multicolumn{3}{l}{$^\dagger$\footnotesize{Max \# of affinely ind.~choice sets / $d + 1$}; bold if necessary condition is satisfied}
\end{tabular}
\end{table}

\subsection{EM vs.~SGD Comparison}
\Cref{tbl:em_comparison} compares the NLL of the DLCL model learned by optimizing the likelihood with Adam and with the EM algorithm. 
For Adam, we used the hyperparameters found in the grid search specified in \Cref{sec:network_datasets}. For the EM algorithm, we iterated until the norm of the gradient of the NLL was less than $10^{-6}$ (with a 1-hour timeout, as for Adam). To optimize the convex function in the EM algorithm, we used \texttt{amsgrad} Adam with no weight decay. The number of iterations and the learning rate were selected by performing a grid search for each dataset over the possible iteration counts $5, 10, 50, 100$ and the learning rates $0.0005, 0.001, 0.005, 0.01, 0.05, 0.1$. The EM algorithm performs better on most of the datasets. Note that in \Cref{tbl:nll}, \Cref{tbl:prediction}, and \Cref{tbl:nll_full} we use the parameters learned by Adam.

\begin{table}[t]
\caption{Adam vs.~ EM algorithm for DLCL}
\label{tbl:em_comparison}
\begin{tabular}{lrr}
\toprule
\textbf{Dataset} & \textbf{Adam NLL} & \textbf{EM NLL}\\
\midrule
\textsc{district} & 3206 & \textbf{3041}\\
\textsc{district-smart} & 3303 & \textbf{3144}\\
\textsc{expedia} & 837569 & \textbf{805055}\\
\textsc{sushi} & 9764 & \textbf{9709}\\
\textsc{car-a} & 1692 & \textbf{1684}\\
\textsc{car-b} & 1284 & \textbf{1246}\\
\textsc{car-alt} & 7011 & \textbf{6369}\\
\midrule
\textsc{synthetic-mnl} & 210504 & \textbf{210462}\\
\textsc{synthetic-lcl} & 137937 & \textbf{137283}\\
\textsc{wiki-talk} & \textbf{95134} & 95985\\
\textsc{reddit-hyperlink} & \textbf{132473} & 132737\\
\textsc{bitcoin-alpha} & 18877 & \textbf{18401}\\
\textsc{bitcoin-otc} & 25348 & \textbf{25277}\\
\textsc{sms-a} & 8154 & \textbf{8112}\\
\textsc{sms-b} & \textbf{12975} & 12992\\
\textsc{sms-c} & 4871 & \textbf{4859}\\
\textsc{email-enron} & 69254 & \textbf{69249}\\
\textsc{email-eu} & 51431 & \textbf{51247}\\
\textsc{email-w3c} & 9758 & \textbf{9664}\\
\textsc{facebook-wall} & 116328 & \textbf{116098}\\
\textsc{college-msg} & 13712 & \textbf{13620}\\
\textsc{mathoverflow} & \textbf{435932} & 455103\\
\bottomrule
\end{tabular}
\end{table}

\begin{table*}[b]
\caption{Full version of negative log-likelihood table}
\label{tbl:nll_full}
\begin{tabular}{lrrrrrrrr}
\toprule
&\multirow{2}{*}{\textbf{MNL}} & \multirow{2}{*}{\textbf{LCL}} & \multicolumn{2}{c}{\textbf{Likelihood-ratio test}} & \multirow{2}{*}{\textbf{Mixed logit}} & \multirow{2}{*}{\textbf{DLCL}} & \multicolumn{2}{c}{\textbf{Likelihood-ratio test}}\\
\cmidrule{4-5}\cmidrule{8-9}
& & &$2(\ell_\text{MNL} - \ell_\text{LCL})$ & $p$-value & & & $2(\ell_\text{ML} - \ell_\text{DLCL})$ & $p$-value\\
\midrule
\textsc{district} & 3313 & \textbf{3130}\phantom{$^*$} & $367$ & $1.0$ & 3258 & 3206\phantom{$^{\dagger}$} & $105$ & $1.0$\\
\textsc{district-smart} & 3426 & \textbf{3278}$^*$ & $297$ & $< 10^{-16}$ & 3351 & 3303$^{\dagger}$ & $98$ & $1.2 \times 10^{-7}$\\
\textsc{expedia} & 839505 & 837649$^*$ & $3712$ & $< 10^{-16}$ & 839055 & \textbf{837569}$^{\dagger}$ & $2973$ & $< 10^{-16}$\\
\textsc{sushi} & 9821 & 9773$^*$ & $97$ & $1.6 \times 10^{-7}$ & 9793 & \textbf{9764}\phantom{$^{\dagger}$} & $59$ & $0.0089$\\
\textsc{car-a} & 1702 & 1694\phantom{$^*$} & $16$ & $0.44$ & 1696 & \textbf{1692}\phantom{$^{\dagger}$} & $9$ & $0.89$\\
\textsc{car-b} & 1305 & 1295\phantom{$^*$} & $20$ & $1.0$ & 1297 & \textbf{1284}\phantom{$^{\dagger}$} & $27$ & $1.0$\\
\textsc{car-alt} & 7393 & \textbf{6733}$^*$ & $1320$ & $< 10^{-16}$ & 7301 & 7011$^{\dagger}$ & $581$ & $7.6 \times 10^{-6}$\\
\midrule
\textsc{synthetic-mnl} & \textbf{210473} & 210486\phantom{$^*$} & $-25$ & $1.0$ & 210503 & 210504\phantom{$^{\dagger}$} & $0$ & $1.0$\\
\textsc{synthetic-lcl} & 140279 & \textbf{137232}$^*$ & $6095$ & $< 10^{-16}$ & 139539 & 137937$^{\dagger}$ & $3205$ & $< 10^{-16}$\\
\textsc{wiki-talk} & 99608 & 97748$^*$ & $3721$ & $< 10^{-16}$ & 95761 & \textbf{95134}$^{\dagger}$ & $1254$ & $< 10^{-16}$\\
\textsc{reddit-hyperlink} & 135108 & 132880$^*$ & $4457$ & $< 10^{-16}$ & 133766 & \textbf{132473}$^{\dagger}$ & $2587$ & $< 10^{-16}$\\
\textsc{bitcoin-alpha} & 19675 & 19190$^*$ & $971$ & $< 10^{-16}$ & 19093 & \textbf{18877}$^{\dagger}$ & $434$ & $< 10^{-16}$\\
\textsc{bitcoin-otc} & 26968 & 26101$^*$ & $1736$ & $< 10^{-16}$ & 25768 & \textbf{25348}$^{\dagger}$ & $842$ & $< 10^{-16}$\\
\textsc{sms-a} & 8252 & \textbf{8056}$^*$ & $393$ & $< 10^{-16}$ & 8239 & 8154$^{\dagger}$ & $171$ & $< 10^{-16}$\\
\textsc{sms-b} & 13153 & \textbf{12823}$^*$ & $660$ & $< 10^{-16}$ & 13147 & 12975$^{\dagger}$ & $345$ & $< 10^{-16}$\\
\textsc{sms-c} & 4988 & 4880$^*$ & $216$ & $< 10^{-16}$ & 4928 & \textbf{4871}$^{\dagger}$ & $116$ & $2.6 \times 10^{-10}$\\
\textsc{email-enron} & 73015 & 70061$^*$ & $5909$ & $< 10^{-16}$ & 71450 & \textbf{69254}$^{\dagger}$ & $4393$ & $< 10^{-16}$\\
\textsc{email-eu} & 53025 & 51822$^*$ & $2407$ & $< 10^{-16}$ & 51988 & \textbf{51431}$^{\dagger}$ & $1115$ & $< 10^{-16}$\\
\textsc{email-w3c} & 11012 & 10677$^*$ & $670$ & $< 10^{-16}$ & 9898 & \textbf{9758}$^{\dagger}$ & $280$ & $< 10^{-16}$\\
\textsc{facebook-wall} & 118208 & \textbf{116062}$^*$ & $4293$ & $< 10^{-16}$ & 117210 & 116328$^{\dagger}$ & $1766$ & $< 10^{-16}$\\
\textsc{college-msg} & 14575 & 14120$^*$ & $911$ & $< 10^{-16}$ & 13849 & \textbf{13712}$^{\dagger}$ & $276$ & $< 10^{-16}$\\
\textsc{mathoverflow} & 500537 & 479999$^*$ & $41077$ & $< 10^{-16}$ & 440482 & \textbf{435932}$^{\dagger}$ & $9101$ & $< 10^{-16}$\\
\midrule
\multicolumn{5}{l}{$^*$\footnotesize{Significant likelihood-ratio test vs.~MNL ($p<0.001$)}}\\
\multicolumn{5}{l}{$^{\dagger}$\footnotesize{Significant likelihood-ratio test vs.~mixed logit ($p< 0.001$)}}\\
\end{tabular}
\end{table*}

\end{document}